
\documentclass{article}

\usepackage[accepted]{mlsys2022}


\mlsystitlerunning{Federated Adversarial Domain Adaptation}

\usepackage{microtype}
\usepackage{graphicx}
\usepackage{subcaption}
\usepackage{booktabs} 

\usepackage{hyperref}




\mlsystitlerunning{Federated Adversarial Domain Adaptation}
\usepackage[utf8]{inputenc} 
\usepackage[T1]{fontenc}    
\usepackage{hyperref}       
\usepackage{url}            
\usepackage{booktabs}       
\usepackage{amsfonts}       
\usepackage{nicefrac}       
\usepackage{placeins}
\usepackage{microtype}      
\usepackage{amssymb}
\usepackage{xcolor}
\usepackage{framed}
\usepackage{booktabs}
\usepackage{bbm} 
\definecolor{shadecolor}{rgb}{1,0,0}
\usepackage{microtype}
\usepackage{wrapfig}

\usepackage{adjustbox}

\usepackage{amsthm}
\theoremstyle{plain}

\newtheorem*{rep@theorem}{\rep@title}
\newcommand{\newreptheorem}[2]{%
\newenvironment{rep#1}[1]{%
 \def\rep@title{#2 \ref{##1}}%
 \begin{rep@theorem}}%
 {\end{rep@theorem}}}
\makeatother
\newtheorem{theorem}{\protect\theoremname}
\providecommand{\theoremname}{Theorem}
\newreptheorem{theorem}{Theorem}
\newtheorem{assumption}{\protect\assname}
\providecommand{\assname}{Assumption}
\newreptheorem{assumption}{Assumption}

\providecommand{\corollaryname}{Corollary}
\newreptheorem{corollary}{Corollary}
\newtheorem{lemma}{\protect\lemmaname}
\providecommand{\lemmaname}{Lemma}
\newreptheorem{lemma}{Lemma}

\providecommand{\remarkname}{Remark}
\newreptheorem{remark}{Remark}

\providecommand{\definitionname}{Definition}
\newreptheorem{definition}{\definitionname}

\providecommand{\definitionname}{Conjecture}
\newreptheorem{conjecture}{\conjurnnam}

\providecommand{\definitionname}{Axiom}
\newreptheorem{axiom}{\axname}
\newtheorem{proposition}{Proposition}

\usepackage{cancel}

\usepackage{microtype}
\usepackage{graphicx}
\usepackage{booktabs} 

\usepackage{times}
\usepackage{graphicx} 
\usepackage{amsmath}
\usepackage[normalem]{ulem}
\usepackage{caption}
\usepackage{subcaption}
\usepackage{multirow}
\usepackage{tabularx}
\usepackage{enumitem}
\usepackage{verbatim}

\usepackage{hyperref}
\usepackage{algorithmic}
\usepackage{algorithm}
\usepackage{natbib}

\begin{document}

\twocolumn[
\mlsystitle{FedMM: Saddle Point Optimization for\\ Federated Adversarial Domain Adaptation}



\mlsyssetsymbol{equal}{*}

\begin{mlsysauthorlist}
\mlsysauthor{Yan Shen}{equal,SUNY}
\mlsysauthor{Jian Du}{equal,Ant}
\mlsysauthor{Han Zhao}{UIUC}
\mlsysauthor{Benyu Zhang}{Ant}
\mlsysauthor{Zhanghexuan Ji}{SUNY}
\mlsysauthor{Mingchen Gao}{SUNY}
\end{mlsysauthorlist}

\mlsysaffiliation{SUNY}{Department of Computer Science and Engineering,
University at Buffalo, NY.}
\mlsysaffiliation{Ant}{ Ant Group, Sunnyvale, CA.}
\mlsysaffiliation{UIUC}{ Department of Computer Science, University of Illinois at Urbana-Champaign, IL}

\mlsyscorrespondingauthor{Jian Du}{jd.jiandu@gmail.com}

\mlsyskeywords{Machine Learning, MLSys}

\vskip 0.3in

\begin{abstract}
Federated adversary domain adaptation is a unique distributed minimax training task due to the prevalence of label imbalance among clients, with each client only seeing a subset of the classes of labels required to train a global model. To tackle this problem, we propose a distributed minimax optimizer referred to as FedMM, designed specifically for the federated adversary domain adaptation problem. It works well even in the extreme case where each client has different label classes and some clients only have unsupervised tasks. We prove that FedMM ensures convergence to a stationary point 
with domain-shifted unsupervised data.
On a variety of benchmark datasets, extensive experiments show that FedMM consistently achieves either significant communication savings or significant accuracy improvements over federated optimizers based on the gradient descent ascent (GDA) algorithm. When training from scratch, for example, it outperforms other GDA based federated average methods by around $20\%$  in accuracy over the same communication rounds; and it consistently outperforms  when training from pre-trained models with an accuracy improvement from $5.4\%$ to $9\%$ for different  networks.
\end{abstract}
]



\printAffiliationsAndNotice{\mlsysEqualContribution} 

\section{Introduction}
Federated Learning (FL) is gaining popularity because it enables multiple clients to train machine learning models iteratively and distributedly without directly sharing the potentially sensitive data with other clients \cite{kairouz2019advances, li2020federated}.
The FL training pipeline involves exchanging local model parameters with a server to update the global model, and its communication overhead has been, in many cases, identified as the bottleneck~\cite{mcmahan2017communication, chen2020breaking}. 
Moreover, due to the heterogeneity, {\it domain shift} often exists between clients' data~\cite{quinonero2009dataset}, which is another characteristic feature of FL training,  resulting from the data being sampled from different parts of the sample space on different clients.
Because of the aforementioned two distinguishing features, FL training necessitates optimizers that converge on heterogeneous data  among clients  while requiring fewer communication rounds.

For data with distributional shifts, one of the most challenging settings  is that each local client only has access to a subset of the label classes in order to train the global/common model. In this situation, the global model's accuracy suffers considerably as a result of the gradient/model drift~\cite{mcmahan2017communication}. In the literature of domain adaptation, this problem is also known as label shift~\citep{zhang2013domain,tachet2020domain}. Under the setting of FL, it is a natural occurrence due to the imbalance between clients' label distributions, with the extreme case being individual clients with different domain labels, or clients without labels (unsupervised local model). Furthermore, recent techniques for domain adaptation with adversarial training~\citep{ganin2016domain,tzeng2017adversarial,zhao2018adversarial} on minimax objectives complicates convergence even further.

One method is to use the gradient descent ascent (GDA) method~\citep{lin2020gradient} directly as if the data are \emph{homogeneous and centralized} globally where data are aggregated together to find saddle point solutions~\cite{jin2020local, lin2020near}. However, because of the  domain shifts among clients in FL settings, a single client cannot access an unbiased sampling of the global objective (descent or ascent) gradient. A natural solution would be averaging on each client's gradients, which exactly corresponds to the FedSGDA approach in \cite{peng2019federated}. Its training efficiency, on the other hand, is low due to the requirement of large communication rounds between the server and clients. 
Without considering the issue of domain shift, there are several works on communication-efficient FL algorithms, a large spectrum are variations of the FedAvg~\cite{mcmahan2017communication}. However, if the data are non-i.i.d among clients, especially in the case of imbalanced label distributions, the performance of FedAvg would be significantly lower than that when all data was trained on a single client.

\begin{figure}[t]
\centering
\includegraphics[width=0.46\textwidth]{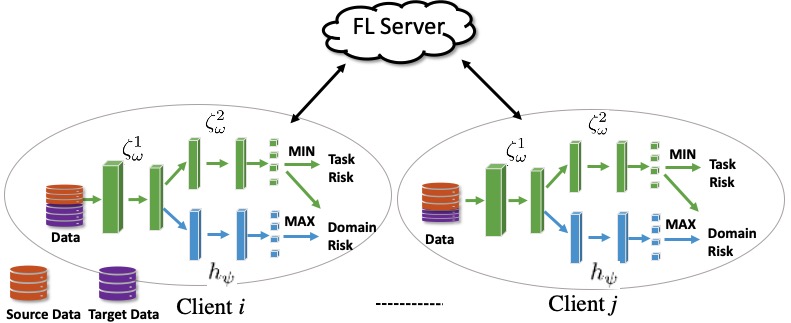}
\caption{A federated adversarial domain adaptation model.}
\label{fig:fl_da}
\end{figure}
For training a federated minimax objective, we show the typical pipeline of FedAvg with GDA, referred to as FedAvgGDA with network, in Fig.~\ref{fig:fl_da}.
Specifically, in each client's local oracle, only the source risk of the client's local source data (if any) and the domain risk of the client's source/target data are accessible. The federated domain adaptation algorithm optimizes the weighted sum of each client's local loss functions in a collaborative minimax fashion.  A detailed explanation is presented in the next section. However, the federated adversarial domain adaptation method is extremely sensitive to the unbalanced distributions of data labels, which has been analyzed theoretically in the literature~\citep{zhao2019learning}. We also empirically verified and confirmed this phenomenon, as shown in Fig.~\ref{fig:oldmethod}.

{\textbf{FedMM}}. 
We formulate this distributed  saddle point optimization  as a Federated MiniMax (FedMM) optimization on a sum of non-identical distributions. In particular, we use an augmented Lagrange function to enforce the global model consensus constraints. Furthermore, in each client's local optimization oracle, FedMM deconstructs the global sum by solving the augmented Lagrange of each function individually. The collection of Lagrange dual variables locally compensates for client-to-client model divergence caused by data domain shift. We detail the algorithm in Section~\ref{sec:FedMM}.



\textbf{Contributions:}
Label imbalance is a natural and extremely challenging problem in federated domain adaptation. As demonstrated in Fig~2, FedAvg’s low performance is driven by the imbalance of domain label distributions across clients. Our paper aims to tackle these challenging issues. We summarize our key contributions as follows:
\begin{itemize}
\item
 We  present,  FedMM,  a  specifically  designed  distributed optimizer for federated minimax optimizations
with non-separable minimization and maximization
variables, as well as clients with uneven label class
distributions. It works in the extreme case where each
client has disjoint classes of labels and some clients
even have unsupervised task.
\item
Under the generic federated saddle point optimization problem with a nonconvex-concave global objective function assumption, we prove that FedMM converges to a
stationary point for the nonconvex-strongly-concave
case.\footnote{We focus on the convergence analysis of the federated nonconvex-strongly-concave case, which is a difficult problem itself even in the centralized setting and has recently received increasing attention in the literature~\cite{luo2020stochastic, jin2020local, lin2020near}} Based on our theoretical analysis, we show that FedMM
converges to a stationary point even if the data distribution
suffers from domain shifts.
\item
FedMM consistently achieves either significant communication savings or significant accuracy improvements over the federated gradient descent ascent (GDA)
method on a variety of benchmark datasets with varying adversarial domain adaptation networks. For example, when training from scratch, 
it outperforms other GDA based federated average methods by around $20\%$ in accuracy over the same communication rounds; and it consistently outperforms when training from pre-trained models with an accuracy improvement from $5.4\%$ to $9\%$ for different  networks.
\end{itemize}


\begin{figure}
  \begin{center}
    \includegraphics[width=0.40\textwidth]{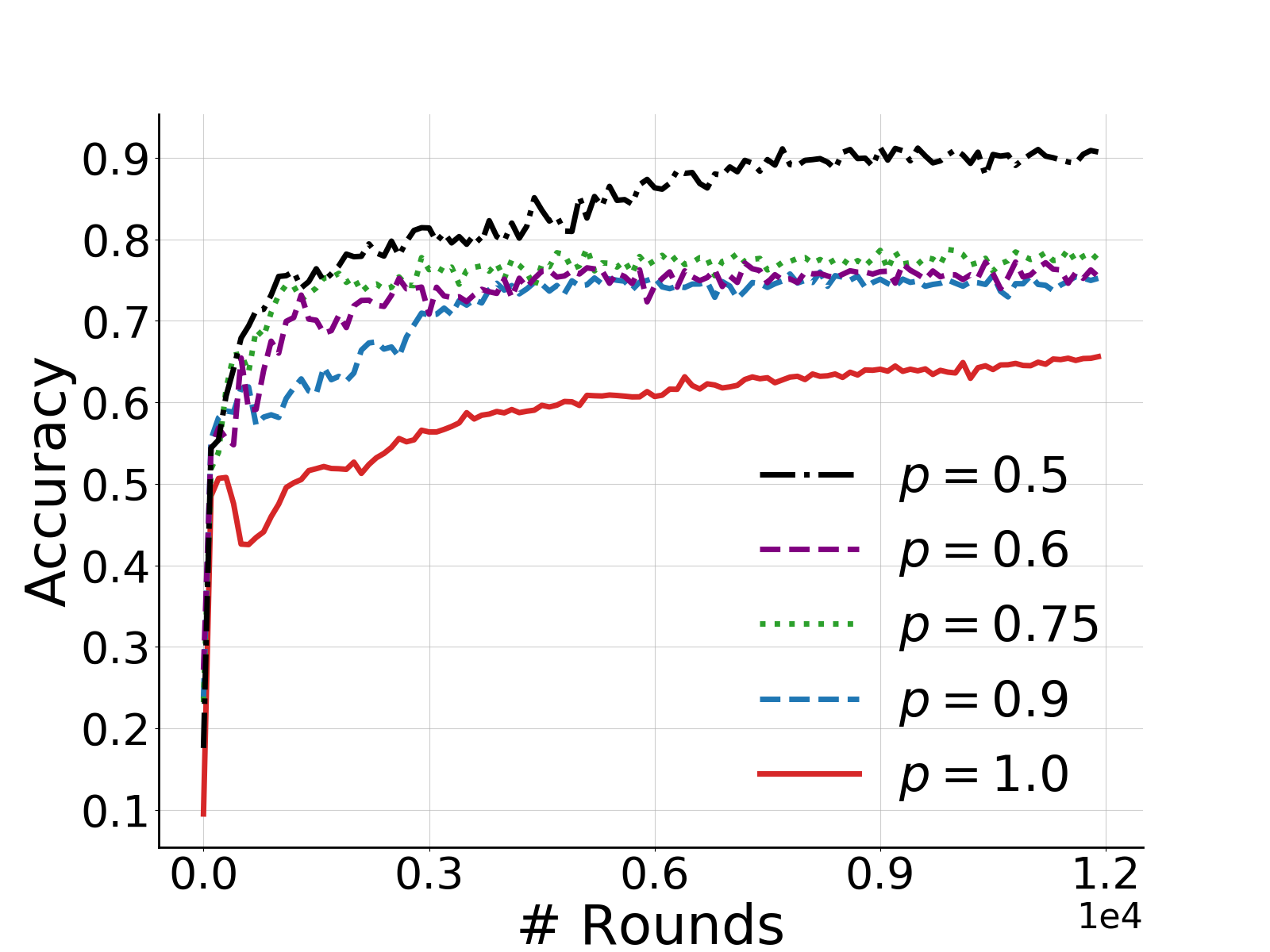}
  \end{center}
  \caption{FedAvgGDA for CDAN with 2 clients. The ratios for source and target data allocated to client 1 are $p$ and $(1-p)$. The remaining data pertains to client 2. It shows that the performance of FedAvgGDA degrades rapidly as the data distribution becomes unbalanced, which motivates our FedMM algorithm.}
  \label{fig:oldmethod}
 \end{figure}
\section{Centralized Adversarial Domain Adaptation}
\label{sect:preliminary}

Domain adaptation refers to the process of transferring knowledge from a labeled source domain to an unlabeled target domain~\citep{ben2010theory,zhao2019learning}. Let $\mathcal P$ and $\mathcal Q$ be the source and target distributions, respectively.
In a general formulation, the upper bound of the target prediction error is given by \citet{ben2010theory}
\begin{equation}
\label{upperbound}
\small
    \text{err}_{\mathcal Q}(\zeta) \leq \text{err}_{\mathcal P}(\zeta)+ d_{\mathcal{H}}(\mathcal{P,Q}) + \min_{\zeta^{*} \in \mathcal{F}}\{\text{err}_{\mathcal P}(\zeta^{*})+  \text{err}_{\mathcal Q}(\zeta^{*})\},
\end{equation}
where $\text{err}_{\mathcal Q}(\zeta)$ denotes the population loss of $\zeta$, i.e.,
$
    \text{err}_{\mathcal Q}(\zeta) \triangleq \mathbb{E}_{(\mathbf{x}_i, \mathbf{y}_i) \sim \mathcal{Q}}[\ell(\zeta(\mathbf{x}_i), \mathbf{y}_i)]
$, and
we use the
parallel notation $\text{err}_{\mathcal P}(\zeta)$ for the source domain. Besides,  
$d_{\mathcal{H}}(\mathcal{P,Q})$ is a discrepancy-based distance and $\min_{\zeta^{*} \in \mathcal{F}}\{\text{err}_{\mathcal P}(\zeta^{\ast})+  \text{err}_{\mathcal Q}(\zeta^{\ast})\}$ is a lower bound on the sum of source and target domain's population loss of $\zeta$ in a {\it{hypothesis}} class $\mathcal{F}$. 

For the unsupervised domain adaptation problem, it has been proven that minimizing the upper bound, which is the r.h.s in (\ref{upperbound}), leads to an architecture consisting of a {\it feature extractor} parameterized by $\omega$,  i.e., $\zeta^1_{\omega}$, a {\it label predictor},
parameterized also by $\omega$ i.e., $\zeta^2_{\omega}$ ( $\zeta_{\omega}\triangleq \zeta^2_{\omega}\circ \zeta^1_{\omega}$),\footnote{The parameters of $\zeta^1$ and $\zeta^1$ are  not the same. In this case,  we abuse the notation to simplify the expression.} and a {\it domain classifier}
parameterized by $\psi$,  i.e., $h_{\psi}$, as shown  in Fig~\ref{fig:fl_da}.
The feature extractor generates the domain-independent feature representations, which are then fed into the domain classifier and label predictor.
The domain classifier then tries to determine whether the extracted features belong to the source or target domain. Meanwhile, the label predictor predicts instance labels based on the extracted features of the labeled source-domain instances.

Minimizing the upper bound in (\ref{upperbound}) encourages the extracted feature to be both discriminative and invariant to changes between the source and target domains.
The upper bound minimization corresponding to a saddle point over the parameter space of $\omega$ and $\psi$ has been demonstrated using
$
\widehat{\mathbf{\omega}} 
\triangleq \arg\min_{\mathbf{\omega}} L_1\left(\mathbf{\omega} \right) - \nu L_2\left({\mathbf{\omega}}, \widehat{\mathbf{\psi}} \right)$
and $\widehat{\mathbf{\psi}}
\triangleq\arg \min_{\mathbf{\psi}} L_2\left(\widehat{\mathbf{\omega}}, \mathbf{\psi} \right)\nonumber$
with an equivalent minimax compact form as
\begin{equation}
\label{minimax}
\min_{\omega}\max_{\psi} F 
= \min_{\omega}\max_{\psi}  L_1\left(\omega \right) - \nu L_2\left(\omega, \psi \right).
\end{equation}
In the majority of  adversarial domain adaptation problems,  $L_1(\omega) \triangleq \mathbb{E}_{\left(\mathbf{x}_i, \mathbf{y}_i\right) \sim \mathcal{Q}}[\ell(\zeta_{\mathbf{\omega}}\left(\mathbf{x}_i), \mathbf{y}_i\right)]$ is the supervised learning loss on $\zeta$,   $L_2(\omega, \psi) \triangleq \mathbb{E}_{\left(\mathbf{x}_i\right) \sim \mathcal{Q}} D_Q(h_{\psi}\left(\zeta_{\omega}'\left(\mathbf{x}_i\right)\right) - \mathbb{E}_{\left(\mathbf{x}_i\right) \sim \mathcal{P}} D_P(h_{\psi}\left(\zeta_{\omega}'\left(\mathbf{x}_i\right)\right)$ is the domain classification loss, and $\nu$ is the trade-off coefficient between $L_1(\omega)$ and $L_2(\omega, \psi)$. With the commonly used cross-entropy loss for $L_2$, we have $D_Q(x) \triangleq 1-\log(x)$ and $D_P(x) \triangleq \log(1-x)$. Besides,   $\zeta_{\omega}^{\prime}$ is the feature and $h_{\psi}(\cdot): \mathbb{R}^D \to [0, 1]$ is the probabilistic prediction of the domain label. In general, $\zeta_{\omega}^{\prime}$ and $h_{\psi}(\cdot)$   include, but is not limited to, the following cases:


\begin{itemize}
\item
Domain-Adversarial Neural Networks (DANN) \citep{ganin2015unsupervised}:
In DANN, the input of $h_{\mathbf{\psi}}(\cdot)$ is  designed simply to be the domain invariant feature $\zeta_{\omega}^1(\mathbf{x}_i)$, i.e.,
$
h_{\psi}\left(\zeta^1_{\omega}\left(\mathbf{x}_i\right)\right)$.

\item 
Margin Disparity Discrepancy (MDD) \cite{zhang2019bridging}:
In  MDD, the input of $h_{\mathbf{\psi}}(\cdot)$ is the concatenation of $\zeta^1_{\omega}$ and 
$\arg\max_c \zeta_{\omega}\left(\mathbf{x}_i; c\right)$ with $c$ the class type
i.e., 
$
    h_{\psi}\left(\left[\zeta^1_{\omega}\left(\mathbf{x}_i\right),\arg\max_c \zeta_{\omega}\left(\mathbf{x}_i; c\right)\right]\right)
$.
\item
Conditional Domain Adaptation Network (CDAN)
\cite{long2017conditional}:
In CDAN, the input of $h_{\psi}$ is from the cross-product space of  $\zeta^1_{\omega}(\mathbf{x}_i)$ and  $\zeta_{\omega}(\mathbf{x}_i)$, i.e.,
$
    h_{\psi}\left(\zeta^1_{\omega}(\mathbf{x}_i)\otimes \zeta_{\omega}(\mathbf{x}_i)\right)
$.
\end{itemize}
Our FedMM is a generic federated adversarial domain adaptation framework in which each client is equipped with $h_{\psi}$ and $\eta_{\omega}$ depending on the availability of source data, target data, or both.

\section{ Federated Adversarial Domain Adaption Formulation}
Due to privacy concerns regarding sensitive data, the data cannot be shared among clients. As a result, federated adversary domain adaption addresses the problem by training a transferred model among clients from a labeled source domain to an unlabeled target domain. A central server coordinates a loose federation of clients exchanging local models to solve the learning task.

To express the federated adversarial domain adaptation objective, we convert the joint learning objective in (\ref{minimax}) into the form of a centralized average of all the clients' objective functions, as given by
\begin{equation}
\begin{split}
\label{FL}
    \min_{\omega}\max_{\psi}f(\omega, \psi) 
    &\triangleq
    \min_{\omega}\max_{\psi}
    \frac{1}{N}\sum_{i=1}^N f_i(\omega, \psi),
    \end{split}
\end{equation}
where $N$ is the number of clients, and $f_i(\omega, \psi) $ is the average loss function at the $i$-th client, which is computed by 
\begin{equation}
f_i(\omega, \psi) 
\triangleq \alpha_{i} \sum_{\xi_{j}^{(i)} \in \mathcal{D}_{i}} F_i\left(\omega, \psi ; \xi_{j}^{(i)}\right),
\end{equation}
where $\alpha_{i}$ is the weight coefficient, and
$F_i\left(\omega, \psi; \xi_{j}\right)$ is the loss function w.r.t the data point $\xi_j^{(i)} \triangleq \{\mathbf{x}_j, \mathbf{y}_j\}$ in data set $\mathcal{D}_{i}$. 
The   objective function at client $i$ is specified based on whether the data is from the source domain or the target domain, i.e., 
\[ F_i\left(\omega, \psi; \xi_{j}^{(i)}\right) \triangleq
  \begin{cases}
    &\ell\left(\zeta_{\omega}\left(\mathbf{x}_i\right), \mathbf{y}_i\right) + \nu\log(1-\\
    & \hspace*{9mm}h_{\psi}\left(\zeta_{\omega}'\left(\mathbf{x}_i\right)\right)),    \hspace*{6.5mm} \text{if } \xi_i \in \mathcal Q,\\
    &\nu\log(h_{\psi}\left(\zeta_{\omega}'\left(\mathbf{x}_i\right)\right)), \hspace*{7mm} \text{if } \xi_i \in \mathcal P.
  \end{cases}
\]
This novel structure introduces additional challenges below in federated adversarial domain adaptation that do not exist in existing adversarial domain adaptation problems or the federated learning literature:
\begin{itemize}
    \item Clients are restricted to compute the minimax optimization in a distributed manner rather than the centralized minimax optimization.
    \item To train a common model, both the set of feature extractor variables $\omega$ and domain classifier variables $\psi$ are non-separable cross clients.\footnote{In contrast, in the federated robust optimization problem~\cite{reisizadeh2020robust, deng2021local}, the corresponding  maximization variables $\psi$ are separable across clients.}
    \item
    The marginal label distributions are class-imbalanced cross clients due to the uneven distribution of source domain data and target domain data. In extreme cases, each client may only access data from the target domain or the source domain; therefore, different data distributions and loss functions among clients degrade distributed learning performance.
\end{itemize}
 

\subsection{Simple GDA based Algorithms}
\label{GDA}

The majority of federated optimizers, such as FedSGD, FedAvg~\cite{mcmahan2017communication}, FedProx~\cite{li2018federated}, FedPD~\cite{zhang2020fedpd}, and others, optimize the local optimal minimum value. The federated adversarial domain adaptation, on the other hand, has a more difficult task of converging to a saddle point in a distributed manner.

\citet{peng2019federated} propose FedSGDA algorithm by extending FedSGD with stochastic Gradient Descent Ascent (GDA) in the problem of federated domain adaption.
In order to make the paper self-contained, we summarize FedSGDA in Algorithm~(\ref{FedSGDA}). 
However, due to its single descent/ascension step per communication round, SGDA has a massive communication overhead. Later in the experiments, we observe that FedSGDA requires more than $1000$ rounds of communication.

\begin{algorithm}[t]
\caption{FedSGDA Algorithm \cite{peng2019federated}}
\label{FedSGDA}
\begin{algorithmic}[1]
\REQUIRE $\mathbf{x}^{0}, \eta_1, \eta_2, T$
\FOR{$t=0, \ldots, T-1$}
\FOR{each client $i\in [N]$ \texttt{in parallel}}
\STATE
$\omega_i^{t} = \omega_0^{t}\quad \psi_i^{t} = \psi_0^{t}$
\STATE \# Local Update: 
\STATE
$\omega_i^{t+1}= \omega_i^{t} - \eta\nabla_{\omega_i}f_i(\omega_i^{t},\psi_i^{t}) $
\STATE
$\psi_i^{t+1}= \psi_i^{t} + \eta\nabla_{\psi_i}f_i(\omega_i^{t},\psi_i^{t})$

\ENDFOR
\STATE \# Global Update: 
\STATE
$\omega_0^{t+1}=\frac{1}{N} \sum_{i=1}^{N} \omega_{i}^{t+1}
\quad \psi_0^{t+1}=\frac{1}{N} \sum_{i=1}^{N} \psi_{i}^{t+1}$
\ENDFOR
\end{algorithmic}
\end{algorithm}
\begin{algorithm}[htp!]
\caption{FedAvgGDA/FedProxGDA Algorithm}
\label{alg:FedAvg+}
\begin{algorithmic}[1]
\REQUIRE \text{Initialize} $\omega_0^{0}, \psi_0^{0}, \eta_1, \eta_2, \{M_i\}_{i=0}^N, T$
\FOR{$t=0, \ldots, T-1$}
\FOR{{each} client $i\in [N]$ \texttt{in parallel}}
\STATE \# For FedAvgGDA Algorithm:
\STATE
$\mathcal{L}_i(\omega, \psi)\! =\! f_i(\omega, \psi)  $
\STATE \# For FedProxGDA Algorithm:
\STATE
$\mathcal{L}_i(\omega, \psi)\! =\! f_i(\omega, \psi)  + \frac{\mu}{2}\| \omega - \omega_0^{t} \|_2^2 - \frac{\mu}{2}\|\psi-\psi_0^t\|_2^2$
\STATE
$\widehat\omega_i^{0} = \omega_0^{t}, \quad \widehat\psi_i^{0} = \psi_0^{t}$
\STATE \# Local Update: 
\FOR{$m=0, \ldots, M_i-1$ }
\STATE
$\widehat\omega_i^{m+1} = \widehat\omega_i^m - \eta_1 \nabla_{\omega_i} \mathcal L_i(\widehat\omega_i^m, \widehat\psi_i^m) $
\STATE
$\widehat\psi_i^{m+1}= \widehat\psi_i^m + \eta_2 \nabla_{\psi_i} \mathcal L_i(\widehat\omega_i^m, \widehat\psi_i^m) $
\ENDFOR
\STATE
${\omega}_i^{t+1} = \widehat{\omega}_i^{M_i},\quad {\psi}_i^{t+1} = \widehat{\psi}_i^{M_i} $
\ENDFOR
\STATE \# Global Update:
\STATE
$\omega_0^{t+1}=\frac{1}{N} \sum_{i=1}^{N} \omega_{i}^{t+1},\quad \psi_0^{t+1}=\frac{1}{N} \sum_{i=1}^{N} \psi_{i}^{t+1}$
\ENDFOR
\end{algorithmic}
\end{algorithm}
FedSGDA inspires us to simply extend FedAvg, a more communication efficient scheme, by GDA, resulting in FedAvgGDA, as shown in Algorithm~\ref{alg:FedAvg+}, where the server averages multi-step stochastic gradient descent w.r.t $\omega$ and stochastic gradient ascent w.r.t $\psi$ from all clients. Several works, including \citet{reisizadeh2020robust} and \citet{deng2021local}, use a similar or variant of FedAvgGDA for federated GAN training. \citet{rasouli2020fedgan} use FedAvgGDA as well. However, due to the unique class-imbalance problem in federated adversarial domain adaptation, the inter-client drift of a local models from a multi-step stochastic gradient descent ascent using FedAvgGDA is no longer negligible. As illustrated in Fig.~\ref{fig:oldmethod}. We also extend Fedprox \citep{li2018federated} by GDA, which leads to FedProxGDA in Algorithm~\ref{alg:FedAvg+}.

Motivated by the global consensus constraint in FedPD~\cite{zhang2020fedpd}, we address the problem of model drift from multiple steps of GDA by introducing a separate set of dual variables. The introduction of dual variables is intended to bridge the gradient gap between the distributed optimization and the centralized result.

\section{\indent\textbf{{FedMM Algorithm}}}\label{sec:FedMM}

Due to the distributed constraint in FL systems, the traditional centralized method introduced in Section~\ref{sect:preliminary} cannot perform the minimax optimization of (\ref{FL}).
Simply decomposing (\ref{FL}) into local optimization and global average as in algorithms like FedSGDA, FedAvgGDA, and FedProxGDA results in a servere performance degradation because these distributed training algorithms diverge from the central optimizer in (\ref{FL}), as validated in Fig.~\ref{fig:oldmethod}.
In this section, we look at how to reduce this divergence by reformulating the centralized problem in (\ref{FL}) into the federated saddle-point optimization problem with consensus constraints given by
\begin{equation}
\label{FL-minimax}
\begin{aligned}
\min_{\omega_0, \omega_i }\max_{\psi_0, \psi_i } 
\quad &
f(\omega, \psi) 
    =
\frac{1}{N}\sum_{i=1}^N f_i(\omega_i, \psi_i) \\
\textrm{s.t.} \quad & 
\omega_i = \omega_0 ,\quad \psi_i = \psi_0, \quad \forall i \in [N].
\end{aligned}
\end{equation}
The corresponding augmented Lagrangian form for each client is defined as
\begin{equation}
\label{local}
\begin{split}
   & \mathcal{L}_i(\omega_0, \omega_i, \lambda_i, \psi_0,\psi_i, \beta_i) \\
    \triangleq & f_i(\omega_i, \psi_i) 
    + 
     \left\langle\lambda_i, \omega_i -\omega_0\right\rangle +\frac{\mu_1}{2}\| \omega_i - \omega_0 \|_2^2 \\ 
     &  - \langle\beta_i,\psi_i - \psi_0\rangle-\frac{\mu_2}{2}
    \| \psi_i - \psi_0 \|_2^2.
\end{split}
\end{equation}
The centralized optimization problem in (\ref{FL}) is then transformed into a saddle-point minimax optimization of augmented Lagrangian functions over all primal-dual pairs, i.e., $\{  \omega_i, \omega_0, \lambda_i, \lambda_0, \psi_i, \psi_0, \beta_i, \beta_0\}$ for all clients $i\in [N]$:
\begin{equation}
\label{global}
\begin{split}
&\min_{\omega_0, \omega_i}\max_{\psi_0, \psi_i}  
    \mathcal{L}\left(\{\omega_i\}_{i=0}^N,  \{\psi_i\}_{i=0}^N, \{\lambda\}_{i=1}^N, \{\beta\}_{i=1}^N \right)\\
    \triangleq & 
    \min_{\omega_0, \omega_i}\max_{\psi_0, \psi_i}
    \frac{1}{N}\sum_{i=1}^N\mathcal{L}_i(\omega_0, \omega_i, \psi_0,\psi_i, \lambda_i, \beta_i).
\end{split}
\end{equation}
By fixing the global consensus variables $\{\omega_0, \psi_0\}$, the above problem is separable  w.r.t local pairs $\{\omega_i, \psi_i, \lambda_i, \beta_i \}$ for all $i\in [N]$. And the decomposed task could be independently updated on local clients periodically without global communication. 
The only problem left is to align the update of global consensus $\omega_0, \psi_0$ and local updates $\omega_i, \psi_i$ for all $i\in [N]$.  
Next, we demonstrate how to achieve distributed local updates and align local updates with global consensus.

 By substituting (\ref{local}) into (\ref{global}), we obtain the {augmented Lagrangian functions over all primal-dual parameters}:
\begin{equation}
\label{separable}
\begin{split}
&\min_{\omega_i}\max_{\psi_i} \mathcal{L}\left(\{\omega_i\}_{i=0}^N,  \{\psi_i\}_{i=0}^N, \{\lambda\}_{i=1}^N, \{\beta\}_{i=1}^N \right)  \\
= &\frac{1}{N}\sum_{i=1}^N\min_{\omega_i}\max_{\psi_i} f_i(\omega_i, \psi_i) 
    + 
     \left\langle\lambda_i, \omega_i -\omega_0\right\rangle + \\ &\frac{\mu_1}{2}\| \omega_i - \omega_0 \|_2^2  - \langle\beta_i,\psi_i - \psi_0\rangle-\frac{\mu_2}{2}
    \| \psi_i - \psi_0 \|_2^2.
\end{split}
\end{equation}
The  minimax optimization  w.r.t the global consensus variable $\omega_0$ and $\psi_0$ is given by:
\begin{eqnarray}
\label{minomega0}
\widehat \omega_0 =&
&\hspace{-7mm}\arg\min_{\omega_0}\frac{1}{N}\sum_{i=1}^N f_i(\omega_i, \psi_i) 
    + 
     \left\langle\lambda_i, \omega_i -\omega_0\right\rangle \nonumber\\
&+& \hspace{-3mm}\frac{\mu_1}{2}\| \omega_i - \omega_0 \|_2^2 
- \langle\beta_i,\psi_i - \psi_0\rangle-\frac{\mu_2}{2}
    \| \psi_i - \psi_0 \|_2^2. \nonumber\\
=   &&\hspace{-7mm}\frac{1}{N} \sum_{i=1}^{N}\omega_i + \frac{1}{\mu_2}\lambda_i,
\end{eqnarray}
where the closed-form solution is due to the quadratic optimization. Similarly, we obtain 
\begin{equation}
\label{maxpsi0}
\widehat \psi_0 = 
\frac{1}{N} \sum_{i=1}^{N}\psi_i + \frac{1}{\mu_2}\beta_i.  
\end{equation}
Eqn. (\ref{minomega0}) and (\ref{maxpsi0}) provide guidance for local update alignment with global consensus.
More specifically,
in each round, we optimize  each client's individual  $\omega_i$ and $\psi_i$, by fixing the global consensus constraints ($\omega_0$ and $\psi_0$) and dual parameters  ($\lambda_i$ and $\beta_i$).
Taking the $(t+1)$-th round update as an example.  Client $i$ receives the global parameters $\{\omega_0^t, \psi_0^t \}$ from the server and sets local  parameters $\widehat\omega^0_i=\omega_0^t, \widehat\psi^0_i=\psi_0^t$.\footnote{We use $\{\widehat\omega_i, \widehat\psi_i\}$ to denote the local iterative updates for $\{\omega_i, \psi_i\}$ to differentiate symbols 
of local updates and global updates.}  { {Then, the local saddle-point optimization}} of~(\ref{separable})  w.r.t  $\{\omega_i, \psi_i\}$ is updated by the local GDA: 
\begin{eqnarray}
\label{descent}
    \widehat\omega_i^{m+1} 
 \hspace{-6mm}   &&=
    \widehat\omega_i^{m} - \eta_1\nabla_{\omega_i}\mathcal{L}_i(\widehat\omega_i^{m},\widehat\psi_i^{m})\\
  \hspace{-6mm}    &&= \omega_i^m - \eta_1\left[\nabla_{\omega_i} f_i(\widehat\omega_i^m, \widehat\psi_i^m) + \mu_1(\widehat\omega_i^m - \omega_0^t) + \lambda_i^t \right]\nonumber
\end{eqnarray}
\begin{eqnarray}
\label{ascent}
    \widehat\psi_i^{m+1} 
    \hspace{-6mm}    &&= \widehat\psi_i^{m} + \eta_2\nabla_{\psi_i}
    \mathcal{L}_i(\widehat\omega_i^{m},\widehat\psi_i^{m})\\
    \hspace{-6mm}    &&=
    \widehat\psi_i^m + \eta_2 [\nabla_{\psi_i} f_i(\widehat\omega_i^m, \widehat\psi_i^m) - \mu_2(\widehat\psi_i^m - \psi_0^t) - \beta_i^t ].\nonumber
\end{eqnarray}
We denote
${\omega}_i^{t+1} = \widehat{\omega}_i^{M_i}$
and ${\psi}_i^{t+1} = \widehat{\psi}_i^{M_i}$ for the results of $M_i$-step local update. 
The dual parameters are then updated using GDA with
\begin{eqnarray}
    &\lambda_i^{t+1} 
     = \lambda_i^t + \mu_1(\omega_i^{t+1} - \omega_0^{t})\label{lambda_update}, \\
    &\beta_i^{t+1} 
    = \beta_i^t + \mu_2(\psi_i^{t+1} - \psi_0^{t})\label{beta_update}.
\end{eqnarray}
To align with the global consensus constraint obtained in (\ref{minomega0}) and (\ref{maxpsi0}), we set
\begin{equation}
\begin{split}\label{local_output}
\omega_{i}^{t+}&=\omega_{i}^{t+1}+ \frac{\eta_3^t}{\mu_1} \lambda_i^{t+1};
 \quad
\psi_{i}^{t+} =\psi_{i}^{t+1}+ \frac{\eta_3^t}{\mu_2} \beta_{i}^{t+1}.
\end{split}
\end{equation}

Therefore, the global consensus constraint is satisfied by the global update at the server with
\begin{equation}
\label{agg}
\begin{split}
\omega_0^{t+1}=\frac{1}{N} \sum_{i=1}^{N} \omega_{i}^{t+},
\quad \textrm{and} \quad 
\psi_0^{t+1}=\frac{1}{N} \sum_{i=1}^{N} \psi_{i}^{t+}.
\end{split}
\end{equation}
It should be noted that we use an exponential decay factor $\eta_3\leq 1$ in (\ref{local_output}). We find that $\eta_3$ helps the convergence even when the local training step $M_i$ is insufficient.

We can now summarize one round of the FedMM algorithm, which consists of three major steps:
(i) Parallel saddle-point optimization on all local augmented Lagrangian function $\mathcal{L}_i$'s. One optimization oracle example is based on stochastic GDA, as shown in (\ref{descent}) and (\ref{ascent}). 
{(ii)} Local gradient descent and ascent updates on dual variable ($\{\beta_i, \lambda_i\}$) as shown in  (\ref{beta_update}). {(iii)} Aggregation to update global consensus variables $\{\omega_0, \psi_0\}$ in (\ref{agg}). 
After one round of global communication. The global coordinated value of $\{\omega_0^{t+1}, \psi_0^{t+1} \}$ is then broadcasted back to each  client, triggering next-round updates. The detailed diagram of  FedMM is shown in Fig.~\ref{fig:fedmm} with FedMM algorithm summarized in~Algroithm~\ref{Alg:FedMM}.

\begin{figure}
  \begin{center}
    \includegraphics[width=0.5\textwidth]{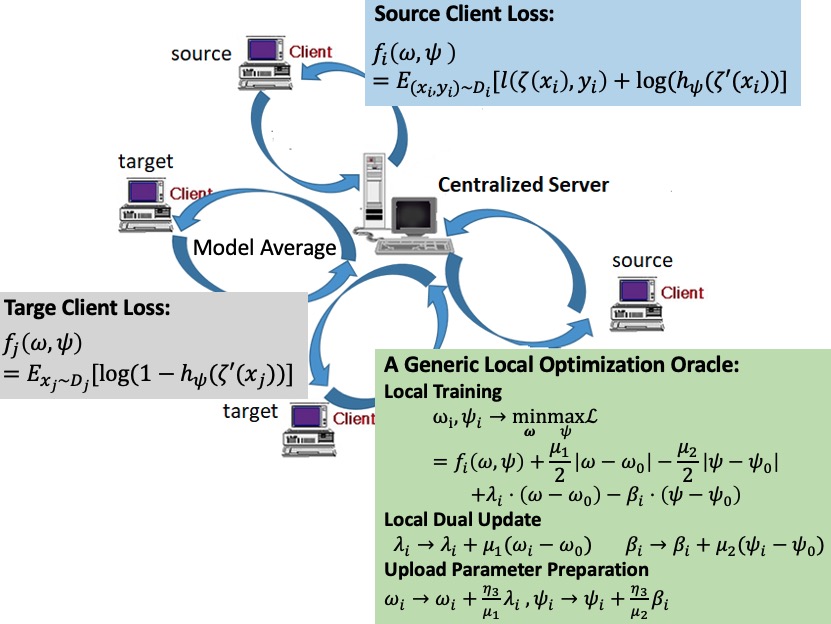}
  \end{center}
  \caption{The flowchart of the proposed FedMM algorithm. The source and target clients have different local minimax objectives due to the source and target domain distributions. In parallel, each client is running its local optimization oracle. Each  client then uploads its parameter to the server. After that, the server averages the client's parameters and broadcasts them back to the client, completing one-round updates.}
  \label{fig:fedmm}
 \end{figure}
\begin{algorithm}[htp!]
\caption{FedMM Algorithm}
\label{Alg:FedMM}
\begin{algorithmic}[1]
\REQUIRE \text{Initialize} $\omega_0^{0}, \psi_0^{0}, \mu_1, \mu_2, \eta_1, \eta_2, \eta_3, \{M_i\}_{i=0}^N, T$
\FOR{$t=0, \ldots, T-1$}
\FOR{{each} client $i\in [N]$ \texttt{in parallel}}


\STATE
$\widehat\omega_i^{0} = \omega_0^{t};\quad 
\quad
\widehat\psi_i^{0} = \psi_0^{t}$
\STATE \# Local Update: 
\FOR{$m=0, \ldots, M_i-1$ }
\STATE \# Gradient Descent:
\STATE
$\widehat\omega_i^{m+1} = \widehat\omega_i^m - \eta_1 [\nabla_{\omega_i} f_i(\widehat\omega_i^m, \widehat\psi_i^m) + \mu_1(\widehat\omega_i^m - \hspace*{12.5mm}\omega_0^t) + \lambda_i^t ]$
\STATE \# Gradient Ascent
\STATE
$\widehat\psi_i^{m+1}= \widehat\psi_i^m + \eta_2 [\nabla_{\psi_i} f_i(\widehat\omega_i^m, \widehat\psi_i^m) - \mu_2(\widehat\psi_i^m - \hspace*{12.5mm}\psi_0^t) - \beta_i^t ]$
\ENDFOR
\STATE
${\omega}_i^{t+1} = \widehat{\omega}_i^{M_i};\quad 
{\psi}_i^{t+1} = \widehat{\psi}_i^{M_i} $
\STATE \# Dual Descent:
\STATE
$\lambda_i^{t+1}=\lambda_i^{t}+\mu_1(\omega_{i}^{t+1}-\omega_0^{t})$
\STATE \# Dual Ascent:
\STATE
$\beta_{i}^{t+1}=\beta_{i}^{t}+\mu_2(\psi_{i}^{t+1}-\psi_0^{t}) 
$
\STATE
$
\omega_{i}^{t+}=\omega_{i}^{t+1}+ \frac{\eta_3^t}{\mu_1} \lambda_i^{t+1};
\hspace{1em}
\psi_{i}^{t+}=\psi_{i}^{t+1}+ \frac{\eta_3^t}{\mu_2} \beta_{i}^{t+1}$
\ENDFOR
\STATE \# Global Update: 
\STATE
$\omega_0^{t+1}=\frac{1}{N} \sum_{i=1}^{N} \omega_{i}^{t+}; \quad 
\quad \psi_0^{t+1}=\frac{1}{N} \sum_{i=1}^{N} \psi_{i}^{t+}$
\ENDFOR
\end{algorithmic}
\end{algorithm}

\section{Convergence Analysis}\label{analysis}
Finding a global saddle
point $\min_{{x} } \max_{{y} } f({x}, {y})$ in general is intractable~\cite{lin2020near}.
One approach is to equivalently reformulate the problem
 by 
$\min _{{x} }\left\{\Phi({x}):=\max _{{y} \in \mathcal{Y}} f({x}, {y})\right\}$,
and define an optimality notion for the local surrogate of global optimum of $\Phi$.
A series of theoretical analyses on the stationary point convergence condition of $\Phi$ with first-order algorithm were carried out to extend the convex-concave assumption to assumptions of nonconvex-strongly-concave\footnote{$f(x,\cdot)$ is not necessarily convex and,  $f(\cdot, y)$ is strongly  concave.}~\cite{rafique2018non, lu2020hybrid}, nonconvex-concave~\cite{lin2020near, nouiehed2019solving}, and nonconvex-nonconcave\footnote{$f(x,\cdot)$  is not necessarily convex and,  $f(\cdot, y)$ is not necessarily concave.}~\cite{jin2020local}.
Convergence analysis for a federated  optimizer, such as FedMM that involves bounding client's drift from global parameter via primal-dual method, on the other hand, is  more complicated.
We establish our main convergence results in this section and show that FedMM converges to the stationary point for the nonconvex-strongly-concave case.

Let $\psi^{\star}(\omega) 
\triangleq \arg\max_{\psi}f(\omega, \psi)$ be the optimal value of $\psi$ for the global objective function $f$ w.r.t $\omega$. Then  (\ref{FL}) can be reformulated as 
    $\min_{\omega}f(\omega, \psi) 
    = \min_{\omega} \frac{1}{N}\sum_{i}\Phi_i(\omega)$
with  
\begin{equation}
\Phi_i(\omega) \triangleq f_i(\omega, \psi^{\star}(\omega)),
\quad
\Phi(\omega) \triangleq \frac{1}{N}\sum_{i=1}^N\Phi_i(\omega) .\end{equation}
In this way, we equivalently reformulate the problem
as 
$\min _{{\omega}}\left\{\Phi(\omega)=\max _{{\phi}} f({\omega}, {\phi})\right\}$. To ease the presentation, we further define the augmented Lagrange of $\Phi_i$ by
\begin{equation}
\label{eqn:AL-Phi-main}
    \mathcal{L}_i^{\Phi}(\omega_i^t, \omega_0^t, \lambda_i^t) 
    =  
    \Phi_i(\omega_i^t)+\langle \lambda_i^t, \omega_i^t-\omega_0^t\rangle + \frac{\mu_1}{2}\left\|\omega_i^t- \omega_0\right\|^2.
\end{equation}
For our theoretical analysis, we make the following standard assumptions that have been used in the literature~\citep{lin2020gradient,jin2020local,luo2020stochastic,lin2020near}:




\begin{assumption}
\label{assump:Lipshictz-gradients}
(Lipshictz continuous gradients)
For all $i\in [N]$, there exists positive constants $L_{11}$, $L_{12}$, $L_{21}$, and $L_{22}$ such that for any $\omega, \omega^\prime\in \mathbb R^{d_1}$, and $\psi, \psi^\prime\in \mathbb R^{d_2}$, we have$
     \left\|\nabla_{\omega}f_i(\omega, \psi) - \nabla_{\omega}f_i(\omega^\prime, \psi) \right\| \leq 
    L_{11}\left\|\omega - \omega^\prime\right\|$,     $\left\|\nabla_{\omega}f_i(\omega, \psi) - \nabla_{\omega}f_i(\omega, \psi^\prime) \right\| \leq 
    L_{12}\left\|\psi - \psi^\prime\right\|$,  
     $\left\|\nabla_{\psi}f_i(\omega, \psi) - \nabla_{\psi}f_i(\omega^\prime, \psi) \right\| \leq 
    L_{21}\left\|\omega - \omega^\prime\right\|$, $    \left\|\nabla_{\psi}f_i(\omega, \psi) - \nabla_{\psi}f_i(\omega, \psi^\prime) \right\| \leq L_{22}\left\|\psi - \psi^\prime\right\|$.
\end{assumption}
\begin{assumption}
\label{assump:Stronglyconcave}
(Strongly concave $f_i(\cdot, \psi_i)$) For all $i\in [N]$,  $f_i(\omega, \psi)$ are strongly concave on $\psi$, i.e., there exists constant $B>0$ such that for any $\omega\in \mathbb R^{d_1}$, and $\psi, \psi^\prime\in \mathbb R^{d_2}$, we have
\begin{equation}
  \left\langle \nabla_\psi f_i(\omega, \psi) -\nabla_\psi f_i(\omega, \psi^\prime), \psi - \psi^\prime \right\rangle  \leq  -B\left\|\psi - \psi^\prime\right\|^2.
\end{equation}
\end{assumption}

\begin{assumption}
\label{assump:sufficient_loc}
(Sufficient local training) For all $i\in [N]$, after $M_i$-step update, the gradients w.r.t. $\omega_i$ and $\psi_i$ are finite and denoted by
\begin{equation}
\label{localerror}
    \|\nabla_{\omega} \mathcal{L}_i(\omega_i^t, \psi_i^t)\| = e_{\omega, i}^t, 
    \|\nabla_{\psi} \mathcal{L}_i(\omega_i^t, \psi_i^t)\| = e_{\psi, i}^t. 
\end{equation}
\end{assumption}
We set $\eta_3 = 1$ for the analysis without loss of generality.
\begin{assumption}
\label{assump:Lipsch}
The $\kappa$-Lipschitz continuity of  $\psi^{\star}(\omega)$, i.e.,
\begin{equation}
     \left\|  \psi^{\star}\left(\omega_i^{t-1}\right) - \psi^{\star}\left(\omega_i^{t}\right)\right\| \leq \kappa \left\|  \omega_i^{t-1} - \omega_i^{t}\right\|.
\end{equation}
\end{assumption}
In the following, we present some key results for FedMM convergence. Proofs of these results are deferred to the Appendix.
Let $\epsilon^t_i = \left\|\psi_i^t -  \psi^{\star}(\omega_i^t) \right\|$. We begin by displaying the upper bound expression of gradient of $\mathcal{L}_i^{\Phi}(\omega_i^t, \omega_0^t, \lambda_i^t)$ in (\ref{eqn:AL-Phi-main}).
\begin{lemma}
\label{lemma:boundedgrad}
After $M_i$-step updates, the gradient of $\mathcal{L}_i^{\Phi}(\omega_i^t, \omega_0^t, \lambda_i^t)$ is bounded by
\begin{equation}
\label{eqn:boundedgrad}
    \left\|\nabla_{\omega_i}\mathcal{L}_i^{\Phi}(\omega_i^t, \omega_0^t, \lambda_i^t)\right\| \leq L_{12}  \epsilon^t_i, \quad \forall i \in [N].
\end{equation}
\end{lemma}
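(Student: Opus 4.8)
The plan is to differentiate the reduced augmented Lagrangian $\mathcal{L}_i^{\Phi}$ in~(\ref{eqn:AL-Phi-main}) directly, and then show that its dual and penalty terms, through the first-order optimality of the local update, convert the outer gradient evaluated at the inexact inner variable $\psi_i^t$ into the same gradient evaluated at the exact maximizer $\psi^{\star}(\omega_i^t)$; the remaining discrepancy is then controlled purely by the $L_{12}$-smoothness of $f_i$ in its second argument. Differentiating~(\ref{eqn:AL-Phi-main}) with respect to $\omega_i^t$ gives
\[
\nabla_{\omega_i}\mathcal{L}_i^{\Phi}(\omega_i^t,\omega_0^t,\lambda_i^t)
= \nabla_{\omega}\Phi_i(\omega_i^t) + \lambda_i^t + \mu_1(\omega_i^t-\omega_0^t).
\]

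Two ingredients feed into the argument. First, I would invoke Danskin's (envelope) theorem for $\Phi_i(\omega)=f_i(\omega,\psi^{\star}(\omega))$: since Assumption~\ref{assump:Stronglyconcave} forces $f_i(\omega,\cdot)$ to be strongly concave, its maximizer is unique and the cross term coming from the Jacobian of $\psi^{\star}$ vanishes, leaving $\nabla_{\omega}\Phi_i(\omega_i^t)=\nabla_{\omega}f_i(\omega_i^t,\psi^{\star}(\omega_i^t))$. Second, I would use the stationarity of the $M_i$-step inner descent in~(\ref{descent}): at the returned iterate the gradient of the local augmented Lagrangian $\mathcal{L}_i$ vanishes, i.e. $\nabla_{\omega}f_i(\omega_i^t,\psi_i^t)+\mu_1(\omega_i^t-\omega_0^t)+\lambda_i^t=0$, which isolates the dual-plus-penalty block as $\lambda_i^t+\mu_1(\omega_i^t-\omega_0^t)=-\nabla_{\omega}f_i(\omega_i^t,\psi_i^t)$.

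Substituting the second identity into the displayed gradient and replacing $\nabla_{\omega}\Phi_i$ by the first, the $\lambda_i^t$ and $\mu_1$-penalty contributions cancel against $\nabla_{\omega}f_i(\omega_i^t,\psi_i^t)$, leaving
\[
\nabla_{\omega_i}\mathcal{L}_i^{\Phi}(\omega_i^t,\omega_0^t,\lambda_i^t)
= \nabla_{\omega}f_i(\omega_i^t,\psi^{\star}(\omega_i^t)) - \nabla_{\omega}f_i(\omega_i^t,\psi_i^t).
\]
Taking norms and applying the $L_{12}$-Lipschitz-in-$\psi$ bound of Assumption~\ref{assump:Lipshictz-gradients} gives $\|\nabla_{\omega_i}\mathcal{L}_i^{\Phi}\|\le L_{12}\|\psi^{\star}(\omega_i^t)-\psi_i^t\|=L_{12}\epsilon_i^t$, which is the claim.

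The main obstacle is justifying the envelope identity together with the exact-local-solve condition. The clean form $\nabla_{\omega}\Phi_i=\nabla_{\omega}f_i(\cdot,\psi^{\star})$ requires the inner maximizer to be the per-client one, so that $\nabla_{\psi}f_i$ vanishes at $\psi^{\star}(\omega_i^t)$ and the chain-rule cross term disappears; strong concavity (Assumption~\ref{assump:Stronglyconcave}) and the $\kappa$-Lipschitz continuity of $\psi^{\star}$ (Assumption~\ref{assump:Lipsch}) are precisely what make $\psi^{\star}$ single-valued and smooth enough for this step, and I would state explicitly that $\epsilon_i^t$ is measured against this maximizer. If the local $\omega$-subproblem is only solved approximately, the stationarity identity instead reads $\nabla_{\omega}\mathcal{L}_i(\omega_i^t,\psi_i^t)=e_{\omega,i}^t$ in the sense of Assumption~\ref{assump:sufficient_loc}, which would add a residual term of norm $e_{\omega,i}^t$; the stated bound corresponds to the sufficient-local-training regime in which this optimality gap is negligible.
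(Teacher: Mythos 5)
Your proposal is correct and follows essentially the same route as the paper's own proof: differentiate $\mathcal{L}_i^{\Phi}$, invoke Danskin's theorem to write $\nabla\Phi_i(\omega_i^t)=\nabla_{\omega}f_i(\omega_i^t,\psi^{\star}(\omega_i^t))$, use the first-order stationarity $\nabla_{\omega}f_i(\omega_i^t,\psi_i^t)+\lambda_i^t+\mu_1(\omega_i^t-\omega_0^t)=0$ from the $M_i$-step local solve to cancel the dual-plus-penalty block, and finish with the $L_{12}$-Lipschitz bound. Your closing remarks — that Danskin requires $\psi^{\star}$ to be the maximizer at which $\nabla_{\psi}f_i$ vanishes, and that inexact local solves would add an $e_{\omega,i}^t$ residual — are sound and in fact anticipate the paper's own extension to bounded local gradient error in its later appendix.
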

The detailed proofs for this lemma are provided in Appendix~\ref{append-lemma1}. We further obtain an upper bound expression of the sum of the r.h.s of (\ref{eqn:boundedgrad}) in the following lemma.
\begin{lemma}
\label{lemma:boundpsi}
(Bounded optimal gap on $\psi_i^t$)  There exist positive constants $C_1$ , $C_2$  and $C_3$ such that 
\begin{equation}
\begin{split}
&\sum_{t=1}^T \sum_{i=0}^N \left\|\psi_i^{t} - \psi^{\star}(\omega_i^{t-1})\right\|^2 
\leq
 C_1\sum_{t=1}^T\left\|\omega_i^{t-1} - \omega_i^t\right\|^2 \\
&+ C_2\sum_{i = 1}^N\left\|\psi_i^{0} - \psi^{\star}(\omega_i^{0})\right\|^2  + C_3\sum_{i = 1}^N\sum_{\substack{j=1 \\ j\neq i}}^N\left\|\omega_i^0 - \omega_j^0\right\|^2. 
\end{split}
\end{equation}
\end{lemma}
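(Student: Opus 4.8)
The plan is to bound each summand $\|\psi_i^{t}-\psi^{\star}(\omega_i^{t-1})\|$ by separating the \emph{optimization} error of the finite inner loop from the \emph{tracking} error caused by the maximizer moving as $\omega$ changes. Concretely, I would insert the quantity $\psi^{\star}(\omega_i^{t})$ and apply the triangle inequality, so that $\|\psi_i^{t}-\psi^{\star}(\omega_i^{t-1})\|$ is at most $\|\psi_i^{t}-\psi^{\star}(\omega_i^{t})\|+\|\psi^{\star}(\omega_i^{t})-\psi^{\star}(\omega_i^{t-1})\|$. The second term is controlled directly by the $\kappa$-Lipschitz continuity of $\psi^{\star}$ (Assumption~\ref{assump:Lipsch}), giving $\kappa\|\omega_i^{t}-\omega_i^{t-1}\|$; after squaring and summing over $t$ this is exactly what produces the $C_1\sum_t\|\omega_i^{t-1}-\omega_i^{t}\|^2$ term. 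The first term, $\epsilon_i^{t}=\|\psi_i^{t}-\psi^{\star}(\omega_i^{t})\|$, is the genuine quantity on which I would set up a recursion.

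For $\epsilon_i^{t}$ I would exploit strong concavity. The $\psi$-subproblem actually ascended in (\ref{ascent}) is $\psi\mapsto f_i(\omega_i,\psi)-\langle\beta_i^{t-1},\psi-\psi_0^{t-1}\rangle-\tfrac{\mu_2}{2}\|\psi-\psi_0^{t-1}\|^2$, which by Assumption~\ref{assump:Stronglyconcave} together with the added quadratic is $(B+\mu_2)$-strongly concave and, by Assumption~\ref{assump:Lipshictz-gradients}, $(L_{22}+\mu_2)$-smooth. Hence for a step size $\eta_2\le 1/(L_{22}+\mu_2)$ each ascent step contracts with factor $q:=1-\eta_2(B+\mu_2)\in(0,1)$, so after the $M_i$ inner steps the distance to the maximizer of this subproblem is at most $q^{M_i}$ times its value at the reset point $\widehat\psi_i^0=\psi_0^{t-1}$, up to an extra term proportional to the $\omega$-movement inside the loop (the maximizer itself shifts as $\widehat\omega_i^{m}$ moves, bounded again through Assumption~\ref{assump:Lipsch}). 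Combining this contraction with the reset rule $\psi_0^{t-1}=\tfrac1N\sum_j\psi_j^{(t-1)+}$ coming from (\ref{local_output})--(\ref{agg}) yields a per-round inequality of the schematic form $\epsilon_i^{t}\le q^{M_i}\,\overline{\epsilon}^{\,t-1}+a\|\omega_i^{t-1}-\omega_i^{t}\|+(\text{cross-client coupling})$, where $\overline{\epsilon}^{\,t-1}$ aggregates the previous-round gaps through the averaging.

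Finally I would square this recursion using $(x+y+z)^2\le 3(x^2+y^2+z^2)$, sum over $i$ and $t$, and swap the order of summation so that the geometric weights $\sum_{s\le t}q^{M_i(t-s)}$ collapse to $1/(1-q^{M_i})$; this converts the recursion into the claimed bound, with $C_1$ absorbing $\kappa^2$ and the $1/(1-q)^2$ factors, $C_2$ carrying the round-zero gaps $\|\psi_i^{0}-\psi^{\star}(\omega_i^{0})\|^2$, and $C_3$ carrying the initial cross-client spread $\sum_{i}\sum_{j\neq i}\|\omega_i^{0}-\omega_j^{0}\|^2$ that seeds the consensus disagreement. The main obstacle is precisely the coupling introduced by the averaging step: since $\psi_i^{t}$ is re-initialized from the \emph{global} average $\psi_0^{t-1}$ rather than from client $i$'s own previous iterate, the natural per-client contraction is contaminated by the other clients' gaps, and one must simultaneously bound the consensus deviation $\|\psi_i^{t}-\psi_0^{t}\|$ (equivalently the dual magnitudes $\beta_i^{t}$) and show it is driven only by $\omega$-movement and the initial spread. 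Closing this coupled recursion with an effective contraction factor strictly below one -- so that all three constants are finite and independent of $T$ -- is the crux; the remaining algebra (choosing the Young's-inequality weights and the geometric-sum bookkeeping) is routine.
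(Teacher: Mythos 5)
Your skeleton --- triangle inequality plus Assumption~\ref{assump:Lipsch} to absorb the $\psi^{\star}(\omega_i^{t-1})$-to-$\psi^{\star}(\omega_i^{t})$ shift into the $C_1$ term, a per-round contraction for $\epsilon_i^t$, and a geometric collapse of the resulting recursion --- does mirror the structure of the paper's argument. But your contraction engine is different, and that is where a genuine gap appears. You contract the inner GDA loop toward the maximizer of the \emph{local augmented-Lagrangian subproblem} $\psi\mapsto f_i(\omega_i,\psi)-\langle\beta_i,\psi-\psi_0\rangle-\tfrac{\mu_2}{2}\|\psi-\psi_0\|^2$, whereas the quantity the lemma controls is the distance to $\psi^{\star}(\omega_i^t)\triangleq\arg\max_{\psi}f(\omega_i^t,\psi)$, the maximizer of the \emph{global} objective $f=\tfrac{1}{N}\sum_j f_j$. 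These two points differ for two independent reasons: the dual and proximal terms displace the local maximizer, and $f_i\neq f$. Your proposal never supplies the bridge between them. In the paper this bridge is precisely what injects the cross-client gradient differences $\nabla_\psi f_j(\omega_j^t,\psi_j^t)-\nabla_\psi f_j(\omega_i^t,\psi_i^t)$, and hence the consensus deviations $\bar\psi_t,\bar\omega_t$, into the recursion (proof of Lemma~\ref{lemma:psiconverge}); it is obtained by summing the exact local stationarity conditions over clients (Propositions~\ref{prop:psi_iter}--\ref{prop:delta_omega}), which rest on Assumption~\ref{assump:sufficient_loc} --- the assumption your $q^{M_i}$ inner-loop route discards, at the price of extra error terms (the simultaneously moving $\widehat\omega_i^m$) that you mention but do not control.

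More decisively, you explicitly defer ``closing this coupled recursion with an effective contraction factor strictly below one'' as the crux --- but that closure \emph{is} the proof. The paper devotes its entire chain of auxiliary results to it: bounding the consensus spread $\bar\psi_t$ via the update increments (Lemmas~\ref{lemma:lemmapsidiv} and~\ref{lemma:lemmapsiupdate}), bounding $\bar\omega_t$ by $\widetilde\omega_t$ and $\epsilon_t,\epsilon_{t-1}$ (Lemma~\ref{lemma:lemmaboundbaromegat}), assembling the Lyapunov combination $\epsilon_t+B_1\bar\psi_t\le B_2\left(\epsilon_{t-1}+B_1\bar\psi_{t-1}\right)+B_3\bar\omega_t$ (Lemmas~\ref{lemma:lemmatildepsit} and~\ref{lemma:epsilonaddbarpsit}), and then verifying $B_2<1$ only under explicit conditions on $\mu_1,\mu_2$ (Section~\ref{mu1mu2bound}). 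Without exhibiting a parameter regime in which the coupled system contracts, the constants $C_1,C_2,C_3$ are not shown to exist independently of $T$, so the lemma is not proved; the geometric-sum bookkeeping you call routine is indeed routine, but only after the contraction is established. In short, you have the right map of the terrain and a plausible alternative contraction mechanism, but the step on which the entire lemma turns is named rather than carried out.
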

The detailed proofs for this lemma are provided in Appendix~\ref{appned:lemm-boundpsi}. The upper bound of the descent of $\Phi(\omega_0^t)$ after $T$-round updates is  analyzed in the following lemma.
\begin{lemma} 
\label{lemma:descentPhi}
(Descent of $\Phi(\omega_0^t)$) After $T$-round global updates, the descent of $\Phi(\omega_0^t)$ is bounded by
\begin{equation}
\begin{split}
&\Phi(\omega_0^T) - \Phi(\omega_0^0) \leq 
\frac{(3\mu_1+ 16L_{\Phi})L_{12}^2}{\mu_1NL_{\Phi}}\sum_{i=1}^N\sum_{t=1}^T\epsilon^t_i\\
 &- \frac{\mu_1}{4}\sum_{t=1}^T
\left\|\omega_0^{t+1} - \omega_0^{t}\right\|^2 \\
&
-\frac{\mu_1^2-2\mu_1L_{\Phi}-4L_{\Phi}^2}{2\mu_1N}\sum_{i=1}^N\sum_{t=1}^T\left\|\omega_i^{t+1}- \omega_i^t\right\|^2.
\end{split}
\end{equation}
\end{lemma}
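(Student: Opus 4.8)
The plan is to turn the $L_\Phi$-smoothness of $\Phi$ into a one-step descent inequality along the server iterates $\omega_0^t$ and then sum it over $t$. Under Assumptions~\ref{assump:Lipshictz-gradients} and~\ref{assump:Stronglyconcave} each $\Phi_i$ is $L_\Phi$-smooth (the standard envelope estimate for the maximum of a Lipschitz-gradient, strongly concave function), so
\[
\Phi(\omega_0^{t+1})-\Phi(\omega_0^t)\le \langle \nabla\Phi(\omega_0^t),\ \omega_0^{t+1}-\omega_0^t\rangle+\tfrac{L_\Phi}{2}\|\omega_0^{t+1}-\omega_0^t\|^2 .
\]
Everything then reduces to rewriting the inner product so that it yields a negative multiple of $\|\omega_0^{t+1}-\omega_0^t\|^2$, a negative multiple of the local movements $\|\omega_i^{t+1}-\omega_i^t\|^2$, and the inexactness terms governed by $\epsilon_i^t$.

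The engine of the argument is a pair of identities extracted from the dual and aggregation steps. First, combining local stationarity with the dual update~(\ref{lambda_update}) gives, at the post-local iterate, $\nabla_{\omega_i}\mathcal{L}_i^{\Phi}=\nabla\Phi_i(\omega_i^{t+1})+\lambda_i^{t+1}$, so the dual variable tracks the negative per-client gradient up to the residual controlled by Lemma~\ref{lemma:boundedgrad}, i.e. $\|\nabla\Phi_i(\omega_i^{t+1})+\lambda_i^{t+1}\|\le L_{12}\,\epsilon_i^{t}$. Second, averaging~(\ref{lambda_update}), (\ref{local_output}) and~(\ref{agg}) (with $\eta_3=1$) collapses the server step to $\omega_0^{t+1}=2\bar\omega^{t+1}-\bar\omega^{t}$, where $\bar\omega^{t}\triangleq\frac1N\sum_i\omega_i^{t}$ and $\tfrac1{\mu_1}\bar\lambda^{t+1}=\bar\omega^{t+1}-\bar\omega^{t}$. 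Together these exhibit the server update as an approximate gradient step of size $1/\mu_1$: up to consensus drift and the $L_{12}\epsilon$ residual, $\omega_0^{t+1}-\omega_0^t$ is aligned with $-\tfrac1{\mu_1}\nabla\Phi(\omega_0^t)$, which is precisely what turns $\langle\nabla\Phi(\omega_0^t),\omega_0^{t+1}-\omega_0^t\rangle$ into a term close to $-\mu_1\|\omega_0^{t+1}-\omega_0^t\|^2$. This alignment, rather than a naive Cauchy--Schwarz bound, is what produces the negative $\omega_0$-movement term.

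Concretely, I would decompose $\nabla\Phi(\omega_0^t)=\frac1N\sum_i\nabla\Phi_i(\omega_i^{t})+\frac1N\sum_i\big(\nabla\Phi_i(\omega_0^t)-\nabla\Phi_i(\omega_i^{t})\big)$, bound the first average by $\frac{L_{12}}{N}\sum_i\epsilon_i^t$ via the cancellation of the dual and proximal terms on average together with Lemma~\ref{lemma:boundedgrad} (Jensen's inequality then supplies the $\frac{L_{12}^2}{N}\sum_i(\epsilon_i^t)^2$ scaling matching the $L_{12}^2/(\mu_1 N L_\Phi)$ factor in the statement), and bound the second average by $\frac{L_\Phi}{N}\sum_i\|\omega_i^{t}-\omega_0^t\|$ through smoothness. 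The consensus drift is eliminated using $\omega_i^{t+1}-\omega_0^t=\tfrac1{\mu_1}(\lambda_i^{t+1}-\lambda_i^t)$ together with the dual-tracking identity and smoothness, which replaces $\|\omega_i^{t+1}-\omega_0^t\|$ by $\tfrac{1}{\mu_1}\big(L_\Phi\|\omega_i^{t+1}-\omega_i^t\|+L_{12}(\epsilon_i^{t+1}+\epsilon_i^t)\big)$ and thereby feeds the $L_\Phi^2$ and $\mu_1 L_\Phi$ contributions into the $\|\omega_i^{t+1}-\omega_i^t\|^2$ coefficient. Applying Young's inequality to every cross term, with constants tuned so the $\|\omega_0^{t+1}-\omega_0^t\|^2$ residuals collect to $-\mu_1/4$, and summing over $t$ gives the claim.

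I expect the main obstacle to be the constant bookkeeping rather than any conceptual step: one must choose the Young's-inequality weights so that the genuine $-\mu_1\|\omega_0^{t+1}-\omega_0^t\|^2$ negativity from the $1/\mu_1$-step structure survives both the $+\tfrac{L_\Phi}{2}$ smoothness penalty and the residuals absorbed from the two cross terms, leaving exactly $-\tfrac{\mu_1}{4}$ and the coefficient $\tfrac{\mu_1^2-2\mu_1 L_\Phi-4L_\Phi^2}{2\mu_1 N}$ on the local movements (which is why the statement implicitly needs $\mu_1\gtrsim L_\Phi$ for that coefficient to be positive). A secondary nuisance is index alignment: the residual of Lemma~\ref{lemma:boundedgrad} and the dual increment couple $\epsilon_i^{t+1}$ and $\epsilon_i^t$ across consecutive rounds, and the gradient decomposition uses the aggregated iterates while the movement uses the current round's updates, so some care is needed to make the per-round $\epsilon$ and movement bookkeeping telescope cleanly into $\sum_{t}\sum_i\epsilon_i^t$.
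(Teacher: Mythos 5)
There is a genuine gap, and it is structural rather than a matter of bookkeeping. Your plan derives everything from the per-round smoothness inequality $\Phi(\omega_0^{t+1})-\Phi(\omega_0^t)\le\langle\nabla\Phi(\omega_0^t),\omega_0^{t+1}-\omega_0^t\rangle+\tfrac{L_\Phi}{2}\|\omega_0^{t+1}-\omega_0^t\|^2$, and then manipulates the inner product. In such an argument the local movements $\|\omega_i^{t+1}-\omega_i^t\|^2$ can only ever appear as Young/Cauchy--Schwarz \emph{upper bounds} on cross terms and consensus drift, i.e.\ with \emph{positive} coefficients. But the lemma's coefficient $-\tfrac{\mu_1^2-2\mu_1L_\Phi-4L_\Phi^2}{2\mu_1N}$ is negative in the relevant regime $\mu_1\gg L_\Phi$: its dominant part is $-\tfrac{\mu_1}{2N}\sum_i\|\omega_i^{t+1}-\omega_i^t\|^2$, and nothing in your decomposition produces this sign. (The alignment $\omega_0^{t+1}-\omega_0^t\approx-\tfrac1{\mu_1}\nabla\Phi(\omega_0^t)$ yields negativity only in the \emph{server} movement.) In the paper this negative term has a completely different origin: the proof uses $\tfrac1N\sum_i\mathcal{L}_i^{\Phi}$ from (\ref{eqn:AL-Phi}) as the Lyapunov function. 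The local prox-minimization step decreases each $\mathcal{L}_i^{\Phi}$ by $\tfrac{\mu_1-2L_\Phi}{2}\|\omega_i^{t+1}-\omega_i^t\|^2$ up to $L_{12}^2\epsilon$ residuals (Lemma~\ref{Lphiiter}, via the quadratic penalty difference plus local stationarity), the dual ascent adds $\tfrac1{\mu_1}\|\lambda_i^{t+1}-\lambda_i^t\|^2$, which is absorbed by the dual-tracking bound of Lemma~\ref{deltalambdabound}, and the exact server minimization contributes $-\tfrac{\mu_1}{2}\|\omega_0^{t+1}-\omega_0^t\|^2$; $\Phi(\omega_0^t)$ enters only at the endpoints, through the initialization identity $\Phi(\omega_0^0)=\tfrac1N\sum_i\mathcal{L}_i^{\Phi}(\omega_i^0,\omega_0^0,\lambda_i^0)$ (valid because $\lambda_i^0=0$, $\omega_i^0=\omega_0^0$) and the lower bound of Lemma~\ref{lowerboundL} at $t=T$.

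A concrete example shows your per-round target inequality is actually false, so no tuning of Young's weights can rescue the plan. Take $N=2$, $f_1(\omega,\psi)=\tfrac{a}{2}(\omega-1)^2-\tfrac12\psi^2$, $f_2(\omega,\psi)=\tfrac{a}{2}(\omega+1)^2-\tfrac12\psi^2$, initialized at $\omega_0^0=0$ (the minimizer of $\Phi$), $\psi_0^0=0$, with exact local solves. By symmetry the server iterate never moves, $\Phi(\omega_0^{t+1})-\Phi(\omega_0^t)=0$ and all $\epsilon_i^t=0$, yet the local iterates $\omega_1^t=-\omega_2^t=\left(\tfrac{a}{a+\mu_1}\right)^{t}$ keep moving, so a per-round bound of the form $0\le-\tfrac{\mu_1^2-2\mu_1L_\Phi-4L_\Phi^2}{2\mu_1N}\sum_i\|\omega_i^{t+1}-\omega_i^t\|^2$ cannot hold for $\mu_1>(1+\sqrt5)L_\Phi$. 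The resolution is exactly the potential-function route: in that example the dual and penalty terms inside $\mathcal{L}_i^{\Phi}$ record the local motion that $\Phi(\omega_0^t)$ is blind to, which is why the paper telescopes the Lagrangian and converts to $\Phi$ only at $t=0$ and $t=T$. Your identities (dual tracking $\lambda_i^{t+1}=-\nabla_\omega f_i(\omega_i^{t+1},\psi_i^{t+1})$, the update $\lambda_i^{t+1}-\lambda_i^t=\mu_1(\omega_i^{t+1}-\omega_0^t)$ from (\ref{lambda_update}), and the collapse of (\ref{local_output})--(\ref{agg})) are all correct and are in fact the same ingredients the paper uses, but they must be fed into the Lagrangian descent, not into a direct descent of $\Phi$ along $\{\omega_0^t\}$; the weaker inequality your route would deliver (positive coefficient on local movements) also could not support the proof of Theorem~\ref{theorem:convergence}, which relies on that negative term to absorb $C_1\sum_t\bar\omega_t$ from Lemma~\ref{lemma:boundpsi}.
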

Appendix~\ref{appned:lemm-boundpsi} contains detailed proofs for this lemma. Following Lemma~\ref{lemma:boundedgrad}, we can obtain the upper bound of $\Phi(\omega_i^t)$, which is a function of $\epsilon^t_i$.
Then by substituting the results of Lemma~\ref{lemma:boundpsi} and Lemma~\ref{lemma:descentPhi} back to the upper bound of $\Phi(\omega_i^t)$, and following a series of algebraic manipulations, we finally obtain the convergence theorem. Please refer to  Appendix~\ref{subsec:theorem} and Appendix~\ref{append:error-bound} for the proof details of this theorem.
\begin{theorem} 
\label{theorem:convergence}
(Convergence on $\Phi(\omega)$) There exist positive constants $E_1$, $E_2$ $E_3$,  $E_4$, and $\epsilon$ such that after $T$ rounds of global updates, the upper bound for the accumulate descent of $\Phi(\omega_0^t)$ is given by
\begin{equation}
\label{converged}
\begin{split}
    &\Phi(\omega_0^0) - \Psi(\omega_0^T) \leq - E_1 \sum_{t=1}^T\left\|\nabla \Phi\left(\omega_{0}^{t}\right)\right\|^2 + E_4T\epsilon\\
    &+ E_2 \sum_{i=1}^N\left\|\psi_i^{0} - \psi^{\star}(\omega_i^{0})\right\|^2 + E_3 \sum_{i=1}^N\sum_{\substack{j=1 \\ j\neq i}}^N\left\|\omega^0_i - \omega^{0}_j\right\|^2.
\end{split}
\end{equation}
In particular, this implies $\limsup_{t\to\infty}\left\|\nabla \Phi\left(\omega_{0}^{t}\right)\right\| = O(\epsilon)$.
\end{theorem}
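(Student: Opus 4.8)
The plan is to fuse the three lemmas into a single accumulated descent inequality in which $\sum_{t=1}^{T}\|\nabla\Phi(\omega_0^t)\|^2$ is controlled by the total decrease of $\Phi$ plus an irreducible per-round local-training error, exactly the shape of (\ref{converged}). First I would express the true gradient $\nabla\Phi(\omega_0^t)$ through the local augmented-Lagrangian gradients. Differentiating (\ref{eqn:AL-Phi-main}) and using the dual update (\ref{lambda_update}) gives the cancellation $\nabla_{\omega_i}\mathcal{L}_i^{\Phi}=\nabla\Phi_i(\omega_i^{t+1})+\lambda_i^t+\mu_1(\omega_i^{t+1}-\omega_0^t)=\nabla\Phi_i(\omega_i^{t+1})+\lambda_i^{t+1}$, so $\nabla\Phi_i(\omega_i^{t+1})=\nabla_{\omega_i}\mathcal{L}_i^{\Phi}-\lambda_i^{t+1}$. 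Averaging over $i$ and inserting $\nabla\Phi_i(\omega_0^t)$ via $L_{\Phi}$-smoothness of $\Phi_i$ (a consequence of Assumptions \ref{assump:Lipshictz-gradients}--\ref{assump:Stronglyconcave}) yields
\[
\|\nabla\Phi(\omega_0^t)\| \leq \frac{L_{12}}{N}\sum_i\epsilon_i^t + \Bigl\|\frac{1}{N}\sum_i\lambda_i^{t+1}\Bigr\| + \frac{L_\Phi}{N}\sum_i\|\omega_0^t-\omega_i^{t+1}\|,
\]
where the first term is precisely what Lemma \ref{lemma:boundedgrad} controls (up to the one-round index shift).

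Next I would eliminate the dual average. Feeding the local output (\ref{local_output}) with $\eta_3=1$ into the aggregation (\ref{agg}) gives the identity $\frac{1}{N}\sum_i\lambda_i^{t+1}=\mu_1\bigl(\omega_0^{t+1}-\frac{1}{N}\sum_i\omega_i^{t+1}\bigr)$, so the dual term collapses to $\mu_1$ times a consensus residual, bounded by $\tfrac{\mu_1}{N}\sum_i\|\omega_0^{t+1}-\omega_i^{t+1}\|$. Every surviving term is therefore either a $\psi$-optimality gap $\epsilon_i^t$ or a drift/consensus quantity $\|\omega_i^{t+1}-\omega_0^t\|$, $\|\omega_i^{t+1}-\omega_0^{t+1}\|$. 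I would then square, sum over $t\in[T]$, and substitute the remaining lemmas: the squared gaps $\sum_t\sum_i(\epsilon_i^t)^2$ are handled by Lemma \ref{lemma:boundpsi} after converting $\psi^\star(\omega_i^t)$ to $\psi^\star(\omega_i^{t-1})$ with the $\kappa$-Lipschitz Assumption \ref{assump:Lipsch}, producing $C_1\sum_t\|\omega_i^{t-1}-\omega_i^t\|^2$ plus the two initial-condition terms that become $E_2,E_3$; the drift terms $\sum_t\|\omega_i^{t+1}-\omega_i^t\|^2$ and $\sum_t\|\omega_0^{t+1}-\omega_0^t\|^2$ are exactly the negative quantities in Lemma \ref{lemma:descentPhi}, so rearranging that lemma bounds them by the total decrease $\Phi(\omega_0^0)-\Phi(\omega_0^T)$ (finite since $\Phi$ is bounded below) plus the linear residual $\sum_t\sum_i\epsilon_i^t$. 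That linear residual, through the finite per-round local-training gradient of Assumption \ref{assump:sufficient_loc}, is what accumulates into the $E_4T\epsilon$ term.

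The hard part will be the constant bookkeeping that makes all of this close with a strictly positive coefficient $E_1$ on $\sum_t\|\nabla\Phi(\omega_0^t)\|^2$. Two tensions must be resolved simultaneously. First, Lemma \ref{lemma:descentPhi} returns the drift terms with a negative sign only when $\mu_1^2-2\mu_1 L_\Phi-4L_\Phi^2>0$, so $\mu_1$ must be chosen large enough that the drift generated on the right-hand side of the gradient bound is strictly dominated by the descent budget. Second, the gradient bound is \emph{linear} in $\epsilon_i^t$ while Lemma \ref{lemma:boundpsi} is \emph{quadratic}, so a Young/Cauchy--Schwarz split is required to absorb the quadratic portion into the drift budget while leaving only an irreducible per-round constant $\epsilon$; getting the split constant small enough without destroying the positivity of $E_1$ is the delicate step. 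Once the coefficients are balanced so that inequality (\ref{converged}) holds, dividing by $T$ and sending $T\to\infty$ forces the average squared gradient to $O(\epsilon)$, which yields $\limsup_{t\to\infty}\|\nabla\Phi(\omega_0^t)\|=O(\epsilon)$ and completes the proof.
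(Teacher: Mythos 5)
You are correct, and your route is essentially the paper's own: bound $\|\nabla \Phi(\omega_0^t)\|$ through the local stationarity and dual-update identities together with Lemma~\ref{lemma:boundedgrad}, square and sum over $t$, control the accumulated $\psi$-gaps with Lemma~\ref{lemma:boundpsi}, rearrange Lemma~\ref{lemma:descentPhi} so its negative drift terms absorb everything (which forces $\mu_1^2-2\mu_1 L_{\Phi}-4L_{\Phi}^2>0$ and the constant balancing you describe for $E_1>0$), and let the residual gradients of Assumption~\ref{assump:sufficient_loc} accumulate into the $E_4 T\epsilon$ term before dividing by $T$. The only step you leave implicit --- converting your consensus residuals $\|\omega_i^{t+1}-\omega_0^t\|$ and $\|\omega_i^{t+1}-\omega_0^{t+1}\|$ into the increments $\|\omega_i^{t+1}-\omega_i^t\|$ and $\|\omega_0^{t+1}-\omega_0^t\|$ that Lemma~\ref{lemma:descentPhi} actually penalizes --- is precisely the paper's appendix step based on the per-client identity $\mu_1(\omega_i^{t+1}-\omega_0^t)=\nabla_{\omega} f_i(\omega_i^t,\psi_i^t)-\nabla_{\omega} f_i(\omega_i^{t+1},\psi_i^{t+1})$ plus Lipschitzness, i.e., the same dual cancellation you already invoke, so your plan closes.
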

\textbf{Remark}~~
Because the l.h.s. of (\ref{converged}) admits a lower bound, so is the r.h.s. As a result, $\limsup_{T\to\infty}\sum_{t=1}^T\left\|\nabla \Phi\left(\omega_{0}^{t}\right)\right\|^2$ must converge, which implies that $\Phi\left(\omega_{0}^{t}\right)$ converges to a $\epsilon$-stationary point.
More specifically,
Dividing both sides of (\ref{converged}) by $T$ and taking $\limsup_{T\to\infty}$, we obtain
\begin{align*}
    \limsup_{T\to\infty}\frac{\sum_{t=1}^T\left\|\nabla \Phi \left(\omega_{0}^{t}\right)\right\|^2}{T} \leq 
    \frac{E_4\epsilon}{E_1},
\end{align*}
which implies that $\sum_{t=1}^T\left\|\nabla \Phi \left(\omega_{0}^{t}\right)\right\|^2 = O(T\epsilon)$ and for sufficiently large $t$, $\left\|\nabla \Phi \left(\omega_{0}^{t}\right)\right\|^2 = O(\epsilon)$. In the special case of $\epsilon = 0$, i.e., strict optimality is obtained at each local client, this result shows that the limiting point is a stationary point. We provide the detailed proof in  Appendix~\ref{append:error-bound}.

\section{Related Work}
FedSGD~\cite{mcmahan2017communication} suggests  one-step local SGD update and then sends the gradients to the server for global update.  
It mimics the centralized SGD training.
The high communication overhead, however, prevents it from being used in practice.
FedAvg~\cite{mcmahan2017communication} is a generalization of FedSGD, proposing multiple-step local SGD  per communication round, with a good accuracy-to-communication trade-off.
However, its accuracy suffers in non-i.i.d.\ scenarios.
Several works have been developed to address non-optimal behavior on non-i.i.d data, including 
 FedProx~\cite{li2018federated}, FedPD~\cite{zhang2020fedpd}, SCAFFOLD~\cite{karimireddy2020scaffold},  FedNova~\cite{wang2020tackling},
  and FedDyn~\cite{acar2021federated}. These works aim to minimize a sum of non-identical functions, where each function can only be accessed locally. 
Moreover, Auto-FedAvg ~\cite{xia2021auto} adjusted weights at the aggregation during training.
These results cannot be directly applied to federated saddle point optimization problems, such as the federated adversarial domain adaptation, which seeks a federated minimax optimization.
 
There are several works~\cite{rasouli2020fedgan, reisizadeh2020robust, deng2021local} that bring the communication efficiency to minimax optimization based on FedAvg, such as the federated GAN~\cite{rasouli2020fedgan}  that uses a binary classification function to distinguish between real and generated data. Because there is no label-imbalanced problem across the training functions of local clients, it works well for distributed GAN learning. This type of FedAvgGDA is sensitive to data imbalance in the federated domain adaptation problem, as demonstrated later in the experiment. Furthermore, the federated robust optimization~\cite{reisizadeh2020robust, deng2021local} differs from the federated adversarial domain adaptation problem in that the set of maximization variables is separable across local client-side functions. These strategies, however, are unsuitable for federated domain adaptation due to structural differences. 
Note that the FLRA in~\citet{reisizadeh2020robust} corresponds to  FedAvgGDA, and its convergence analysis
cannot be directly borrowed to our case because each local client in our study is optimized on the augmented Lagrangian local function rather than the pure local function.
 


\section{Experiments}
\label{Experiments}
On real-world data sets, FedMM is evaluated with  three representative domain adaptation methods:  DANN~\cite{ganin2015unsupervised},  MDD~\cite{zhang2019bridging}, and  CDAN~\cite{long2017conditional}. Please refer to Section \ref{sect:preliminary} for more information on these methods.
Our experiments are primarily concerned with
the training communication overhead and the test accuracy on the label-free target data set\footnote{The code is available at https://github.com/yshen22/fedmm}.

\indent\textbf{Datasets and Source/Target Data Distribution:}
\textbf{MNISTM} \cite{ganin2016domain}
is a dataset that demonstrates domain adaptation by combining MNIST with randomly colored image patches from the BSD500 dataset  \cite{arbelaez2010contour}.
55,000 labeled images from the source domain and 55,000 unlabelled images from the target domain are used for training; and 55,000 images from the target domain are used for testing.

\noindent\textbf{Office-31} \cite{saenko2010adapting} is a typical domain adaptation dataset made up of three distinct domains with 31 categories in each domain. 
There are 4,652 images in total from 31 classes.
We will focus on the worst-case scenario (as analyzed in Fig~\ref{fig:oldmethod}), where the source and target domain data are allocated to different clients for all datasets.



\indent\textbf{Benchmarks:}
We compare FedMM with the FedSGDA in~\cite{peng2019federated}.
Furthermore,  
most existing federated optimizers were designed to solve the loss function minimization, which is unsuitable for adversarial domain adaptation. To make a fair comparison, we extend FedAvg
\cite{mcmahan2017communication} and FedProx \cite{li2018federated} with recently proposed minimax optimizer~\cite{lin2020gradient}
and refer to them  as FedAvgGDA and $\text{FedProxGDA}$ with details explained in Section~\ref{GDA} and summarized 
 in Algorithm~\ref{alg:FedAvg+}.


\begin{figure*}[htb!]
\centering
\begin{subfigure}{.33\textwidth}
  \centering
\includegraphics[width=1.0\linewidth]{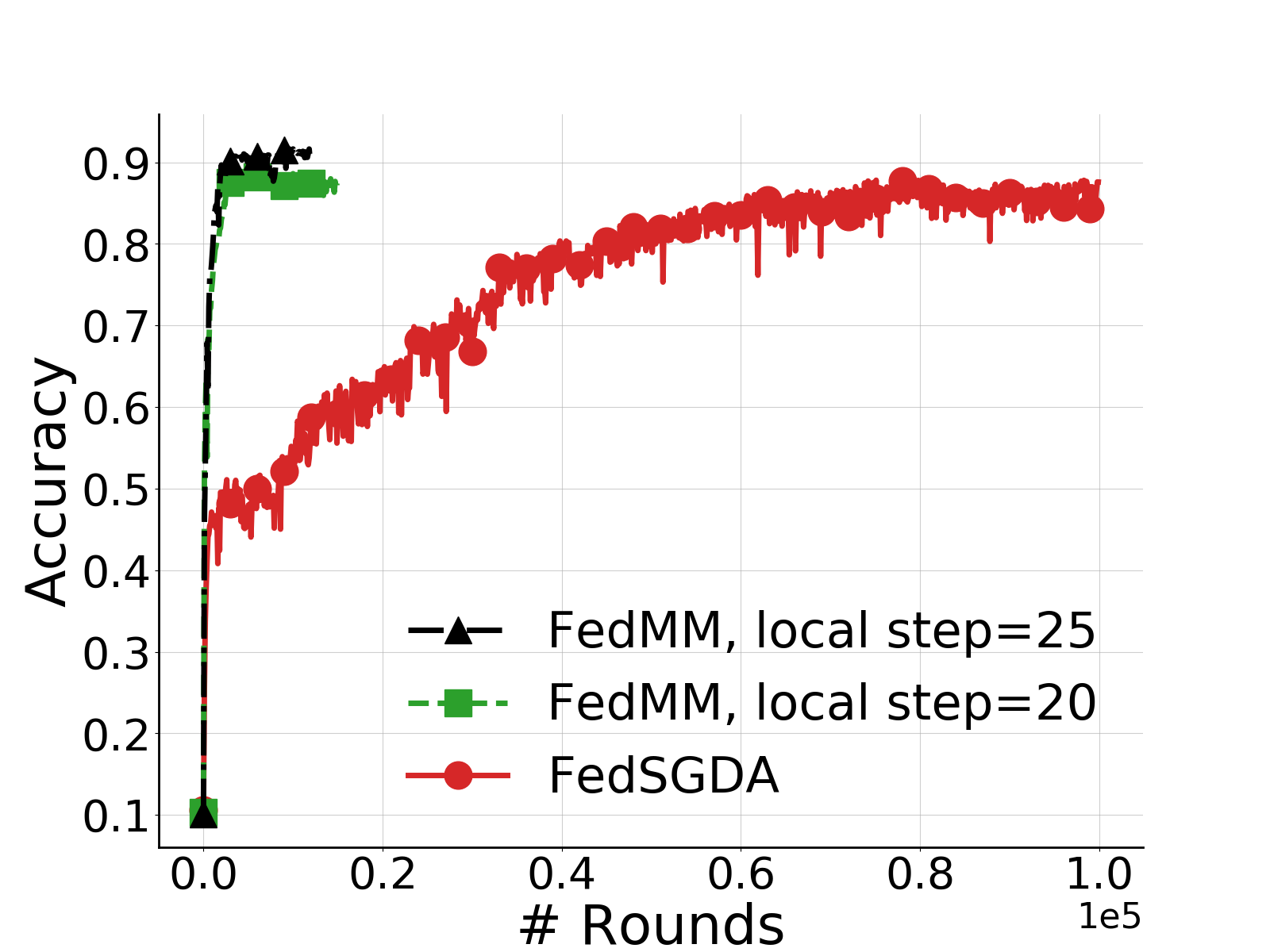}
  \caption{DANN}
  \label{fig:rate_DANN}
\end{subfigure}%
\begin{subfigure}{.33\textwidth}
  \centering
\includegraphics[width=1.0\linewidth]{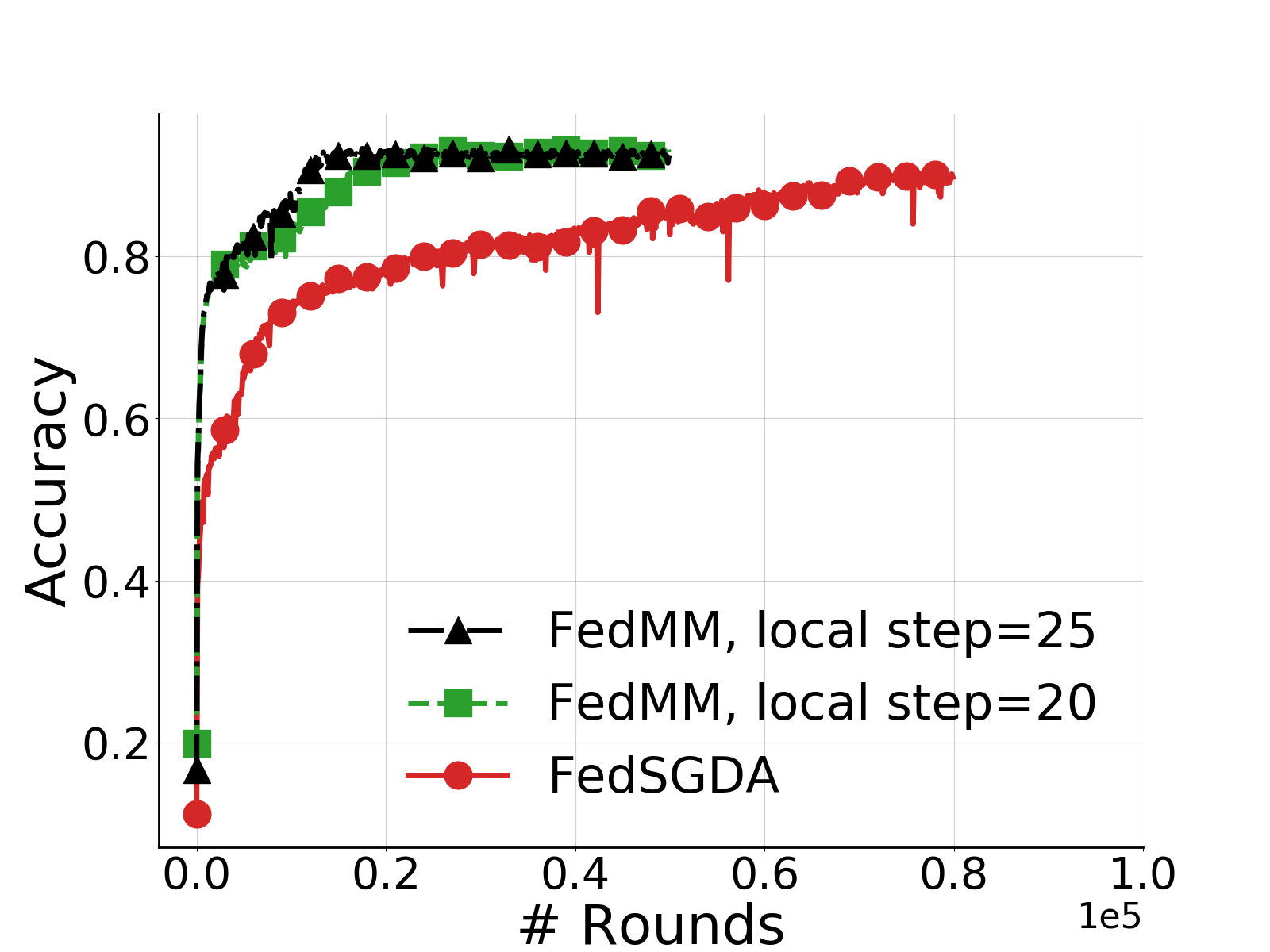}
  \caption{MDD}
  \label{fig:rate_MDD}
\end{subfigure}
\begin{subfigure}{.33\textwidth}
  \centering
  \includegraphics[width=1.0\linewidth]{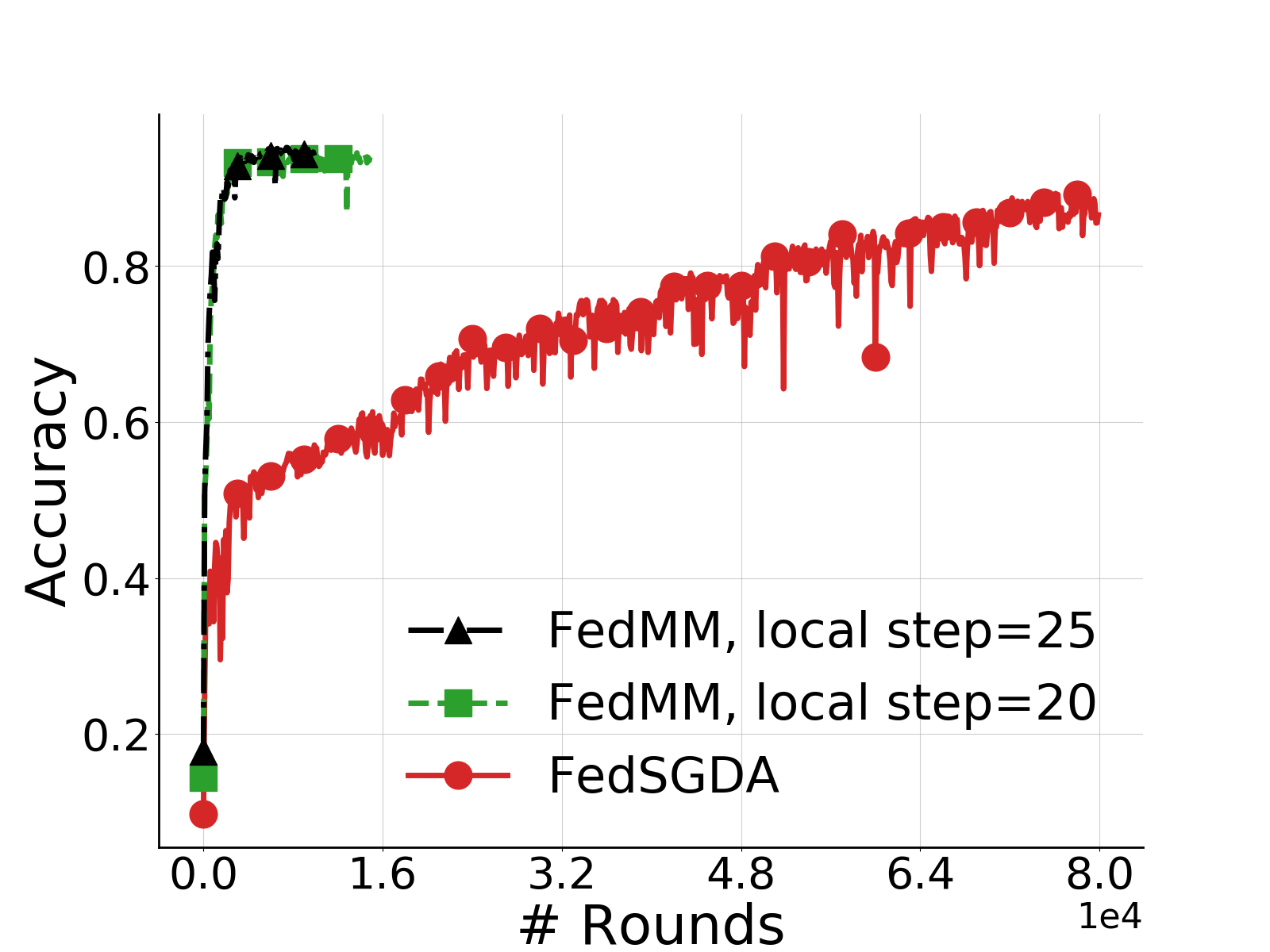}
  \caption{CDAN}
  \label{fig:rate_CDAN}
\end{subfigure}%
\caption{Comparisons of convergence for the proposed FedMM with FedSGDA~\cite{peng2019federated} based on different adversarial domain networks, i.e., DANN, MDD, and CDAN.}
\label{fig:rate}
\end{figure*}
\begin{figure*}[htb!]
\centering
\begin{subfigure}{.33\textwidth}
  \centering
  \includegraphics[width=1.0\linewidth]{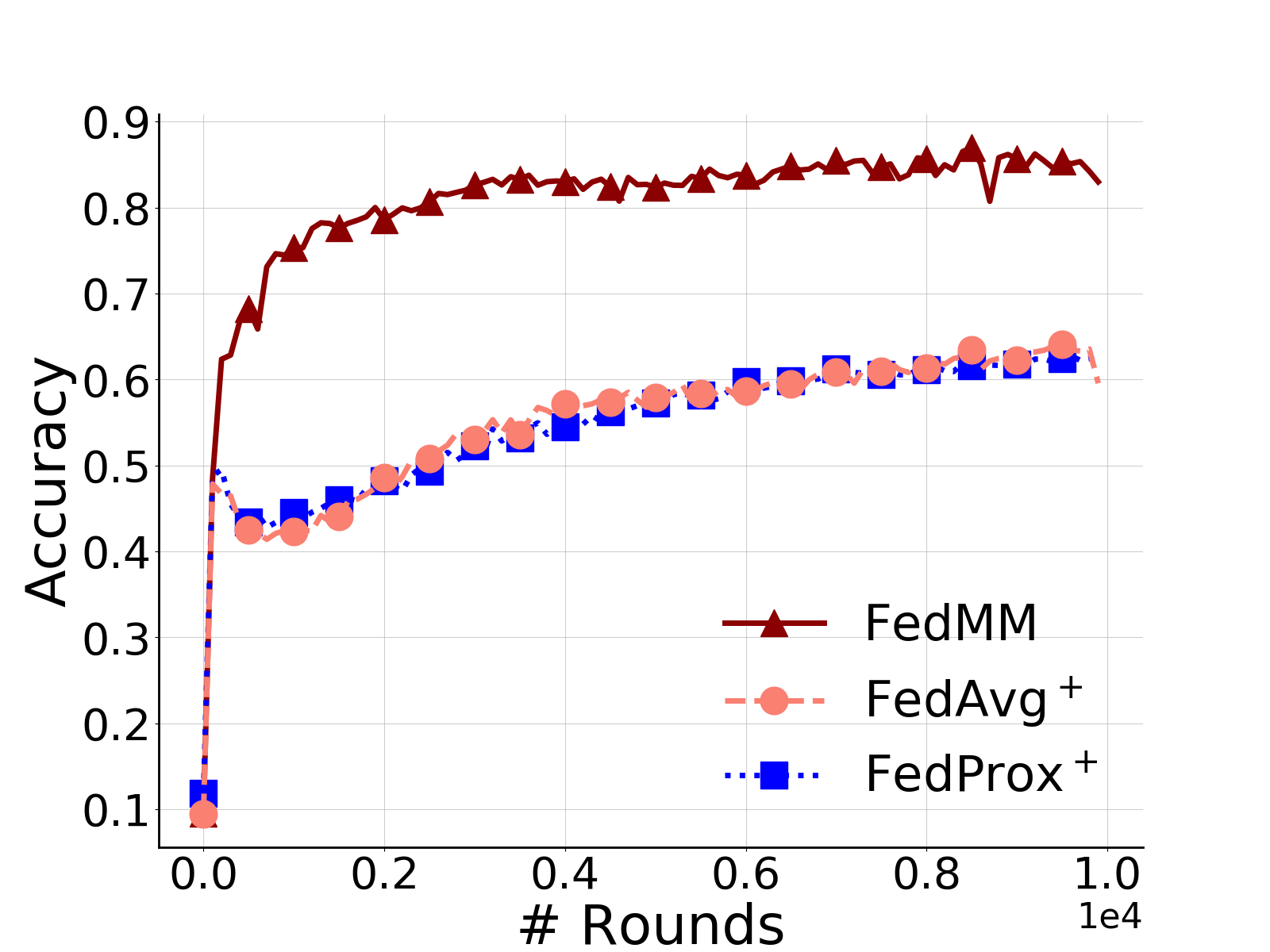}
  \caption{DANN: 1-source/1-target clients}
  \label{fig:acc_DANN}
\end{subfigure}%
\begin{subfigure}{.33\textwidth}
  \centering
  \includegraphics[width=1.0\linewidth]{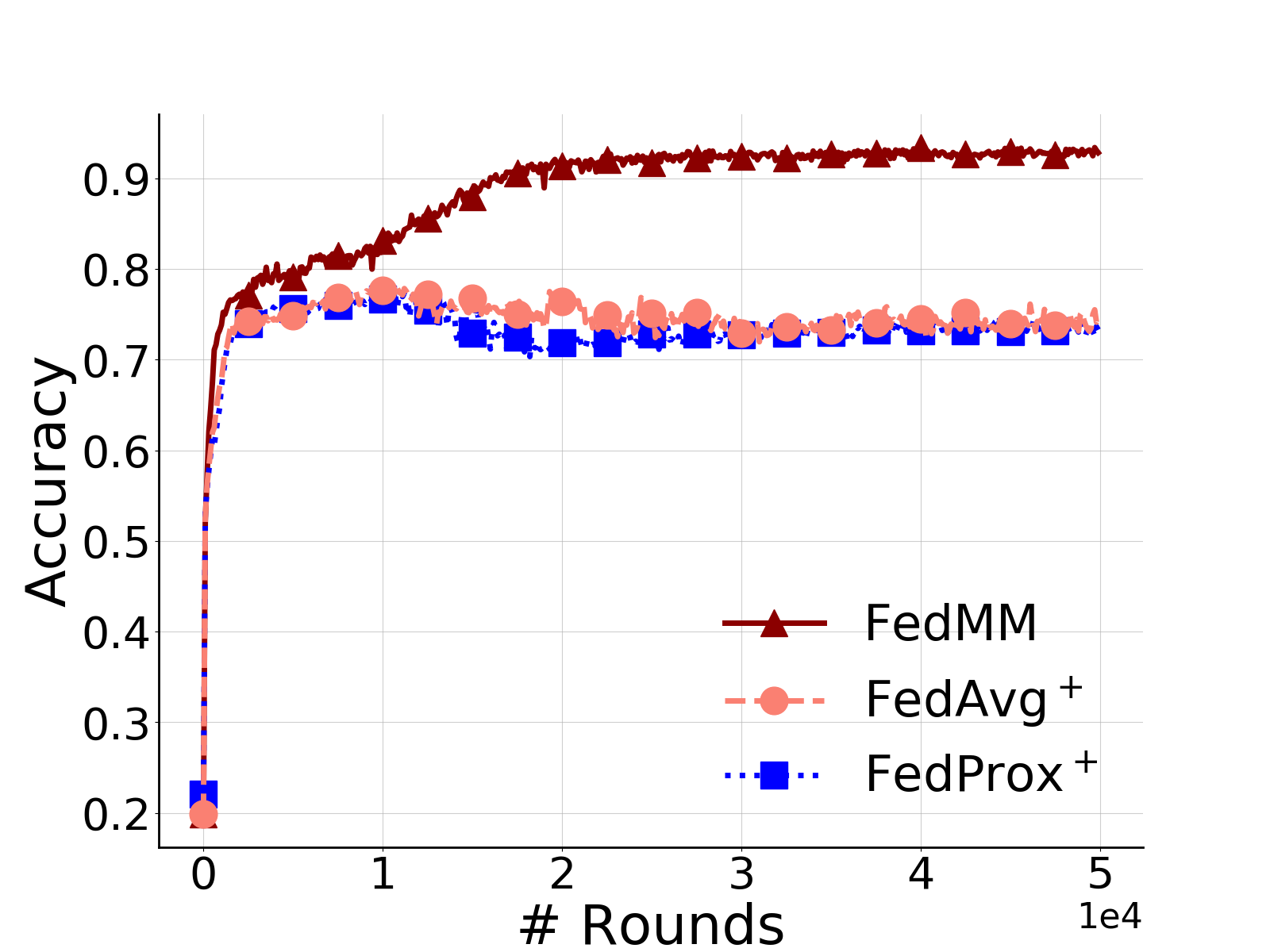}
  \caption{MDD: 1-source/1-target clients}
  \label{fig:acc_MDD}
\end{subfigure}
\begin{subfigure}{.33\textwidth}
  \centering
    \includegraphics[width=1.0\linewidth]{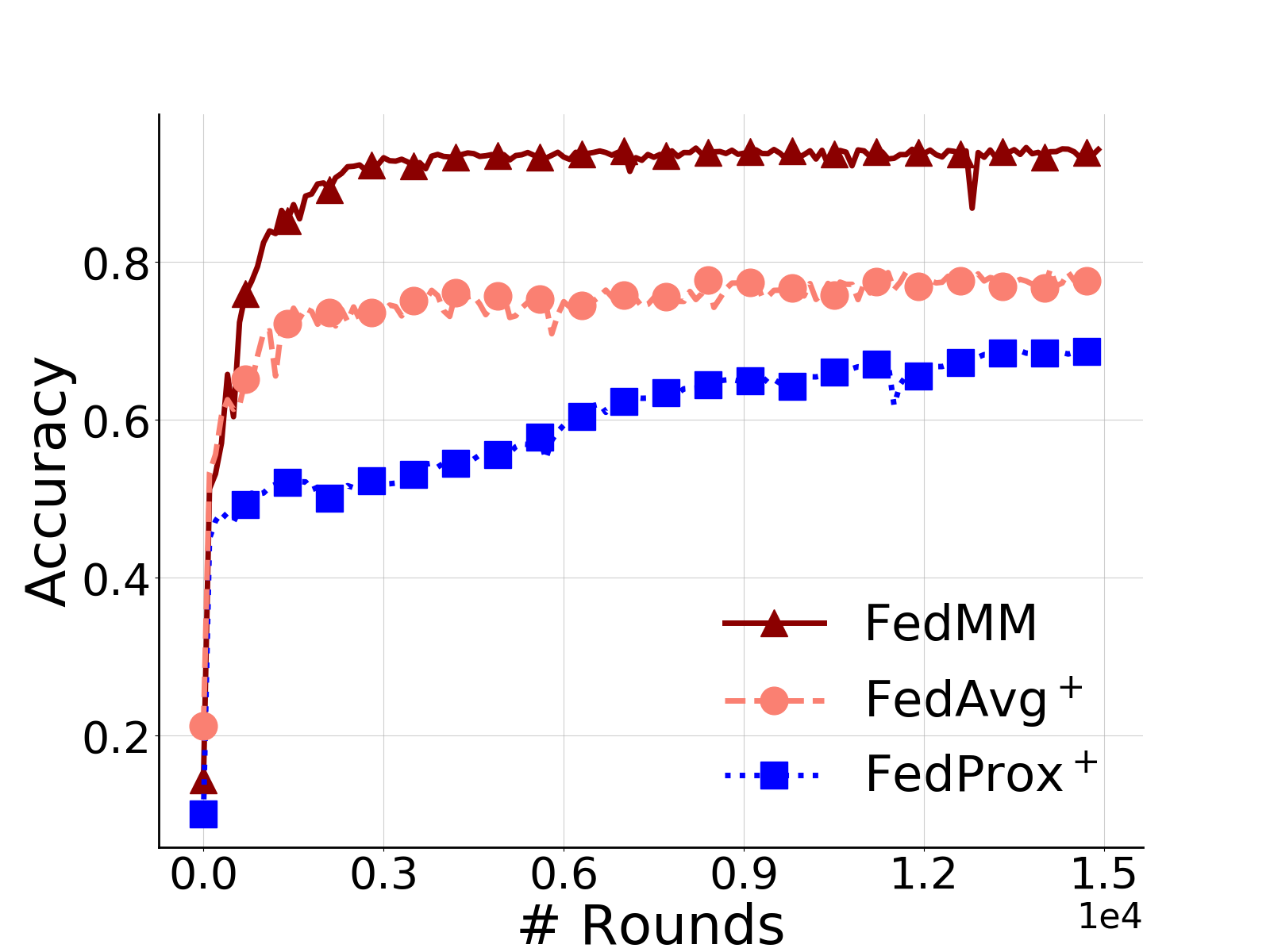}
  \caption{CDAN: 1-source/1-target clients}
  \label{fig:acc-CDAN}
\end{subfigure}%

\begin{subfigure}{.33\textwidth}
  \centering
  \includegraphics[width=1.0\linewidth]{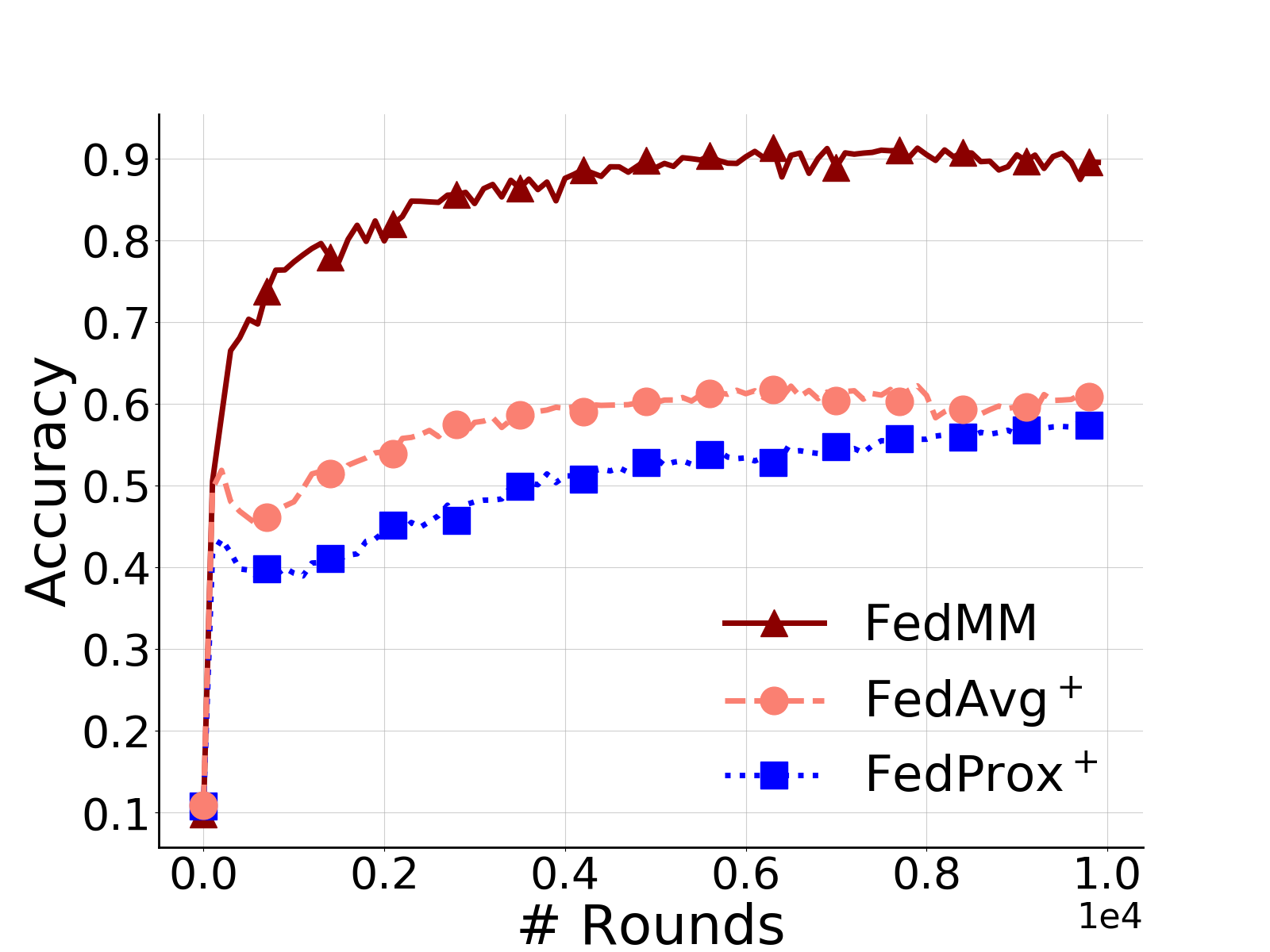}
  \caption{DANN: 1-source/2-target clients}
  \label{fig:acc_DANN_s1t2}
\end{subfigure}%
\begin{subfigure}{.33\textwidth}
  \centering
  \includegraphics[width=1.0\linewidth]{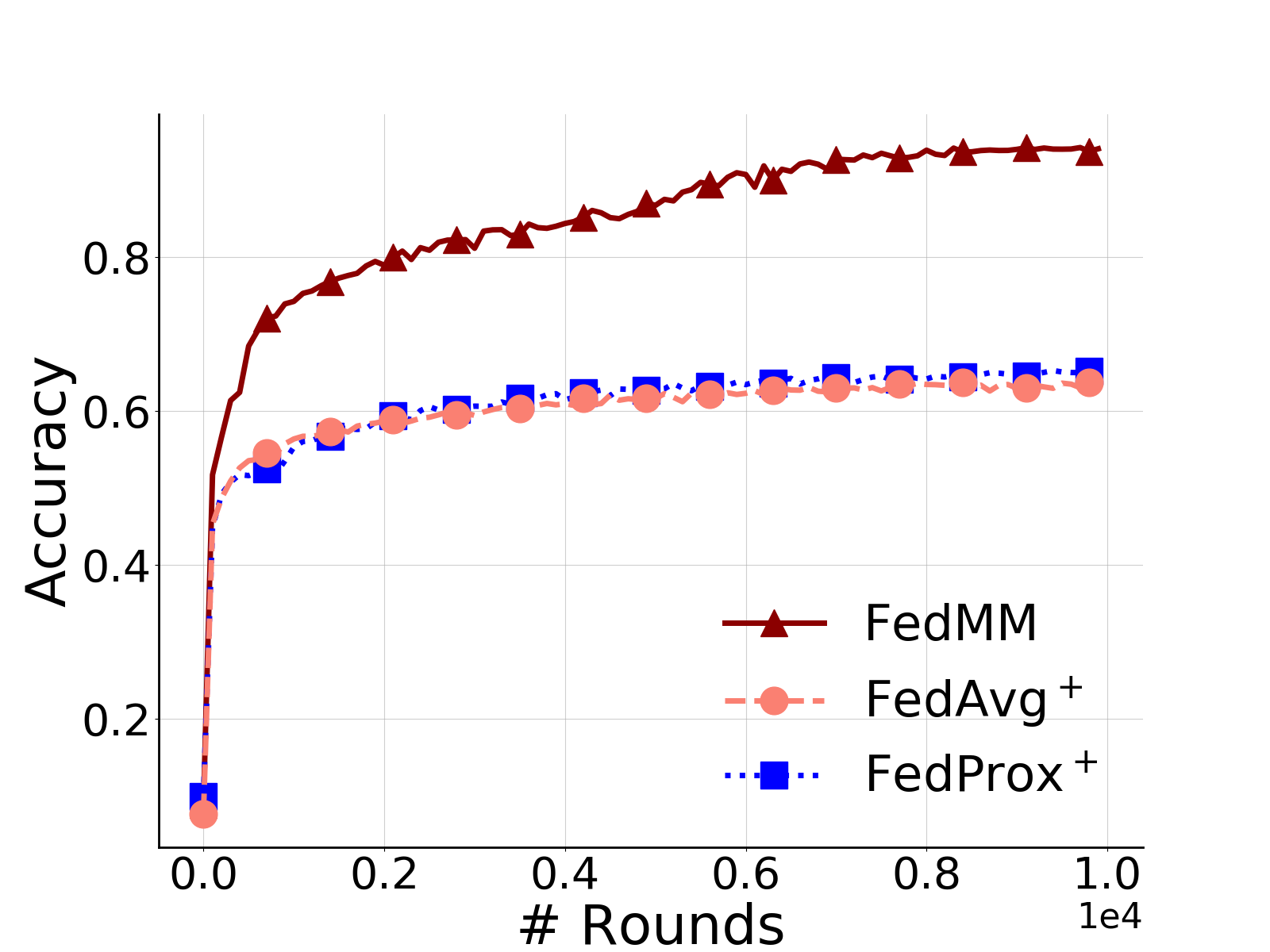}
  \caption{MDD: 1-source/2-target clients}
  \label{fig:acc-CDAN}
\end{subfigure}%
\begin{subfigure}{.33\textwidth}
  \centering
  \includegraphics[width=1.0\linewidth]{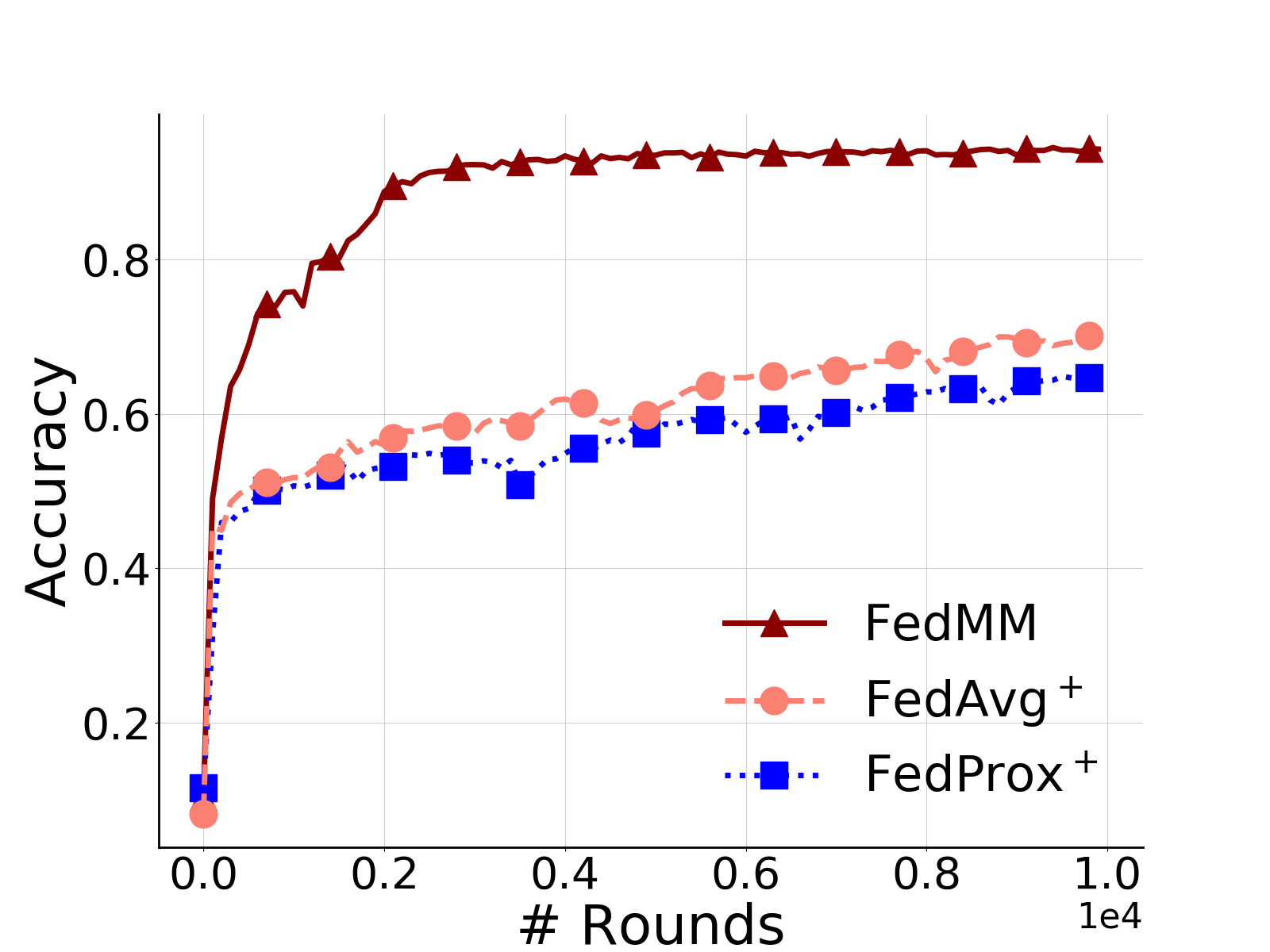}
  \caption{CDAN: 1-source/2-target clients}
  \label{fig:acc_MDD}
\end{subfigure}
\begin{subfigure}{.33\textwidth}
  \centering
  \includegraphics[width=1.0\linewidth]{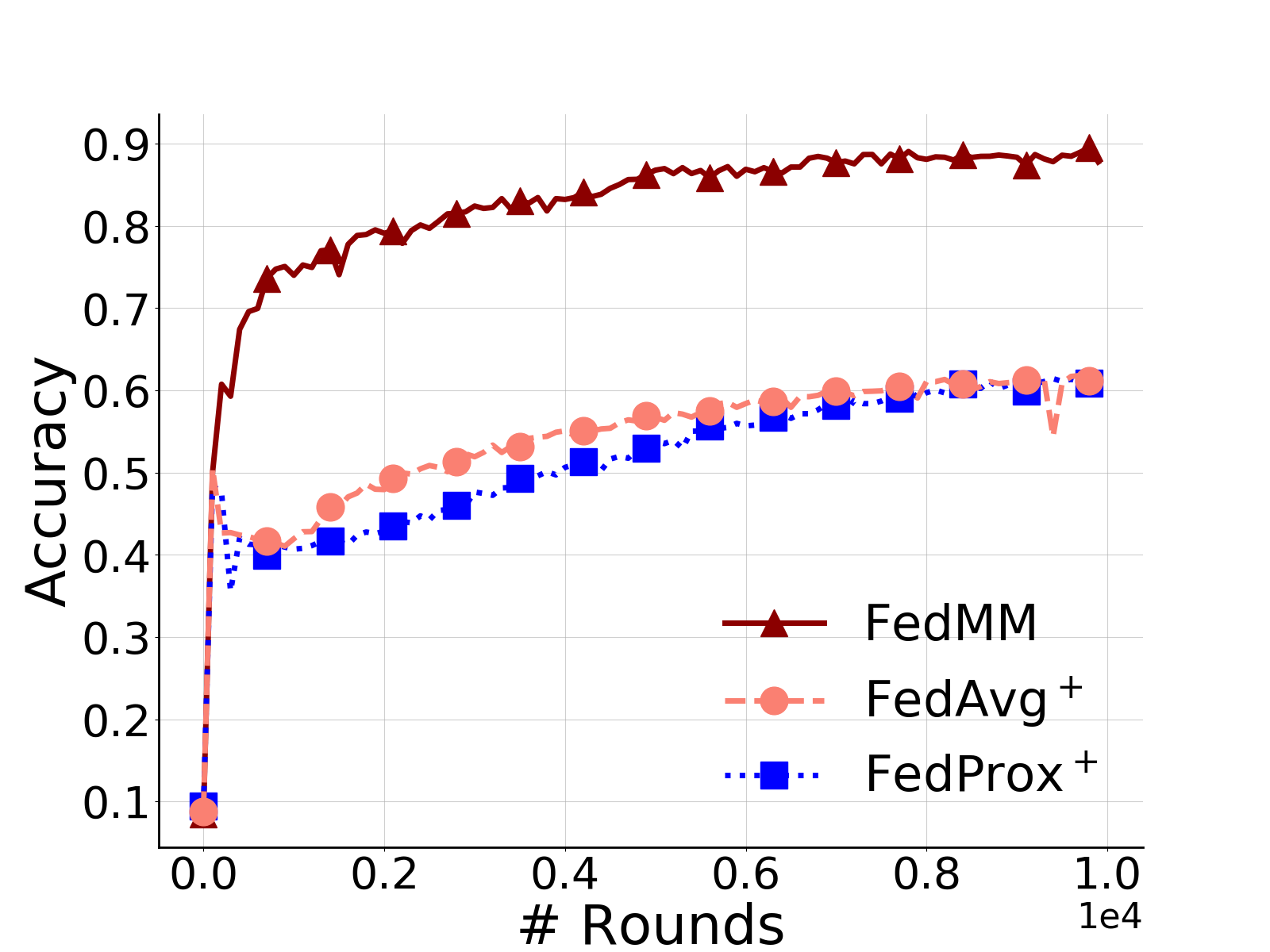}
  \caption{DANN: 2-source/1-target clients}
  \label{fig:acc_DANN_s2t1}
\end{subfigure}
\begin{subfigure}{.33\textwidth}
  \centering
  \includegraphics[width=1.0\linewidth]{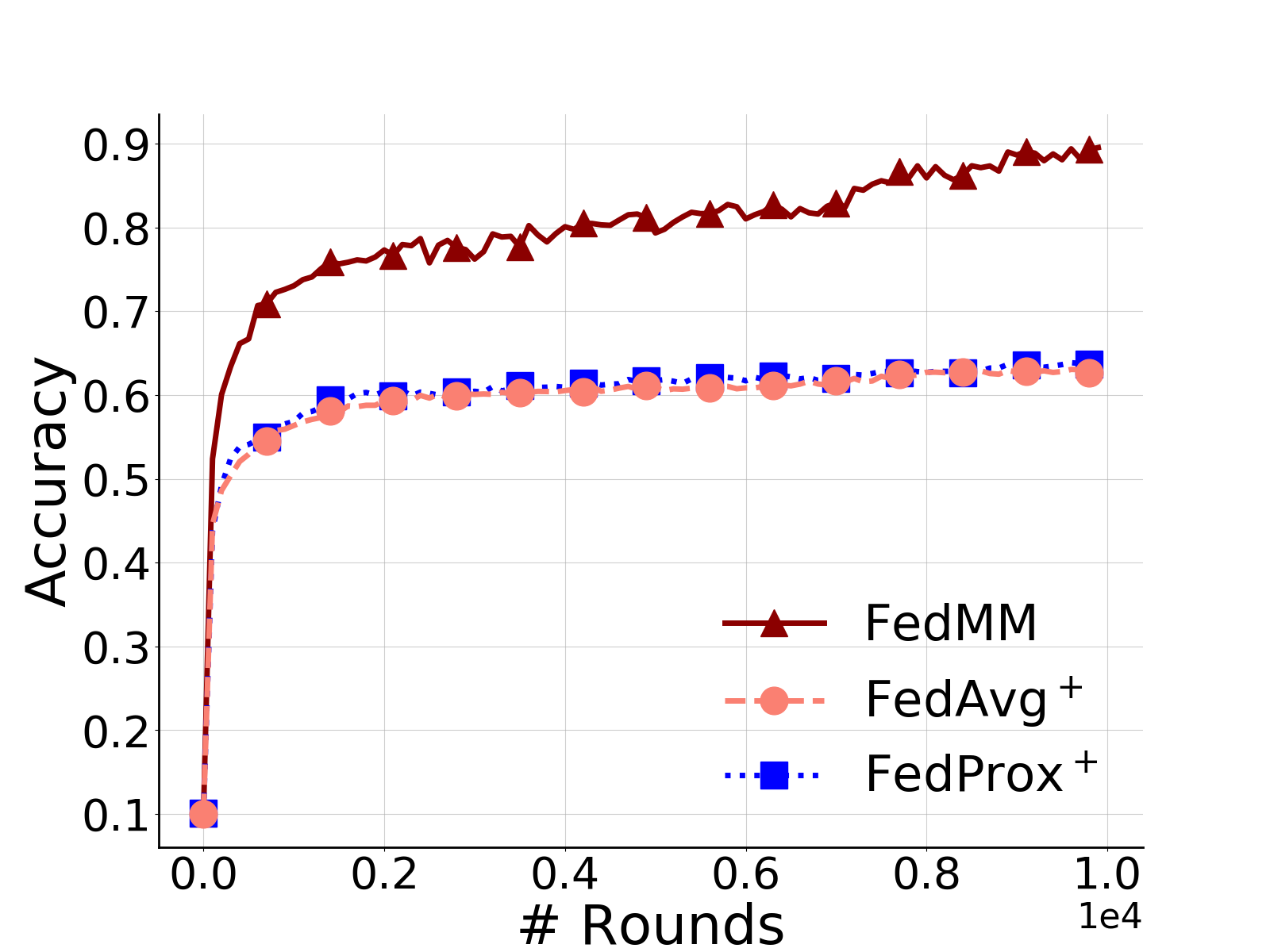}
  \caption{MDD: 2-source/1-target clients}
  \label{fig:acc_DANN}
\end{subfigure}%
\begin{subfigure}{.33\textwidth}
  \centering
  \includegraphics[width=1.0\linewidth]{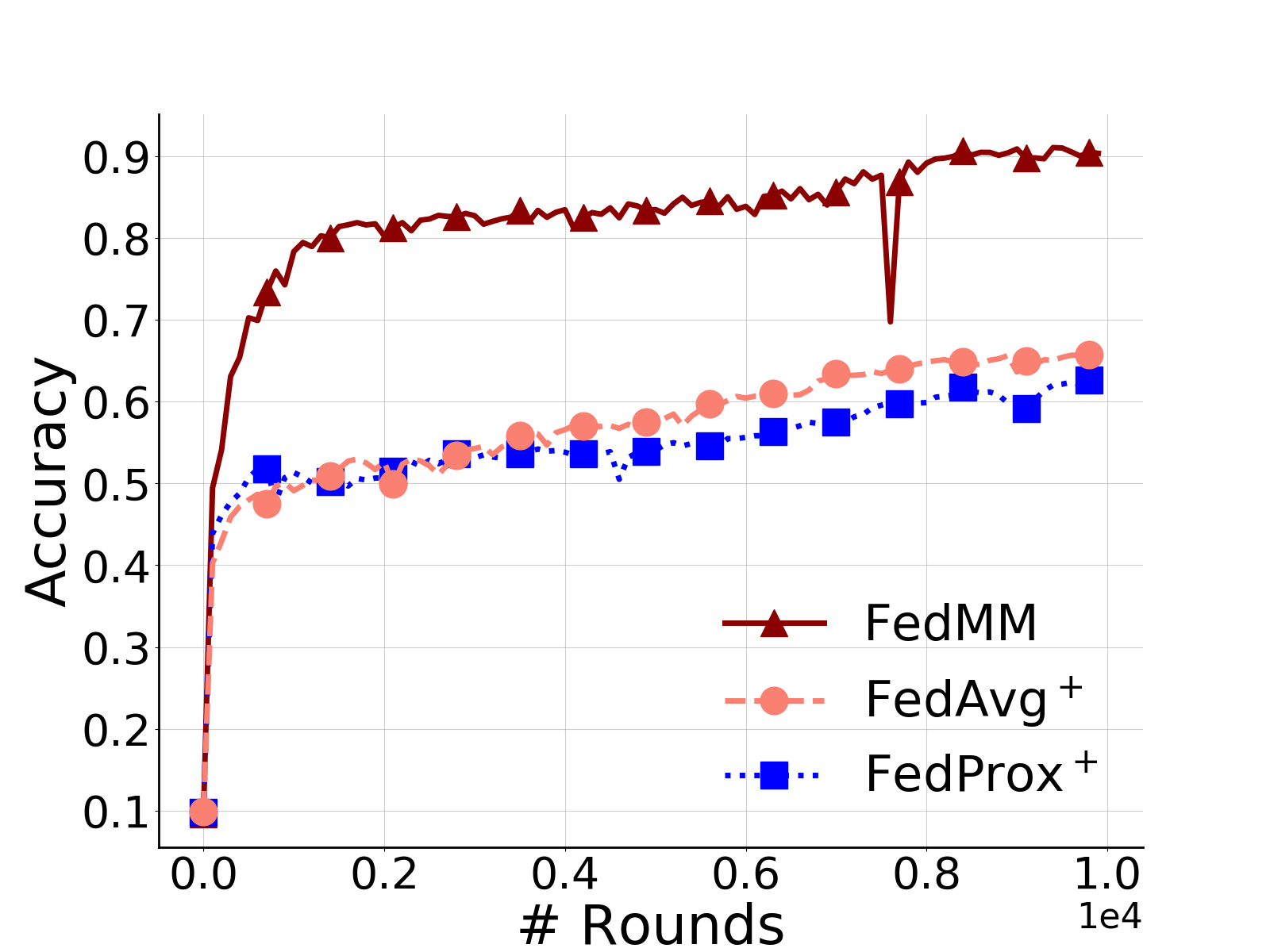}
  \caption{CDAN: 2-source/1-target clients}
  \label{fig:acc-CDAN}
\end{subfigure}%
\caption{Comparisons of convergence for the proposed FedMM with FedAvgGDA and FedProxGDA. }
\label{fig:acc}
\end{figure*}

\begin{table*}[t]
\centering
\caption{Unsupervised target domain test accuracy ($\%$) on Office-31.}
\begin{adjustbox}{width=0.8\textwidth}
\begin{tabular}{|c|c|c|c|c|c|c|c|c|c|}
\hline 
\multirow{2}{*}{} & \multicolumn{3}{c|}{FedAvgGDA} & \multicolumn{3}{c|}{{\text{FedSGDA}}} & \multicolumn{3}{c|}{{\textbf{FedMM}}}
\tabularnewline
\cline{2-4} \cline{5-7} \cline{8-10}
 & {DANN} & {MDD}& {CDAN} & {DANN} & {MDD}& {CDAN} & {DANN} & {MDD} & {CDAN}\tabularnewline
\hline 
\hline 
{A$\to$W} & {60.1} & {73.2} & {62.9} & {60.3} & {76.4} & 55.3 & {65.5} & {{79.7}} & 64.7 \tabularnewline
\hline 
{D$\to$W} & {86.1} & {93.6} & 86.8& {84.9} & {94.7} & 83.4 & {89.6} & {95.9} & 93.4 \tabularnewline
\hline 
{W$\to$ D} & {93.6} & {97.8} & {94.2} & {93.7} & {98.3} & 94.0 & {96.7} & {98.5}& 94.0 \tabularnewline
\hline 
{A$\to$ D} & {63.5} & {72.1} & 65.1 & {65.3} & {75.3} & 67.7 & {67.8} & {78.8} & 66.9\tabularnewline
\hline 
{D$\to$ A} & {33.7} & {47.9} & 40.3 & {36.9} & {49.2} & 47.1 & {44.3} & {60.3} & 51.4 \tabularnewline
\hline 
{W$\to$ A} & {40.5} & {51.7} & 45.5 & {40.3} & {52.6} & 43.3 & {48.7} & 55.5 & {59.6} \tabularnewline
\hline 
{Average} & 62.9 & 72.7 & 65.8 & 63.5 & {74.4} & {65.1} & \textbf{68.7} & \textbf{78.1} & \textbf{{71.7}}
\tabularnewline
\hline
\end{tabular}\hfill{}
\end{adjustbox}
\label{table_acc}
\end{table*}


\begin{table*}[htb!]
\centering
\caption{Communication rounds ($\times 100$) for training on Office-31.}
\begin{adjustbox}{width=0.78\textwidth}
\begin{tabular}{|c|c|c|c|c|c|c|c|c|c|}
\hline 
\multirow{2}{*}{} & \multicolumn{3}{c|}{FedAvgGDA} & \multicolumn{3}{c|}{{\text{FedSGDA}}} & \multicolumn{3}{c|}{{\textbf{FedMM}}}
\tabularnewline
\cline{2-4} \cline{5-7} \cline{8-10}
 & {DANN} & {MDD}& {CDAN} & {DANN} & {MDD}& {CDAN} & {DANN} & {MDD} & {CDAN}\tabularnewline
\hline 
\hline 
{A$\to$ W} & {10} & {31} & {17} & {59} & {255} & 78 & {13} & {23} & 29 \tabularnewline
\hline 
{D$\to$ W} & {13} & {27} & 13& {40} & {188} & 49 & {15} & 18 & 9 \tabularnewline
\hline 
{W$\to$ D} & {8} & {11} & {7} & {29} & {92} & 16 & {14} & 19& 10 \tabularnewline
\hline 
{A$\to$ D} & {7} & {22} & 21 & {56} & {400} & 13 & {7.5} & 22 & 32\tabularnewline
\hline 
{D$\to$ A} & {24} & {31} & 34 & {48} & {300} & 95 & {39} & 19 & 17 \tabularnewline
\hline 
{W$\to$ A} & {18} & {18} & 14 & {88} & {321} & 85 & {25} & 15 & 13 \tabularnewline
\hline 
{Average} & {{13.3}} & {{23.3}} & {{17.7}} & {{53.3}} & {259.3} & {{65.1}} & \textbf{{18.9}} & \textbf{19.3} & \textbf{{18.3}}
\tabularnewline
\hline
\end{tabular}\hfill{}
\end{adjustbox}
\label{table_round}
\end{table*}
\indent\textbf{Networks:}
On MNISTM, we use a three-layer  convolutional network as the invariant feature extractor. On Office-31, we use the pre-trained MobileNetV2 \cite{sandler2018mobilenetv2} on ImageNet \cite{russakovsky2015imagenet} as the feature extractor.
Both the task classifier and the domain classifier are  two-layer fully-connected neural networks. 

\indent\textbf{Hyper-parameters:}
The dual variables, i.e., $\{\beta_i, \lambda_i \}$,  are set to ${0}$ at the start of training, $\mu_1$ and $\mu_2$ are set to $1.0$ during all training settings. During local training, the learning rate $\eta_1$ is fixed to $0.01$. In the experiment of training from scratch on MNISTM, $\eta_2=0.01$, $\mu_1=1$ and $\mu_2=1$. In the experiment of training from pre-trained model  on Office-31, we set customized layer-wise learning rate. In details, the learning rate of feature extractors is set as $0.0005$, and $\eta_2=0.1, 0.04$ {and} $0.1 $  for MDD, DANN and CDAN methods, respectivly. Besides, $\nu=0.1$ for MDD, $\nu=0.25$ for DANN and CDAN methods. The rest learning rate are all fixed at $0.01$. For exponential decay parameter, we set $\eta_3=1.0001^{-1}$ for MDD, $\eta_3=1.0002^{-1}$ for CDAN and $\eta_3=1.0005^{-1}$ for DANN.  

\indent\textbf{Data Distribution:} Fig.~\ref{fig:oldmethod} has already demonstrated that as the degree of inter-client domain shift (label imbalance) increases, federated learning performance degrades significantly.
As a result, we will focus on the worst-case scenario, in which the source domain data and target domain data are allocated to different clients separately, i.e., $p=1.0$, to verify the effectiveness of FedMM in the experiments.

\indent\textbf{Performance of Training from Scratch}

We begin by examining the convergence property of our proposed FedMM algorithm when it is trained from scratch on MNISTM.

Fig.~\ref{fig:rate} compares the global communication rounds of our proposed FedMM to FedSGDA. 
We compare FedMM with $M_i= 20$, and $M_i= 25$. Thanks to the local multi-steps minimax optimization at each client, FedMM has a quick convergence rate saving more than  $90\%$ communication rounds compared to FedSGDA to achieve similar test accuracy. Furthermore, the FedMM convergence rate can be improved by increasing the local steps of primal and dual ascent descent.

In Fig.~\ref{fig:acc}, the convergence property of our proposed FedMM is further compared with other representative federated training algorithms with multiple local descent and ascent updates, namely FedAvgGDA and FedProxGDA with $M_i= 20$ for different number of source/target clients settings. While both the FedAvgGDA and FedProxGDA algorithms converge, FedMM consistently outperforms them in terms of test accuracy for all three widely used domain adaptation methods. The results clearly show that FedMM has a superior test accuracy for training from scratch with more than $20\%$ accuracy improvement.

This enormous improvement is understandable given that the FedMM is intended to bridge the gap between the distributed local model and the global model through distributed consensus in the minimax optimization context.
Because of the unique structure of federated adversarial domain adaptation, when the source and target data are distributed across different clients, model drift becomes a severe problem (validated in Fig.~\ref{fig:oldmethod}), which did not occur in any previous federated learning problems in the literature.

\indent\textbf{Performance of Training from Pre-trained Models}

We further examine how the proposed FedMM algorithm performs with the pre-trained MobileNetV2 as a feature extractor. In this part, all the experiments are conducted on Office-31.
Test accuracy and training communication rounds using FedMM, FedAvgGDA, and FedSGDA for commonly used domain adaptation methods are included in Table~\ref{table_acc} and Table~\ref{table_round}, respectively.
Note that
FedMM's performance improvement  is reduced when compared to the training from scratch case in Fig.~\ref{fig:acc}. This is because feature extractor parameters in this pre-trained models have approached optimal values. Nevertheless,  
we take the best average results of FedAvgGDA and FedSGDA (averaged over all tasks) for DANN, MDD, and CDAN  and compare them to FedMM. As highlighted in Table~\ref{table_acc}, FedMM improves by $8.2\%$, $5.4\%$, and $9.0\%$ for DANN, MDD and CDAN, respectively.
Besides, both FedAvgGDA and FedMM cost much less  communication rounds than FedSGDA. However,  FedMM does not have a significant communication advantage over FedAvgGDA due to the additional dual variables.

\section{Conclusions}
We propose FedMM for federated adversarial domain adaptation in this paper. FedMM is designed specifically for federated minimax optimizations with non-separable minimization and maximization variables, as well as clients with uneven label class distributions. We show that  FedMM ensures convergence for clients by using both supervised source domain data and unsupervised target domain data. Experiments show that FedMM outperforms state-of-the-art algorithms in terms of communication rounds and test accuracy on various benchmark datasets.
It outperforms other methods by around a $20\%$ improvement in accuracy over the same communication rounds when training from scratch, and it also
clearly outperforms other methods when training from pre-trained models.

\newpage
\nocite{langley00}


\bibliographystyle{mlsys2022}

\appendix
\onecolumn
\section{Appendix: Convergence Analysis for FedMM in Algorithm~\ref{Alg:FedMM}}
Because the proof is lengthy, we begin by demonstrating convergence to the stationary point by assuming sufficient local training is obtained to ensure local convergence (Section~\ref{sec:property}-Section~\ref{subsec:theorem}).
This assumption is further removed with the results being extended to the convergence proof with bounded local convergence error, as shown in Section~\ref{append:error-bound}.


\subsection{Notation}
\label{sec:notations}
Let $\psi^{\star}(\omega) 
$ be the optimal value of $\psi$ for the global objective function $f$ for $\omega$, which is given by
\begin{equation}
\label{psistar}
\psi^{\star}(\omega) 
\triangleq \arg\max_{\psi}f(\omega, \psi).
\end{equation}
Then  (\ref{FL}) is reformulated as 
    $\min_{\omega}f(\omega, \psi) 
    = \min_{\omega} \frac{1}{N}\sum_{i}\Phi_i(\omega)$
with  
\begin{equation}
\Phi_i(\omega) \triangleq f_i(\omega, \psi^{\star}(\omega)),\quad \text{and}
\quad
\Phi(\omega) \triangleq \frac{1}{N}\sum_{i=1}^N\Phi_i(\omega) .\end{equation}
In this way, we equivalently reformulate the problem
as 
$\min _{{\omega}}\left\{\Phi(\omega)=\max _{{\phi}} f({\omega}, {\phi})\right\}$.
We further define the augmented Lagrange of $\Phi_i$ by
\begin{equation}
\label{eqn:AL-Phi}
    \mathcal{L}_i^{\Phi}(\omega_i^t, \omega_0^t, \lambda_i^t) 
    =  
    \Phi_i(\omega_i^t)+\langle \lambda_i^t, \omega_i^t-\omega_0^t\rangle + \frac{\mu_1}{2}\left\|\omega_i^t- \omega_0\right\|^2.
\end{equation}

In Table~\ref{table:natations}, some notations are further defined to represent  some commonly used computations in the proof. 
\setlength{\arrayrulewidth}{0.4mm}
\setlength{\tabcolsep}{6pt}
\renewcommand{\arraystretch}{1.5}
\begin{table}[htb!]
\caption{
Main notations}
\label{table:natations}
\centering
{
\centering

\begin{tabular}
{ |p{5.5cm}|p{5.5cm}|  }
\hline
Notation & Explanation  \\
\hline
$ \bar\psi_t = a_t = \sum_{i = 1}^N\sum_{j \neq i} \left\|\psi_i^t - \psi_j^t\right\|^2 $ & Average deviation among  $\psi_i^t$'s.\\
$\widetilde\psi_t = b_t = \sum_{i = 1}^N \|\psi_i^t - \psi_i^{t-1}\|^2$ &  Average update increment  for $\psi_i^{t}$.  \\
$\epsilon_t=\sum_{i = 1}^N \left\|\psi_i^t - \psi^{\star}(\omega_i^t)\right\|^2$ & Average distance to optimum for  $\psi_i^t$.  \\
$\widetilde\omega_t = d_t  = \sum_{i = 1}^N \left\|\omega_i^t - \omega_i^{t-1}\right\|^2$ & Average update increment for  $\omega^t_i$.  \\
$\bar \omega_t=e_t = \sum_{i = 1}^N\sum_{j \neq i} \left\|\omega_i^t - \omega_j^t\right\|^2$ & Average deviation among $\omega^t_i$.  \\

\hline
\end{tabular}
}
\end{table}

\subsection{Assumptions}
\label{sec:assumptions}
\begin{assumption}
\label{assump:Lipshictz-gradients}
(Lipshictz continuous gradients)
For all $i\in [N]$, there exists positive constants $L_{11}$, $L_{12}$, $L_{21}$, and $L_{22}$ such that for any $\omega, \omega^\prime\in \mathbb R^{d_1}$, and $\psi, \psi^\prime\in \mathbb R^{d_2}$, we have
\begin{align*}
    &\left\|\nabla_{\omega}f_i(\omega, \psi) - \nabla_{\omega}f_i(\omega^\prime, \psi) \right\| \leq 
    L_{11}\left\|\omega - \omega^\prime\right\|,  \quad \left\|\nabla_{\omega}f_i(\omega, \psi) - \nabla_{\omega}f_i(\omega, \psi^\prime) \right\| \leq 
    L_{12}\left\|\psi - \psi^\prime\right\|, \\
    &\left\|\nabla_{\psi}f_i(\omega, \psi) - \nabla_{\psi}f_i(\omega^\prime, \psi) \right\| \leq 
    L_{21}\left\|\omega - \omega^\prime\right\|,  \quad \left\|\nabla_{\psi}f_i(\omega, \psi) - \nabla_{\psi}f_i(\omega, \psi^\prime) \right\| \leq L_{22}\left\|\psi - \psi^\prime\right\|. \\
\end{align*}
\end{assumption}
\begin{assumption}
\label{assump:Stronglyconcave}
(Strongly concave $f_i(\cdot, \psi_i)$) For all $i\in [N]$,  $f_i(\omega, \psi)$ are strongly concave on $\psi$, i.e., there exists constant $B>0$ such that for any $\omega\in \mathbb R^{d_1}$, and $\psi, \psi^\prime\in \mathbb R^{d_2}$, we have
\begin{equation}
  \left\langle \nabla_\psi f_i(\omega, \psi) -\nabla_\psi f_i(\omega, \psi^\prime), \psi - \psi^\prime \right\rangle  \leq  -B\left\|\psi - \psi^\prime\right\|^2.
\end{equation}
\end{assumption}

\begin{assumption}
\label{assump:sufficient_loc}
(Sufficient local training) For all $i\in [N]$, after $M_i$-step update, the gradients w.r.t. $\omega_i$ and $\psi_i$ are finite and denoted by
\begin{equation}
\label{localerror}
    \|\nabla_{\omega} \mathcal{L}_i(\omega_i^t, \psi_i^t)\| = e_{\omega, i}^t,  \quad
    \|\nabla_{\psi} \mathcal{L}_i(\omega_i^t, \psi_i^t)\| = e_{\psi, i}^t, \quad\forall  t\in [T]. 
\end{equation}
\end{assumption}
We set $\eta_3 = 1$ for the analysis without loss of generality.
\begin{assumption}
\label{assump:Lipsch}
The $\kappa$-Lipschitz continuity of  $\psi^{\star}(\omega)$, i.e.,
\begin{equation}
     \left\|  \psi^{\star}\left(\omega_i^{t-1}\right) - \psi^{\star}\left(\omega_i^{t}\right)\right\| \leq \kappa \left\|  \omega_i^{t-1} - \omega_i^{t}\right\|, \quad\forall t\in [T].
\end{equation}
\end{assumption}

\subsection{Basic Properties of FedMM}
\label{sec:property}


\begin{proposition}
\label{prop:psi_iter}
In Algorithm~\ref{Alg:FedMM},
the following update of $\sum_{i=1}^N \psi_i^t$ is valid for all $t\in [T]$:
\begin{equation}
\label{psi_iter}
\sum_{i=1}^N \psi_i^{t+1} = \sum_{i=1}^N \psi_i^t + \frac{1}{\mu_2}\sum_{i=1}^N  \nabla_{\psi} f_i\left(\omega^{t+1}_i, \psi^{t+1}_i\right) . 
\end{equation}
\end{proposition}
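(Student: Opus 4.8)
The plan is to obtain the identity by purely algebraic manipulation of the three update rules that govern $\psi$ in Algorithm~\ref{Alg:FedMM} — the dual-ascent step, the server aggregation, and the stationarity forced by exact local convergence — eliminating the dual variables $\beta_i$ along the way. First I would rewrite the dual update $\beta_i^{t+1} = \beta_i^t + \mu_2(\psi_i^{t+1} - \psi_0^t)$ as $\psi_i^{t+1} = \psi_0^t + \tfrac{1}{\mu_2}(\beta_i^{t+1} - \beta_i^t)$ and sum over $i\in[N]$, yielding $\sum_i \psi_i^{t+1} = N\psi_0^t + \tfrac{1}{\mu_2}\sum_i(\beta_i^{t+1}-\beta_i^t)$.

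Next I would bring in the server aggregation from the preceding round. Since $\eta_3=1$ is fixed just before Assumption~\ref{assump:Lipsch}, the global update gives $\psi_0^t = \tfrac1N\sum_j \psi_j^{(t-1)+}$ with $\psi_j^{(t-1)+} = \psi_j^t + \tfrac1{\mu_2}\beta_j^t$, hence the invariant $N\psi_0^t = \sum_j \psi_j^t + \tfrac1{\mu_2}\sum_j\beta_j^t$. Substituting this into the previous display makes the $\beta_i^t$ contributions telescope away and leaves the intermediate identity $\sum_i \psi_i^{t+1} = \sum_i \psi_i^t + \tfrac1{\mu_2}\sum_i \beta_i^{t+1}$.

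The crux is then to identify $\beta_i^{t+1}$ with $\nabla_\psi f_i(\omega_i^{t+1},\psi_i^{t+1})$, and this is exactly where the sufficient-local-training hypothesis enters. In this part of the analysis Assumption~\ref{assump:sufficient_loc} is used in its exact form ($e_{\psi,i}^t = 0$), so the returned local iterate is a stationary point of the local augmented Lagrangian: $\nabla_\psi f_i(\omega_i^{t+1},\psi_i^{t+1}) - \beta_i^t - \mu_2(\psi_i^{t+1}-\psi_0^t) = 0$. Combining this with the dual relation $\mu_2(\psi_i^{t+1}-\psi_0^t) = \beta_i^{t+1}-\beta_i^t$ cancels the $\beta_i^t$ terms and produces $\beta_i^{t+1} = \nabla_\psi f_i(\omega_i^{t+1},\psi_i^{t+1})$; inserting this into the intermediate identity completes the proof.

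I expect no analytic difficulty here — there are no inequalities or estimates, only exact substitutions — so the only thing to watch is index bookkeeping: verifying that the aggregation invariant is invoked for round $t-1$ rather than $t$, and checking the base case $t=0$, where the zero initialization of the dual variables together with the common initialization $\psi_i^0 = \psi_0^0$ makes $N\psi_0^0 = \sum_i \psi_i^0$ hold trivially. It is also worth stating explicitly that ``sufficient local training'' is read as exact stationarity of $\mathcal{L}_i$ in $\psi$, since the later part of the appendix relaxes precisely this to the bounded-error case.
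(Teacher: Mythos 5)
Your proof is correct, and it rests on the same two facts as the paper's proof --- exact stationarity of the local augmented Lagrangian (Assumption~\ref{assump:sufficient_loc} applied to the update (\ref{ascent})) and the aggregation identity $N\psi_0^t = \sum_i \psi_i^t + \tfrac{1}{\mu_2}\sum_i \beta_i^t$ obtained from (\ref{local_output}) and (\ref{agg}) with $\eta_3=1$ --- but it is organized along a recognizably different route. The paper's proof of Proposition~\ref{prop:psi_iter} never uses the dual update (\ref{beta_update}): it writes the stationarity condition $\nabla_\psi f_i(\omega_i^{t+1},\psi_i^{t+1}) - \mu_2(\psi_i^{t+1}-\psi_0^t) - \beta_i^t = 0$, sums over $i$, and substitutes the aggregation identity once, which simultaneously eliminates $\psi_0^t$ and $\sum_i\beta_i^t$ and yields the claim in a single step. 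You instead sum the dual updates, telescope against the aggregation identity to reach $\sum_i\psi_i^{t+1} = \sum_i\psi_i^t + \tfrac{1}{\mu_2}\sum_i\beta_i^{t+1}$, and only then invoke stationarity combined with (\ref{beta_update}) to identify $\beta_i^{t+1} = \nabla_\psi f_i(\omega_i^{t+1},\psi_i^{t+1})$; that identification is precisely the relation (\ref{betar+1}) that the paper derives later, inside the proof of Proposition~\ref{prop:psi_step}, so your argument in effect factors Proposition~\ref{prop:psi_iter} through that invariant. The paper's version is shorter; yours buys a cleaner conceptual picture --- the dual variables exactly track the local gradients, so the sum of iterates moves by the averaged gradient --- and this invariant view is exactly the structure that gets perturbed in the bounded-error extension of Appendix~\ref{append:error-bound}, where the same bookkeeping goes through with the residuals $e_{\psi,i}$ carried along. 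Your two housekeeping checks, which the paper leaves implicit (the aggregation identity for $\psi_0^t$ is the round-$(t-1)$ global update, and the $t=0$ case needs the conventions $\beta_i^0=0$ and $\psi_i^0=\psi_0^0$), are also correct and worth stating.
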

\begin{proof}
Applying Assumption~\ref{assump:sufficient_loc} to (\ref{ascent}) and replace $\widehat\omega^m$ as well as $\widehat\psi^m$ with $\omega^{t+1}$ as well as $\widehat\psi^{t+1}$ respectively, we obtain:
\begin{equation}
\label{local_optimal1}
\nabla_{\psi_i}\mathcal{L}_i(\omega_i^{t+1},\psi_i^{t+1}) =
\nabla_{\psi} f_i\left(\omega_i^{t+1}, \psi_i^{t+1}\right) - \mu_2 \left(\psi_i^{t+1} - \psi_0^{t}\right) - \beta_i^t =0.
\end{equation}
By further making a summation for all $i\in [N]$,  we have
\begin{equation}
\label{local_optimal3}
\sum_{i=1}^N
\nabla_{\psi} f_i\left(\omega_i^{t+1}, \psi_i^{t+1}\right) - 
\mu_2 \sum_{i=1}^N\psi_i^{t+1}
+\mu_2 N\psi_0^{t}
=
\sum_{i=1}^N \beta_i^t.
\end{equation}

In addition,
we get the following equation by substituting (\ref{local_output}) into (\ref{agg}), which is given by
\begin{equation}
    \psi_0^t =  \frac{1}{N} \sum_{i=1}^N \left(\psi_i^t + \frac{1}{\mu_2}\beta_i^t\right).
\end{equation}
We finally prove (\ref{psi_iter}) by substituting the above equation into
 (\ref{local_optimal3}).
\end{proof}
\begin{proposition}
\label{prop:omega_iter}
In Algorithm~\ref{Alg:FedMM},
the following update of $\sum_{i=1}^N \omega_i^t$ is valid for all $t\in [T]$:
\begin{equation}
\label{omega_iter}
\sum_{i=1}^N \omega_i^{t+1} = \sum_{i=1}^N \omega_i^t - \frac{1}{\mu_1}\sum_{i=1}^N  \nabla_{\omega} f_i\left(\omega^{t+1}_i, \psi^{t+1}_i\right) . 
\end{equation}
\end{proposition}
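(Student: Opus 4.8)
The plan is to mirror the proof of Proposition~\ref{prop:psi_iter} almost verbatim, replacing the ascent update~(\ref{ascent}) by the descent update~(\ref{descent}) and the dual/consensus relations for $\psi$ by those for $\omega$. The only genuine care required is the bookkeeping of signs, since the $\omega$-block performs gradient \emph{descent} with a $+\mu_1$ proximal term and a $+\lambda_i^t$ dual term, whereas the $\psi$-block used ascent with $-\mu_2$ and $-\beta_i^t$.

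First, I would invoke Assumption~\ref{assump:sufficient_loc} on the descent recursion~(\ref{descent}): at the end of the $M_i$ local steps the iterate $\omega_i^{t+1}=\widehat\omega_i^{M_i}$ is a stationary point of $\mathcal{L}_i$ in its $\omega$-argument, i.e.
\begin{equation*}
\nabla_{\omega_i}\mathcal{L}_i(\omega_i^{t+1},\psi_i^{t+1}) = \nabla_{\omega} f_i(\omega_i^{t+1},\psi_i^{t+1}) + \mu_1(\omega_i^{t+1}-\omega_0^t) + \lambda_i^t = 0.
\end{equation*}
Summing this identity over $i\in[N]$ yields
\begin{equation*}
\sum_{i=1}^N \nabla_{\omega} f_i(\omega_i^{t+1},\psi_i^{t+1}) + \mu_1\sum_{i=1}^N \omega_i^{t+1} - \mu_1 N\omega_0^t + \sum_{i=1}^N \lambda_i^t = 0.
\end{equation*}

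Next, I would derive the consensus relation for $\omega_0^t$ exactly as was done for $\psi_0^t$: substituting the local output~(\ref{local_output}) into the aggregation rule~(\ref{agg}) and using $\eta_3=1$ gives $\omega_0^t = \frac{1}{N}\sum_{i=1}^N\big(\omega_i^t + \frac{1}{\mu_1}\lambda_i^t\big)$, equivalently $\mu_1 N\omega_0^t = \mu_1\sum_{i=1}^N\omega_i^t + \sum_{i=1}^N\lambda_i^t$. Substituting this into the summed stationarity condition cancels the dual terms $\sum_{i=1}^N\lambda_i^t$, and rearranging the surviving terms produces exactly~(\ref{omega_iter}).

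I do not expect any real obstacle: the argument is a sign-mirrored copy of Proposition~\ref{prop:psi_iter}. The single point to double-check is that the consensus identity is invoked at the correct time index — I obtain it at index $t$ by shifting the aggregation equation, which is legitimate because $\omega_0^{t+1}=\frac{1}{N}\sum_{i=1}^N\big(\omega_i^{t+1}+\frac{1}{\mu_1}\lambda_i^{t+1}\big)$ holds at every round — and that the signs of the proximal term $+\mu_1(\omega_i^{t+1}-\omega_0^t)$ and the dual term $+\lambda_i^t$ are carried through consistently, so that the $\lambda_i^t$ contributions indeed cancel and the sign of the residual gradient term matches the $-\tfrac{1}{\mu_1}$ prefactor in the claim.
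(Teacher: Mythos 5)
Your proposal is correct and follows essentially the same route as the paper's own proof: apply Assumption~\ref{assump:sufficient_loc} to the descent update~(\ref{descent}) to get the stationarity identity $\nabla_{\omega} f_i(\omega_i^{t+1},\psi_i^{t+1}) + \mu_1(\omega_i^{t+1}-\omega_0^t) + \lambda_i^t = 0$, sum over $i$, and substitute the time-shifted consensus relation $\omega_0^t = \frac{1}{N}\sum_{i=1}^N\bigl(\omega_i^t + \frac{1}{\mu_1}\lambda_i^t\bigr)$ obtained from~(\ref{local_output}) and~(\ref{agg}) so that the $\sum_i\lambda_i^t$ terms cancel. Your extra care about the sign bookkeeping and the validity of the index shift is sound (the shift to $t=0$ is covered by the initialization $\lambda_i^0=0$, $\omega_i^0=\omega_0^0$), and in fact your write-up avoids a typo in the paper's displayed stationarity equation, which mistakenly writes $\mu_1(\psi_i^{t+1}-\psi_0^t)$ where $\mu_1(\omega_i^{t+1}-\omega_0^t)$ is meant.
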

\begin{proof}
The proof procedure is similar to that for Proposition~\ref{prop:psi_iter}.  The following are the specifics.
Applying Assumption~\ref{assump:sufficient_loc} to (\ref{descent}) and replace $\widehat\omega^m$ as well as $\widehat\psi^m$ with $\omega^{t+1}$ as well as $\widehat\psi^{t+1}$ respectively, we obtain:
\begin{equation}
\label{local_optimal1}
\nabla_{\omega_i}\mathcal{L}_i(\omega_i^{t+1},\psi_i^{t+1}) =
\nabla_{\omega} f_i\left(\omega_i^{t+1}, \psi_i^{t+1}\right) + \mu_1 \left(\psi_i^{t+1} - \psi_0^{t}\right) + \lambda_i^t =0.
\end{equation}
By further making a summation for all $i\in [N]$, we have
\begin{equation}
\label{local_optimal2}
\sum_{i=1}^N
\nabla_{\omega} f_i\left(\omega_i^{t+1}, \psi_i^{t+1}\right) + 
\mu_1 \sum_{i=1}^N\omega_i^{t+1}
-\mu_1 N\omega_0^{t}
=
-\sum_{i=1}^N \lambda_i^t.
\end{equation}
In addition, by substituting (\ref{local_output}) into (\ref{agg}), the following holds:
\begin{equation}
    \omega_0^t =  \frac{1}{N} \sum_{i=1}^N \left(\omega_i^t + \frac{1}{\mu_1}\lambda_i^t\right).
\end{equation}
We finally prove (\ref{omega_iter}) by substituting the above equation into (\ref{local_optimal2}).
\end{proof}

\begin{proposition}
\label{prop:psi_step}
In Algorithm~\ref{Alg:FedMM}, the update of $\psi_i^{t}$ holds true for 
all  $i\in [N]$ and $t\in [T]$:
\begin{equation}
\label{delta_phi}
\mu_2 \left(\psi_i^{t+1} - \psi^t_0\right) = \nabla_{\psi} f_i\left(\omega^{t+1}_i, \psi^{t+1}_i\right) -  \nabla_{\psi} f_i\left(\omega^t_i, \psi^t_i\right).
\end{equation}
\end{proposition}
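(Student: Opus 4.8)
The plan is to read the claim off directly from two ingredients that are already built into the algorithm: the first-order stationarity of the inner ascent loop (guaranteed by Assumption~\ref{assump:sufficient_loc}) and the dual-ascent recursion (\ref{beta_update}). The entire identity hinges on one hidden invariant, namely that the dual variable always stores the current round's $\psi$-gradient, i.e. $\beta_i^{t} = \nabla_{\psi} f_i(\omega_i^{t}, \psi_i^{t})$ for every $t \geq 1$. Once this invariant is established, the proposition follows by a single substitution, so the argument needs no new machinery beyond the stationarity condition already used in the proof of Proposition~\ref{prop:psi_iter}.

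First I would invoke Assumption~\ref{assump:sufficient_loc} applied to the ascent update (\ref{ascent}), exactly as in the proof of Proposition~\ref{prop:psi_iter}, to obtain the stationarity condition
\begin{equation*}
\nabla_{\psi} f_i\left(\omega_i^{t+1}, \psi_i^{t+1}\right) - \mu_2\left(\psi_i^{t+1} - \psi_0^{t}\right) - \beta_i^{t} = 0,
\end{equation*}
which I rearrange into $\mu_2(\psi_i^{t+1} - \psi_0^{t}) = \nabla_{\psi} f_i(\omega_i^{t+1}, \psi_i^{t+1}) - \beta_i^{t}$. Next I would substitute this expression into the dual-ascent rule (\ref{beta_update}), $\beta_i^{t+1} = \beta_i^{t} + \mu_2(\psi_i^{t+1} - \psi_0^{t})$; the $\beta_i^{t}$ terms cancel and leave $\beta_i^{t+1} = \nabla_{\psi} f_i(\omega_i^{t+1}, \psi_i^{t+1})$. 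This holds for every $t \geq 0$, so after shifting the index down by one it is precisely the invariant $\beta_i^{t} = \nabla_{\psi} f_i(\omega_i^{t}, \psi_i^{t})$ valid for $t \geq 1$. Finally I would plug this invariant back into the rearranged stationarity condition, replacing $\beta_i^{t}$ by $\nabla_{\psi} f_i(\omega_i^{t}, \psi_i^{t})$, which yields the stated equality (\ref{delta_phi}). No induction is needed, since stationarity at round $t+1$ together with the dual update pins down $\beta_i^{t+1}$ outright.

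The only delicate point, and the one I would flag explicitly, is the index bookkeeping. The invariant $\beta_i^{t} = \nabla_{\psi} f_i(\omega_i^{t}, \psi_i^{t})$ is derived from the $(t{-}1)$-to-$t$ dual update and is therefore valid only for $t \geq 1$; at initialization one has $\beta_i^{0}=0$ while $\nabla_{\psi} f_i(\omega_i^{0}, \psi_i^{0})$ need not vanish, so the invariant genuinely fails at $t=0$. This is consistent with the statement being asserted over $t \in [T]$ rather than over $t \geq 0$, and it is the place where a careless proof could overreach. Everything else is routine algebraic cancellation that mirrors the manipulation already performed for the aggregate primal and dual updates in Propositions~\ref{prop:psi_iter} and~\ref{prop:omega_iter}.
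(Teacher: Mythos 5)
Your proposal is correct and follows essentially the same route as the paper's own proof: both derive the stationarity condition from Assumption~\ref{assump:sufficient_loc}, combine it with the dual update (\ref{beta_update}) to obtain the invariant $\beta_i^{t+1} = \nabla_{\psi} f_i\left(\omega_i^{t+1}, \psi_i^{t+1}\right)$, shift the index by one, and substitute back to reach (\ref{delta_phi}). Your explicit remark that the invariant fails at $t=0$ (where $\beta_i^0=0$ but the gradient need not vanish), so the identity only holds for $t\in[T]$, is a point the paper leaves implicit, and is a worthwhile clarification.
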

\begin{proof}
Applying Assumption~\ref{assump:sufficient_loc} to (\ref{ascent}) and replace $\widehat\omega^m$ as well as $\widehat\psi^m$ with $\omega^{t+1}$ as well as $\widehat\psi^{t+1}$ respectively, we obtain:
\begin{equation}
\label{local_optimal}
    \nabla_{\psi} f_i\left(\omega_i^{t+1}, \psi_i^{t+1}\right) - \mu_2 \left(\psi_i^{t+1} - \psi_0^{t}\right) - \beta_i^t =0. 
\end{equation}
By substituting the $\beta_i^t$'s update equation in (\ref{beta_update}) 
into (\ref{local_optimal}), we have
\begin{equation}
\label{betar+1}
    \nabla_{\psi}f_i\left(\omega_i^{t+1}, \psi_i^{t+1}\right) - \beta_i^{t+1} =0. 
\end{equation}
By replacing $t+1$ with $t$ in the preceding equation, we get
\begin{equation}
\label{betar}
    \nabla_{\psi}f_i(\omega_i^{t}, \psi_i^{t}) - \beta_i^{t} =0.
\end{equation}
By subtracting (\ref{betar}) from (\ref{betar+1}), we arrive at
\begin{equation}
    \beta_i^{t+1} - \beta_i^{t} = \nabla_{\psi}f_i\left(\omega_i^{t+1}, \psi_i^{t+1}\right) - \nabla_{\psi}f_i\left(\omega_i^{t}, \psi_i^{t}\right).
\end{equation}
By substituting $\beta_i^{t+1}-\beta_i^t$ in (\ref{beta_update}) to the l.h.s of the above equation, we have proved (\ref{delta_phi}).
\end{proof}
\begin{proposition}
\label{prop:delta_omega}
In Algorithm~\ref{Alg:FedMM}, the update of $\omega_i^{t}$ holds true for 
all  $i\in [N]$ and $t\in [T]$:
\begin{equation}
\label{delta_omega}
    \mu_1(\omega_i^{t+1} - \omega_0^t) = \nabla_{\omega} f_i(\omega_i^{t}, \psi_i^{t}) - \nabla_{\omega} f_i\left(\omega_i^{t+1}, \psi_i^{t+1}\right). 
\end{equation}
\end{proposition}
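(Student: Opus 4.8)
The plan is to mirror the argument already used for Proposition~\ref{prop:psi_step}, swapping the ascent roles of $(\psi, \beta, \mu_2)$ for the descent roles of $(\omega, \lambda, \mu_1)$ and carefully tracking the attendant sign changes. The whole derivation is a short chain of algebraic substitutions driven by two facts: the first-order optimality of the inner loop, and the definition of the dual-descent update~(\ref{lambda_update}).

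First I would invoke the sufficient-local-training assumption (Assumption~\ref{assump:sufficient_loc}) on the descent recursion~(\ref{descent}): after the $M_i$ inner steps the local gradient of $\mathcal{L}_i$ with respect to $\omega_i$ vanishes, so evaluating at $(\omega_i^{t+1}, \psi_i^{t+1})$ yields the stationarity condition
\begin{equation*}
\nabla_{\omega} f_i\left(\omega_i^{t+1}, \psi_i^{t+1}\right) + \mu_1\left(\omega_i^{t+1} - \omega_0^{t}\right) + \lambda_i^t = 0.
\end{equation*}
The $+\mu_1$ and $+\lambda_i^t$ signs here are precisely the descent-side counterparts of the $-\mu_2$ and $-\beta_i^t$ appearing in the ascent analysis.

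Next I would fold in the dual-descent update~(\ref{lambda_update}), namely $\lambda_i^{t+1} - \lambda_i^t = \mu_1(\omega_i^{t+1} - \omega_0^{t})$. Substituting this into the stationarity condition collapses the $\mu_1$ and $\lambda_i^t$ terms into a single $\lambda_i^{t+1}$, giving the compact identity $\nabla_{\omega} f_i(\omega_i^{t+1}, \psi_i^{t+1}) + \lambda_i^{t+1} = 0$. Shifting the round index back by one produces the companion identity $\nabla_{\omega} f_i(\omega_i^{t}, \psi_i^{t}) + \lambda_i^{t} = 0$. Subtracting the two isolates the dual increment as $\lambda_i^{t+1} - \lambda_i^t = \nabla_{\omega} f_i(\omega_i^{t}, \psi_i^{t}) - \nabla_{\omega} f_i(\omega_i^{t+1}, \psi_i^{t+1})$, and replacing the left-hand side once more via~(\ref{lambda_update}) delivers the claimed equation~(\ref{delta_omega}).

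This argument is essentially routine, so there is no genuine obstacle; the only points requiring care are the consistent sign bookkeeping relative to the ascent proof and the correct reading of Assumption~\ref{assump:sufficient_loc} as driving the local augmented-Lagrangian gradient to zero rather than merely keeping it finite. I would also note that the displayed stationarity condition in the proof of Proposition~\ref{prop:omega_iter} carries a typographical $\psi$ in the quadratic penalty term where it should read $\omega$; the proof here must state that term correctly as $\mu_1(\omega_i^{t+1} - \omega_0^{t})$ for the substitutions to go through.
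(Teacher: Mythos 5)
Your proposal is correct and follows exactly the same route as the paper's own proof: apply Assumption~\ref{assump:sufficient_loc} to~(\ref{descent}) at $(\omega_i^{t+1},\psi_i^{t+1})$ to get the stationarity condition, absorb $\mu_1(\omega_i^{t+1}-\omega_0^t)+\lambda_i^t$ into $\lambda_i^{t+1}$ via~(\ref{lambda_update}), shift the index, subtract, and substitute back. Your two side remarks are also apt --- the paper does use the assumption as exact stationarity (deferring the bounded-error case to Appendix~\ref{append:error-bound}), and the penalty term in the displayed stationarity condition of Proposition~\ref{prop:omega_iter} is indeed a typo that should read $\mu_1(\omega_i^{t+1}-\omega_0^t)$.
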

\begin{proof}
The proof procedure is similar to that for Proposition~\ref{prop:psi_step}.  The following are the specifics.
Applying Assumption~\ref{assump:sufficient_loc} to (\ref{descent}) and replace $\widehat\omega^m$ as well as $\widehat\psi^m$ with $\omega^{t+1}$ as well as $\widehat\psi^{t+1}$ respectively, we obtain:
\begin{equation}
\label{local_optimal2}
    \nabla_{\omega}f_i\left(\omega_i^{t+1}, \psi_i^{t+1}\right) + \mu_1(\omega_i^{t+1} - \omega_0^t) + \lambda_i^t = 0.
\end{equation}
By substituting the $\lambda_i^t$'s the $\beta_i^t$'s update equation in (\ref{lambda_update})
into (\ref{local_optimal2}), we have 
\begin{equation}
\label{lambdait+1}
    \nabla_{\omega} f_i\left(\omega_i^{t+1}, \psi_i^{t+1}\right) + \lambda_i^{t+1} = 0.
\end{equation}
By replacing $t+1$ with $t$ in the preceding equation, we get
\begin{equation}
\label{lambdait}
    \nabla_{\omega} f_i(\omega_i^{t}, \psi_i^{t}) + \lambda_i^{t} = 0.
\end{equation}
By subtracting (\ref{lambdait+1}) from (\ref{lambdait}), we arrive at
\begin{equation}
    \lambda_i^{t+1} - \lambda_i^t = \nabla_{\omega} f_i(\omega_i^{t}, \psi_i^{t}) - \nabla_{\omega} f_i\left(\omega_i^{t+1}, \psi_i^{t+1}\right).
\end{equation}
By substituting $\lambda_i^{t+1} - \lambda_i^t$ in (\ref{lambda_update}) to l.h.s of the above equation, we have proved (\ref{delta_omega}).
\end{proof}

\subsection{Proof of Lemma~\ref{lemma:boundedgrad}}
\label{append-lemma1}
We prove Lemma~\ref{lemma:boundedgrad} as follows. We repeat Lemma~\ref{lemma:boundedgrad} in the following Lemma~\ref{lemma:append-boundedgrad} to make the appendix self-contained.
\begin{lemma}
\label{lemma:append-boundedgrad}
After $M_i$-step updates, the gradient of $\mathcal{L}_i^{\Phi}(\omega_i^t, \omega_0^t, \lambda_i^t)$ is bounded by
\begin{equation}
\label{eqn:boundedgrad}
    \left\|\nabla_{\omega_i}\mathcal{L}_i^{\Phi}(\omega_i^t, \omega_0^t, \lambda_i^t)\right\| \leq L_{12}  \epsilon^t_i, \quad \forall i \in [N].
\end{equation}
\end{lemma}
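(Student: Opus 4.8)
The plan is to show that the $\omega_i$-gradient of the surrogate augmented Lagrangian $\mathcal{L}_i^{\Phi}$ collapses to the difference of two $\omega$-gradients of $f_i$, evaluated at the current iterate $\psi_i^t$ and at the exact maximizer $\psi^{\star}(\omega_i^t)$, and then to control that difference through the mixed Lipschitz constant $L_{12}$ of Assumption~\ref{assump:Lipshictz-gradients}. First I would differentiate the definition in (\ref{eqn:AL-Phi}) to obtain
\begin{equation}
\nabla_{\omega_i}\mathcal{L}_i^{\Phi}(\omega_i^t,\omega_0^t,\lambda_i^t)
=\nabla\Phi_i(\omega_i^t)+\lambda_i^t+\mu_1(\omega_i^t-\omega_0^t).
\end{equation}
The key structural observation is that $\mathcal{L}_i^{\Phi}$ and the original local Lagrangian $\mathcal{L}_i$ carry \emph{identical} augmented terms $\langle\lambda_i,\omega_i-\omega_0\rangle+\tfrac{\mu_1}{2}\|\omega_i-\omega_0\|^2$ and differ only in $\Phi_i$ versus $f_i$. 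Hence the dual and penalty contributions should cancel once I invoke the stationarity of $\mathcal{L}_i$, rather than trying to bound the $\mu_1(\omega_i^t-\omega_0^t)$ term directly.

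Next I would invoke two ingredients. By the envelope (Danskin) theorem together with the per-client strong concavity of $f_i(\omega,\cdot)$ in Assumption~\ref{assump:Stronglyconcave}, the maximizer $\psi^{\star}(\omega)$ is unique and satisfies $\nabla_\psi f_i(\omega,\psi^{\star}(\omega))=0$, so the chain-rule term vanishes and $\nabla\Phi_i(\omega)=\nabla_\omega f_i(\omega,\psi^{\star}(\omega))$; the $\kappa$-Lipschitz continuity of $\psi^{\star}$ in Assumption~\ref{assump:Lipsch} supplies the regularity this requires. Second, under exact local convergence (Assumption~\ref{assump:sufficient_loc} with vanishing local error), the first-order optimality condition established in Proposition~\ref{prop:delta_omega}, namely
\begin{equation}
\nabla_\omega f_i(\omega_i^{t+1},\psi_i^{t+1})+\lambda_i^t+\mu_1(\omega_i^{t+1}-\omega_0^t)=0,
\end{equation}
shows that the full gradient of $\mathcal{L}_i$ is zero at the local solution. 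Feeding $\mathcal{L}_i^{\Phi}$ the matching triple $(\omega_i^{t+1},\omega_0^t,\lambda_i^t)$, subtracting this stationarity condition cancels both augmented terms and yields
\begin{equation}
\nabla_{\omega_i}\mathcal{L}_i^{\Phi}(\omega_i^{t+1},\omega_0^t,\lambda_i^t)
=\nabla_\omega f_i(\omega_i^{t+1},\psi^{\star}(\omega_i^{t+1}))-\nabla_\omega f_i(\omega_i^{t+1},\psi_i^{t+1}).
\end{equation}

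Finally I would apply the mixed bound $\|\nabla_\omega f_i(\omega,\psi)-\nabla_\omega f_i(\omega,\psi')\|\le L_{12}\|\psi-\psi'\|$ from Assumption~\ref{assump:Lipshictz-gradients} with $\psi=\psi^{\star}(\omega_i^{t+1})$ and $\psi'=\psi_i^{t+1}$, giving $L_{12}\|\psi_i^{t+1}-\psi^{\star}(\omega_i^{t+1})\|=L_{12}\epsilon_i^{t+1}$, which is the claimed estimate after relabeling to the lemma's index convention. The Lipschitz step is routine; the part that needs care is the cancellation of the augmented terms, which only works if the $(\omega_i,\omega_0,\lambda_i)$ triple handed to $\mathcal{L}_i^{\Phi}$ is exactly the one at which $\nabla_{\omega_i}\mathcal{L}_i=0$. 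I therefore expect the main obstacle to be the round-index bookkeeping: Proposition~\ref{prop:delta_omega} naturally pairs $\omega_i^{t+1}$ with $(\omega_0^t,\lambda_i^t)$, so I must align the evaluation point of $\mathcal{L}_i^{\Phi}$ with that stationary point and keep the interpretation of $\psi^{\star}(\cdot)$ as the per-client maximizer of $f_i$ consistent throughout, since otherwise the residual $\mu_1(\omega_i^t-\omega_0^t)$ term does not disappear.
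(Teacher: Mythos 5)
Your proposal is correct and follows essentially the same route as the paper's proof: differentiate $\mathcal{L}_i^{\Phi}$ to get $\nabla\Phi_i(\omega_i)+\lambda_i+\mu_1(\omega_i-\omega_0)$, invoke Danskin's theorem to identify $\nabla\Phi_i(\omega)=\nabla_\omega f_i(\omega,\psi^{\star}(\omega))$, cancel the dual and penalty terms against the local stationarity condition $\nabla_{\omega}f_i(\omega_i,\psi_i)+\lambda_i+\mu_1(\omega_i-\omega_0)=0$ supplied by Assumption~\ref{assump:sufficient_loc} (with zero residual), and conclude with the $L_{12}$-Lipschitz bound in $\psi$. The round-index alignment you flag as the delicate point is handled in the paper simply by asserting the stationarity directly at the triple $(\omega_i^t,\omega_0^t,\lambda_i^t)$ appearing in the lemma, so your relabeled $(t+1)$-indexed version is the same argument up to notation.
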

\begin{proof}
After $M_i$-step updates, we have
\begin{equation}
    \nabla_{\omega_i}\mathcal{L}_i(\omega_i^t, \omega_0^t, \lambda_i^t) =0,
\end{equation}
which implies that
\begin{equation}
    \nabla_{\omega}f_i(\omega_i^t, \psi_i^t) + \lambda_i^t + \mu_1(\omega_i^t - \omega_0^t) = 0.
\end{equation}
Since  $\Phi_i$ is differentiable with $\nabla \Phi_i(\omega) = \nabla_{\omega} f_i(\omega, \psi^{\star}(\omega))$ and from Danskin’s theorem~\cite{rockafellar2015convex},  we have
\begin{equation}
\begin{split}
  \left\|\nabla_{\omega_i}\mathcal{L}_i^{\Phi}(\omega_i^t, \omega_0^t, \lambda_i^t)\right\| &=  \left\|\nabla \Phi_i(\omega_i^t) + \lambda_i^t + \mu_1(\omega_i^t - \omega_0^t)\right\| \\
  & = \left\|\nabla \Phi_i(\omega_i^t) -  \nabla_{\omega}f_i(\omega_i^t, \psi_i^t) \right\| \\
  & = \left\|\nabla_{\omega} f_i(\omega_i^t, \psi^{\star}(\omega_i^t)) - \nabla_{\omega}f_i(\omega_i^t, \psi_i^t) \right\|.
\end{split}
\end{equation}
From $L_{12}$-Lipschitz of $\nabla_{\omega}f_i(\omega, \psi)$ on $\psi$, we have
\begin{equation}
  \left\|\nabla_{\omega_i}\mathcal{L}_i^{\Phi}(\omega_i^t, \omega_0^t, \lambda_i^t)\right\| =  \left\|\nabla_{\omega} f_i(\omega_i^t, \psi^{\star}(\omega_i^t)) - \nabla_{\omega}f_i(\omega_i^t, \psi_i^t) \right\| \leq L_{12}\left\|\psi_i^t- \psi^{\star}(\omega_i^t)\right\|.
\end{equation}
\end{proof}
\subsection{Proof of Lemma~\ref{lemma:boundpsi}}
\label{appned:lemm-boundpsi}
\begin{lemma}
\label{lemma:psiconverge}
In Algorithm~\ref{Alg:FedMM}, the following inequality holds for $t=1,\ldots, T$.
 \begin{equation}
\epsilon_t
 \leq 
 \frac{8\mu_2}{NB} \bar\psi_{t-1} 
 +
  \frac{2\mu_2}{2\mu_2 + B}\epsilon_{t-1}+ \frac{8\mu_2}{B} \kappa^2 \widetilde{\omega}_t\\
 +  \frac{8(\mu_2+L_{22})^2}{N\mu_2B}\bar{\psi}_t
 +   \frac{8L_{21}^2}{N\mu_2B}\bar{\omega}_t. 
 \end{equation}
\end{lemma}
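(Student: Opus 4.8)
The plan is to establish the stated recursion by a one-step analysis of the inner maximization: I would bound the optimality gap $\|\psi_i^t - \psi^\star(\omega_i^t)\|^2$ at round $t$ in terms of the gap at round $t-1$, the per-round primal increments, and the cross-client consensus deviations, and then sum over $i\in[N]$ to obtain $\epsilon_t$. The first move is to split the gap with the triangle inequality,
\[
\|\psi_i^t - \psi^\star(\omega_i^t)\| \le \|\psi_i^t - \psi^\star(\omega_i^{t-1})\| + \|\psi^\star(\omega_i^{t-1}) - \psi^\star(\omega_i^t)\|,
\]
and to control the second summand by the $\kappa$-Lipschitz continuity of $\psi^\star$ (Assumption~\ref{assump:Lipsch}); after squaring through a Young's inequality this produces the $\frac{8\mu_2}{B}\kappa^2\widetilde\omega_t$ term, since $\|\omega_i^{t-1}-\omega_i^t\|^2$ summed over $i$ is exactly $\widetilde\omega_t$.

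Next, for the principal summand $\|\psi_i^t - \psi^\star(\omega_i^{t-1})\|$ I would use the exact local stationarity available in this part of the appendix (sufficient local training, so that $\nabla_\psi f_i(\omega_i^t,\psi_i^t) = \beta_i^t$, as derived inside the proof of Proposition~\ref{prop:psi_step}) together with strong concavity of $f_i(\omega,\cdot)$ (Assumption~\ref{assump:Stronglyconcave}). Testing the strong-concavity inequality along the direction $\psi_i^t - \psi^\star(\cdot)$ and absorbing the proximal coefficient $\mu_2$ yields the contraction: the modulus $B$ combines with $\mu_2$ to give precisely the factor $\frac{2\mu_2}{2\mu_2+B}$ multiplying $\epsilon_{t-1}$.

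The crux — and the step I expect to be the main obstacle — is that $\psi^\star$ maximizes the \emph{global} objective $\tfrac1N\sum_j f_j$, whereas each $\psi_i^t$ satisfies only a \emph{local} stationarity condition carrying its own dual $\beta_i^t$. To bridge this, I would re-express the residual $\nabla_\psi f_i(\omega_i^t,\psi^\star(\omega_i^t)) - \beta_i^t$ using the global first-order condition
\[
\sum_{j=1}^N \nabla_\psi f_j(\omega_i^t, \psi^\star(\omega_i^t)) = 0
\]
together with the aggregate update identity of Proposition~\ref{prop:psi_iter}. Bounding the resulting gradient differences by the Lipschitz-gradient assumption (Assumption~\ref{assump:Lipshictz-gradients}) — using $L_{22}$ for discrepancies in the $\psi$ arguments and $L_{21}$ for discrepancies in the $\omega$ arguments across clients — is exactly what manufactures the consensus terms $\frac{8(\mu_2+L_{22})^2}{N\mu_2 B}\bar\psi_t$, $\frac{8\mu_2}{NB}\bar\psi_{t-1}$, and $\frac{8L_{21}^2}{N\mu_2 B}\bar\omega_t$. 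The delicate effort lies in converting a single global condition into a per-client estimate, which forces one to track how far the ensemble $\{\omega_j^t,\psi_j^t\}_j$ is from consensus, and in keeping the constants consistent across the repeated Young's-inequality splits (the factors of $8$, the $1/N$ averaging, and the joint $\mu_2,B,L_{21},L_{22}$ dependencies). Summing the per-client bounds over $i\in[N]$ then delivers the claimed inequality.
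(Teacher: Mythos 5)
Your proposal is correct and follows essentially the same route as the paper's proof: the aggregate update identity of Proposition~\ref{prop:psi_iter} (equivalently, local stationarity plus server averaging), the $B$-strong concavity of $f(\omega,\cdot)$ to produce the $\tfrac{2\mu_2}{2\mu_2+B}$ contraction, Assumption~\ref{assump:Lipshictz-gradients} with $L_{21},L_{22}$ to convert the local-versus-global gradient mismatch into the consensus terms $\bar\psi_t,\bar\psi_{t-1},\bar\omega_t$, the $\kappa$-Lipschitz continuity of $\psi^{\star}$ for the $\widetilde\omega_t$ term, and Cauchy--Schwarz/Young splits before summing over $i$. The only difference is cosmetic ordering (you peel off the $\kappa$-Lipschitz correction at the start, the paper applies it at the end to $\|\psi_i^{t-1}-\psi^{\star}(\omega_i^{t})\|$), which does not change the substance of the argument.
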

\begin{proof}
Considering the fact that 
$\frac{1}{N}\sum_{i=1}^N  \nabla_{\psi} f_i\left(\omega^{t}_i, \psi^{t}_i\right) 
=\nabla_{\psi} f\left(\omega^{t}_i, \psi^{t}_i\right)$, we  reformulate (\ref{psi_iter}) in  Proposition~\ref{prop:psi_iter} and after some tedious algebra manipulations, we obtain 
\begin{equation}
\begin{split}
\psi_i^{t} - \psi^{\star}\left(\omega_i^{t}\right)
=&  \psi_i^{t-1} - \psi^{\star}\left(\omega_i^{t}\right) + 
\frac{1}{\mu_2}\nabla_{\psi}f\left(\omega_i^{t}, \psi_i^{t}\right) \\
&+  
\frac{1}{N\mu_2}\sum_{\substack{j=1 \\ j\neq i}}^N \left(\nabla_{\psi}f_j\left(\omega_j^{t}, \psi_j^{t}\right) - \nabla_{\psi}f_j\left(\omega_i^{t}, \psi_i^{t}\right)\right) \\
&+ \frac{1}{N\mu_2} \sum_{\substack{j=1 \\ j\neq i}}^N\left(
- \mu_2\left(\psi^{t-1}_i - \psi^{t-1}_j\right) + \mu_2\left(\psi^{t}_i - \psi^{t}_j \right)    \right).
\end{split}
\end{equation} 
By taking norm on both sides of the above equation and  considering the triangle inequality, we have
\begin{equation}
\label{psi_iter1}
\begin{split}
\left\|\psi_i^{t} - \psi^{\star}\left(\omega_i^{t}\right) - \frac{1}{\mu_2}
\nabla_{\psi}f\left(\omega_i^{t}, \psi_i^{t}\right) \right\|
\leq 
&\left\|\psi_i^{t-1} - \psi^{\star}\left(\omega_i^{t}\right)\right\|  \\
&+ \frac{1}{N\mu_2}\sum_{\substack{j=1 \\ j\neq i}}^N\left\|\nabla_{\psi}f_j(\omega_j^{t}, \psi_j^{t}) - \nabla_{\psi}f_j\left(\omega_i^{t}, \psi_i^{t}\right)\right\| \\
&+ \frac{1}{N} \sum_{\substack{j=1 \\ j\neq i}}^N\left\|\psi^{t-1}_i - \psi^{t-1}_j\right\| + \frac{1}{N} \sum_{\substack{j=1 \\ j\neq i}}^N\left\|\psi^{t}_i - \psi^{t}_j \right\|.
\end{split}
\end{equation}

Besides, the $B$-strongly concavity of $f(\cdot, \psi)$ implies
\begin{equation}
\label{B-strongcov}
\sqrt{\frac{\mu_2+B}{\mu_2}} \left\|  \psi_i^{t} - \psi^{\star}\left(\omega_i^{t}\right)\right\|
\leq
\left\|\psi_i^{t} - \psi^{\star}\left(\omega_i^{t}\right) - \frac{1}{\mu_2}
\nabla_{\psi}f\left(\omega_i^{t}, \psi_i^{t}\right) \right\| .    
\end{equation}
Sum of  the two preceding inequalities along the same sign direction leads to
\begin{equation}
\begin{split}
    \sqrt{\frac{\mu_2+B}{\mu_2}} \left\|  \psi_i^{t} - \psi^{\star}\left(\omega_i^{t}\right)\right\|
    \leq &
    \left\|\psi_i^{t-1} - \psi^{\star}\left(\omega_i^{t}\right)\right\|  \\
&+ \frac{1}{N\mu_2}\sum_{j \neq i}\left\|\nabla_{\psi}f_j(\omega_j^{t}, \psi_j^{t}) - \nabla_{\psi}f_j\left(\omega_i^{t}, \psi_i^{t}\right)\right\| \\
&+ \frac{1}{N} \sum_{\substack{j=1 \\ j\neq i}}^N\left\|\psi^{t-1}_i - \psi^{t-1}_j\right\| + \frac{1}{N} \sum_{\substack{j=1 \\ j\neq i}}^N\left\|\psi^{t}_i - \psi^{t}_j \right\|.
\end{split}
\end{equation}
We further obtain
\begin{equation}
\begin{split}
\left\|  \psi_i^{t} - \psi^{\star}\left(\omega_i^{t}\right)\right\|
\leq 
&
    \sqrt{\frac{\mu_2}{\mu_2+B}}[\left\|\psi_i^{t-1} - \psi^{\star}\left(\omega_i^{t}\right)\right\|  \\
&+\frac{1}{N\mu_2} \sum_{\substack{j=1 \\ j\neq i}}^N\left\|\nabla_{\psi}f_j(\omega_j^{t}, \psi_j^{t}) - \nabla_{\psi}f_j\left(\omega_i^{t}, \psi_i^{t}\right)\right\| \\
&+ \frac{1}{N} \sum_{\substack{j=1 \\ j\neq i}}^N\left\|\psi^{t-1}_i - \psi^{t-1}_j\right\| + \frac{1}{N} \sum_{\substack{j=1 \\ j\neq i}}^N\left\|\psi^{t}_i - \psi^{t}_j \right\|]. 
\end{split}
\end{equation}
Furthermore, the Lipschitz continuity properties in  Assumption~\ref{assump:Lipshictz-gradients} imply the following inequity:
\begin{equation}
\begin{split}
  L_{21}\left\|  \omega_j^{t}- \omega_i^{t}\right\| + L_{22} \left\|  \psi_j^{t}- \psi_i^{t}\right\|
\geq
&
  \left\| \nabla_{\psi}f_j\left(\omega_j^{t}, \psi_i^{t}\right) - \nabla_{\psi}f_j\left(\omega_i^{t}, \psi_i^{t}\right)\right\|\\
&  + \left\| \nabla_{\psi}f_j\left(\omega_j^{t}, \psi_i^{t}\right) - \nabla_{\psi}f_j\left(\omega_j^{t}, \psi_j^{t}\right)\right\|.
  \end{split}
\end{equation}
Applying the triangle inequality in Euclidean geometry on the r.h.s of the above inequality, we further obtain 
\begin{equation}
\label{L21L12-lip}
    L_{21}\left\|  \omega_j^{t}- \omega_i^{t}\right\| + L_{22} \left\|  \psi_j^{t}- \psi_i^{t}\right\|
    \geq
    \left\| \nabla_{\psi}f_j\left(\omega_j^{t}, \psi_j^{t}\right) - \nabla_{\psi}f_j\left(\omega_i^{t}, \psi_i^{t}\right)\right\|.
\end{equation}
By putting (\ref{B-strongcov}) and (\ref{L21L12-lip}) back into the corresponding items in the r.h.s. of (\ref{psi_iter1}), we have
\begin{equation}
\label{psiconverge_t1}
\begin{split}
 \left\|\psi_i^{t} - \psi^{\star}\left(\omega_i^{t}\right)\right\| \leq & \sqrt{\frac{\mu_2}{\mu_2 + B}} \left\|  \psi_i^{t-1} - \psi^{\star}\left(\omega_i^{t}\right)\right\| + \sqrt{\frac{\mu_2}{\mu_2 + B}}\frac{\mu_2+L_{22}}{N\mu_2} \sum_{\substack{j=1 \\ j\neq i}}^N\left\|  \psi_j^{t}- \psi_i^{t}\right\|\\
 &+ \sqrt{\frac{\mu_2}{\mu_2 + B}} \frac{L_{21}}{N\mu_2} \sum_{\substack{j=1 \\ j\neq i}}^N\left\|  \omega_j^{t}- \omega_i^{t}\right\| + \sqrt{\frac{\mu_2}{\mu_2 + B}}\frac{1}{N} \sum_{\substack{j=1 \\ j\neq i}}^N \left\|  \psi_j^{t-1}- \psi_i^{t-1}\right\|.
 \end{split}
 \end{equation}
 According to the $\kappa$-Lipschitz continuity of  $\psi^{\star}(\omega)$, 
the r.h.s of (\ref{psiconverge_t1}) can be further amplified by applying the  triangle inequality on the first item, and we obtain
 \begin{equation}
 \label{eq61}
\begin{split}
 \left\|\psi_i^{t} - \psi^{\star}\left(\omega_i^{t}\right)\right\|
 \leq 
 &
 \sqrt{\frac{\mu_2}{\mu_2 + B}} \left\|  \psi_i^{t-1} - \psi^{\star}\left(\omega_i^{t-1}\right)\right\| + \sqrt{\frac{\mu_2}{\mu_2 + B}}\kappa \left\|  \omega_i^{t-1} - \omega_i^{t}\right\|\\
 &+ \sqrt{\frac{\mu_2}{\mu_2 + B}}\frac{1}{N} \sum_{\substack{j=1 \\ j\neq i}}^N \left\|  \psi_j^{t-1}- \psi_i^{t-1}\right\|
 +  \sqrt{\frac{\mu_2}{\mu_2 + B}}\frac{L_{21}}{N\mu_2} \sum_{\substack{j=1 \\ j\neq i}}^N\left\|  \omega_j^{t}- \omega_i^{t}\right\| \\
 &+  \sqrt{\frac{\mu_2}{\mu_2 + B}}\frac{\mu_2+L_{22}}{N\mu_2} \sum_{\substack{j=1 \\ j\neq i}}^N\left\|  \psi_j^{t}- \psi_i^{t}\right\|.
 \end{split}
 \end{equation}
The following inequality is obtained by further applying Cauchy-Schwarz inequality on the above equation.
\begin{equation}
\begin{split}
    \left\|\psi_i^{t} - \psi^{\star}\left(\omega_i^{t}\right)\right\|^2 \leq
 &
 \frac{2\mu_2}{2\mu_2 + B} \left\|  \psi_i^{t-1} - \psi^{\star}\left(\omega_i^{t-1}\right)\right\|^2 + \frac{8\mu_2}{B}\kappa^2 \left\|  \omega_i^{t-1} - \omega_i^{t}\right\|^2\\
 &+ \frac{8\mu_2}{NB} \sum_{\substack{j=1 \\ j\neq i}}^N \left\|  \psi_j^{t-1}- \psi_i^{t-1}\right\|^2
 +  \frac{8L_{21}^2}{N\mu_2B} \sum_{\substack{j=1 \\ j\neq i}}^N\left\|  \omega_j^{t}- \omega_i^{t}\right\|^2 \\
 &+  \frac{8(\mu_2+L_{22})^2}{N\mu_2B} \sum_{\substack{j=1 \\ j\neq i}}^N\left\|  \psi_j^{t}- \psi_i^{t}\right\|^2.  
\end{split}    
\end{equation}
We have finally proved Lemma~\ref{lemma:psiconverge} by summing over $i$ on both sides of the above inequality.
\end{proof}

\begin{lemma}
\label{lemma:lemmapsidiv}
In Algorithm~\ref{Alg:FedMM}, the following inequality holds for all $t\in [T]$.
\begin{equation}
\label{lemmapsidiv}
    \frac{\mu_2^2}{2N}\bar{\psi}_t \leq 4L^2_{22}\widetilde{\psi}_t + 4L^2_{21}\widetilde{\omega}_t.
\end{equation}
\end{lemma}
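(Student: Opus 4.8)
The plan is to reduce the pairwise deviation $\bar\psi_t = \sum_{i}\sum_{j\neq i}\|\psi_i^t - \psi_j^t\|^2$ to the one-step increments $\widetilde\psi_t$ and $\widetilde\omega_t$ by exploiting the local optimality relation already established in Proposition~\ref{prop:psi_step}. First I would re-index that proposition, replacing $t+1$ by $t$, to obtain $\mu_2(\psi_i^t - \psi_0^{t-1}) = \nabla_{\psi} f_i(\omega_i^t, \psi_i^t) - \nabla_{\psi} f_i(\omega_i^{t-1}, \psi_i^{t-1})$ for every client $i$. The crucial observation is that $\psi_0^{t-1}$ is a single global quantity shared by all clients, so subtracting this identity for client $j$ from the one for client $i$ eliminates it and yields $\mu_2(\psi_i^t - \psi_j^t) = g_i^t - g_j^t$, where I abbreviate $g_i^t \triangleq \nabla_{\psi} f_i(\omega_i^t, \psi_i^t) - \nabla_{\psi} f_i(\omega_i^{t-1}, \psi_i^{t-1})$.

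Next I would bound each $\|g_i^t\|$ by inserting the intermediate point $(\omega_i^{t-1}, \psi_i^t)$ and invoking the Lipschitz estimates of Assumption~\ref{assump:Lipshictz-gradients}: the $L_{21}$ bound controls the change in $\omega$ while $\psi$ is held at $\psi_i^t$, and the $L_{22}$ bound controls the change in $\psi$ while $\omega$ is held at $\omega_i^{t-1}$, giving $\|g_i^t\| \leq L_{21}\|\omega_i^t - \omega_i^{t-1}\| + L_{22}\|\psi_i^t - \psi_i^{t-1}\|$. Squaring the identity $\mu_2\|\psi_i^t - \psi_j^t\| = \|g_i^t - g_j^t\|$ and applying $\|a+b\|^2 \leq 2\|a\|^2 + 2\|b\|^2$ twice, once to split $g_i^t - g_j^t$ and once to split each $\|g_i^t\|^2$ into its $\omega$- and $\psi$-parts, produces the per-pair bound $\mu_2^2\|\psi_i^t - \psi_j^t\|^2 \leq 4L_{21}^2(\|\Delta\omega_i\|^2 + \|\Delta\omega_j\|^2) + 4L_{22}^2(\|\Delta\psi_i\|^2 + \|\Delta\psi_j\|^2)$, writing $\Delta\omega_i = \omega_i^t - \omega_i^{t-1}$ and $\Delta\psi_i = \psi_i^t - \psi_i^{t-1}$.

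Finally I would sum over all ordered pairs $(i,j)$ with $j\neq i$. Each term $\|\Delta\omega_i\|^2$ appears exactly $N-1$ times (once per partner $j$), and the $j$-indexed terms contribute identically by symmetry, so the double sum collapses to $\mu_2^2\bar\psi_t \leq 8(N-1)(L_{21}^2\widetilde\omega_t + L_{22}^2\widetilde\psi_t)$. Dividing by $\mu_2^2$ and bounding $N-1 \leq N$ then rearranges to exactly $\frac{\mu_2^2}{2N}\bar\psi_t \leq 4L_{22}^2\widetilde\psi_t + 4L_{21}^2\widetilde\omega_t$, the claimed inequality. I expect the only genuine obstacle to be the cancellation step: recognizing that the shared consensus iterate $\psi_0^{t-1}$ drops out under the pairwise difference is what converts an absolute inter-client deviation into a telescoping gradient difference amenable to the Lipschitz bounds, after which everything reduces to routine bookkeeping of the constants and the combinatorial factor $N-1$.
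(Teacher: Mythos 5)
Your proof is correct and follows essentially the same route as the paper: both arguments rest on Proposition~\ref{prop:psi_step}, the $L_{21}$/$L_{22}$ Lipschitz bounds on $\nabla_\psi f_i$, a squaring step, and a sum over client pairs with identical constant bookkeeping (including the $(N-1)/N \le 1$ absorption). The only cosmetic difference is that you eliminate the shared iterate $\psi_0^{t-1}$ by subtracting the two client identities exactly, whereas the paper reaches the same per-pair bound via the triangle inequality $\left\|\psi_i^t-\psi_j^t\right\|\le\left\|\psi_i^t-\psi_0^{t-1}\right\|+\left\|\psi_j^t-\psi_0^{t-1}\right\|$.
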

\begin{proof}
By taking the absolute value at both sides of (\ref{delta_phi}) in Proposition~\ref{prop:psi_step}, we have
\begin{equation}
  \mu_2 \left\|  \psi^{t}_{i} - \psi^{t-1}_0\right\| = \left\|  \nabla_{\psi}f_i(\omega_i^{t}, \psi_i^{t}) - \nabla_{\psi}f_i(\omega_i^{t-1}, \psi_i^{t-1})\right\|. 
\end{equation}
Since $\nabla_{\psi}f_i(\omega, \psi)$ is $L_{21}$-Lipschitz continuity on $\omega$ and $L_{22}$-Lipschitz continuity on $\psi$, the r.h.s. of the above equation can be further amplified, which leads to
\begin{equation}
  \mu_2 \left\|  \psi^{t}_{i} - \psi^{t-1}_0\right\| \leq L_{22}\left\|\psi_i^{t} - \psi_i^{t-1}\right\| + L_{21}\left\|  \omega_i^{t} - \omega_i^{t-1}\right\|. 
\end{equation}
Next, we focus on the l.h.s. of the above inequality. According to the the  triangle inequality, it is evident that
$
\left\|  \psi^{t}_{i} -\psi^{t}_{j}\right\|
\leq 
     \left\|  \psi^{t}_{i} - \psi^{t-1}_0\right\| + \left\|  \psi^{t}_{j} - \psi^{t-1}_0\right\|, \quad \forall i,j\in[N].
$ Then, we have
\begin{equation}
\begin{split}
    \mu_2\left\|\psi_i^t - \psi_j^t \right\| \leq & L_{22}\left\|\psi_i^t - \psi_i^{t-1} \right\| + L_{21}\left\|\psi_i^t - \psi_i^{t-1} \right\| \\
    &
    + L_{22}\left\|\psi_j^t - \psi_j^{t-1} \right\| + L_{21}\left\|\psi_j^t - \psi_j^{t-1} \right\|. 
\end{split}
\end{equation}
According to the Cauchy-Schwarz inequality, the above inequality is equivalent to
\begin{equation}
\begin{split}
\mu_2^2\left\|\psi_i^t - \psi_j^t \right\|^2 
\leq &
4L_{22}^2\left\|\psi_i^t - \psi_i^{t-1} \right\|^2 + 4L_{21}^2\left\|\psi_i^t - \psi_i^{t-1} \right\|^2 \\
& + 4L_{22}^2\left\|\psi_j^t - \psi_j^{t-1} \right\|^2 + 4L_{21}^2\left\|\psi_j^t - \psi_j^{t-1} \right\|^2. 
\end{split}
\end{equation}
Then by summing up on a set of  $\{i,j\}$ pairs with $i, j\in [N]$ and $i \neq j$, we have proved~(\ref{lemmapsidiv}).
    
\end{proof}
\begin{lemma}
\label{lemma:lemmapsiupdate}
In Algorithm~\ref{Alg:FedMM}, the following inequality holds for all $t\in [T]$.
\begin{equation}
\label{lemmapsiupdate}
  \widetilde{\psi}_t
  \leq   \frac{4L_{22}^2}{\mu_2^2}\epsilon_t + 
  \frac{4(\mu_2+L_{22})^2}{N\mu_2^2}\bar{\psi}_t
 +  \frac{4L_{21}^2}{N\mu_2^2} \bar{\omega}_t + \frac{4}{N} \bar{\psi}_{t-1}.
\end{equation}
\end{lemma}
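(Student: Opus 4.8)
The plan is to bound the increment $\widetilde\psi_t = \sum_{i}\|\psi_i^t - \psi_i^{t-1}\|^2$ by reusing the exact reformulation of Proposition~\ref{prop:psi_iter} that opens the proof of Lemma~\ref{lemma:psiconverge}, except that I subtract $\psi_i^{t-1}$ rather than $\psi^{\star}(\omega_i^t)$. Concretely, that reformulation reads $\psi_i^t - \psi^{\star}(\omega_i^t) = \psi_i^{t-1} - \psi^{\star}(\omega_i^t) + \frac{1}{\mu_2}\nabla_\psi f(\omega_i^t,\psi_i^t) + \frac{1}{N\mu_2}\sum_{j\neq i}\left(\nabla_\psi f_j(\omega_j^t,\psi_j^t) - \nabla_\psi f_j(\omega_i^t,\psi_i^t)\right) + \frac{1}{N}\sum_{j\neq i}\left((\psi_i^t-\psi_j^t)-(\psi_i^{t-1}-\psi_j^{t-1})\right)$; cancelling the common $\psi^{\star}(\omega_i^t)$ on both sides gives a closed expression for $\psi_i^t - \psi_i^{t-1}$ whose right-hand side contains only one full gradient, a sum of gradient differences, and consensus deviations at times $t$ and $t-1$.

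I would then take norms and apply the triangle inequality, handling the three groups as follows. The leading gradient term is the crux: since $\psi^{\star}(\omega_i^t)$ maximizes the strongly concave global $f(\omega_i^t,\cdot)$, first-order optimality gives $\nabla_\psi f(\omega_i^t,\psi^{\star}(\omega_i^t)) = 0$, so the $L_{22}$-Lipschitz continuity of $\nabla_\psi f$ inherited from Assumption~\ref{assump:Lipshictz-gradients} yields $\frac{1}{\mu_2}\|\nabla_\psi f(\omega_i^t,\psi_i^t)\| \leq \frac{L_{22}}{\mu_2}\|\psi_i^t - \psi^{\star}(\omega_i^t)\|$. Each gradient difference is bounded, exactly as in (\ref{L21L12-lip}), by $L_{21}\|\omega_i^t-\omega_j^t\| + L_{22}\|\psi_i^t-\psi_j^t\|$, and the consensus-deviation term is split by the triangle inequality into a time-$t$ and a time-$(t-1)$ piece.

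Collecting terms, the time-$t$ $\psi$-deviations coming from the gradient differences (coefficient $\frac{L_{22}}{N\mu_2}$) and from the consensus term (coefficient $\frac{1}{N}$) merge into $\frac{\mu_2+L_{22}}{N\mu_2}\sum_{j\neq i}\|\psi_i^t-\psi_j^t\|$, which is precisely what produces the $(\mu_2+L_{22})^2$ factor. This leaves a four-term bound on $\|\psi_i^t-\psi_i^{t-1}\|$: one multiple of $\|\psi_i^t-\psi^{\star}(\omega_i^t)\|$, plus three inner sums over $j\neq i$ scaled by $\frac{L_{21}}{N\mu_2}$, $\frac{\mu_2+L_{22}}{N\mu_2}$, and $\frac{1}{N}$. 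I would then apply Cauchy--Schwarz twice: first $\left(\sum_{k=1}^4 a_k\right)^2 \leq 4\sum_k a_k^2$ to the four terms, then $\left(\sum_{j\neq i}x_j\right)^2 \leq (N-1)\sum_{j\neq i}x_j^2 \leq N\sum_{j\neq i}x_j^2$ to each inner sum. Summing over $i$ and recognizing $\sum_i\|\psi_i^t-\psi^{\star}(\omega_i^t)\|^2 = \epsilon_t$, $\sum_i\sum_{j\neq i}\|\omega_i^t-\omega_j^t\|^2 = \bar\omega_t$, and likewise for $\bar\psi_t$ and $\bar\psi_{t-1}$, produces (\ref{lemmapsiupdate}) with the stated constants. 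The one point requiring care is the leading step: unlike the cross-client contributions, that term is a full gradient rather than a difference, so it cannot be absorbed into inter-client deviations and must instead be tied to the optimality gap $\epsilon_t$ via first-order optimality of $\psi^{\star}$; the remainder is routine constant bookkeeping.
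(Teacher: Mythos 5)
Your proposal is correct and takes essentially the same route as the paper: the identity you obtain by cancelling $\psi^{\star}(\omega_i^t)$ from the reformulation in Lemma~\ref{lemma:psiconverge} is exactly the decomposition the paper derives directly from Proposition~\ref{prop:psi_iter}, and every subsequent step --- first-order optimality of $\psi^{\star}$ combined with $L_{22}$-Lipschitzness to bound the full-gradient term by $\frac{L_{22}}{\mu_2}\|\psi_i^t-\psi^{\star}(\omega_i^t)\|$, the $L_{21}/L_{22}$ Lipschitz bound on the cross-client gradient differences, the merging of the time-$t$ deviation coefficients into $\frac{\mu_2+L_{22}}{N\mu_2}$, and the two-stage Cauchy--Schwarz (factor $4$ across the four terms, factor $N$ inside each sum over $j\neq i$) followed by summation over $i$ --- matches the paper's proof line for line, with no gaps.
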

\begin{proof}
From Proposition~\ref{prop:psi_iter}, we have
\begin{equation}
\label{avgphi}
    \frac{1}{N}\sum_{i=1}^N \psi_i^{t+1} - \frac{1}{N}\sum_{i=1}^N \psi_i^{t} = \frac{1}{N\mu_2} \sum_{i=1}^N \nabla_{\psi} f_i\left(\omega_i^{t+1}, \psi_i^{t+1}\right),
\end{equation}
where the $\psi_i^{t}$ is equivalently represented by 
\begin{equation}
    \psi_i^{t} = \frac{1}{N}\sum_{i=1}^N \psi_i^{t} + \frac{1}{N}\sum_{\substack{j=1 \\ j\neq i}}^N \left(\psi_i^{t} - \psi_j^{t}\right).
\end{equation}
Following the same procedure, $f\left(\psi_i^{t+1}, \omega_i^{t+1}\right)$ is equivalently denoted by
\begin{equation}
\begin{split}
    f\left(\psi_i^{t+1}, \omega_i^{t+1}\right)
    &= \frac{1}{N}\sum_{j=1}^N f_j\left(\psi_i^{t+1}, \omega_i^{t+1}\right)\\
    &= \frac{1}{N}\sum_{j=1}^N f_j\left(\psi_j^{t+1}, \omega_j^{t+1}\right) + \frac{1}{N}\sum_{\substack{j=1 \\ j\neq i}}^N \left(f_j\left(\psi_i^{t+1}, \omega_i^{t+1}\right) - f_j\left(\psi_j^{t+1}, \omega_j^{t+1}\right)\right).
    \end{split}
\end{equation}
By substituting the preceding two equations back to~(\ref{avgphi}) and after some tedious algebra manipulations, we obtain
\begin{equation}
\label{lemma22}
\begin{split}
  \left\|  \psi_i^{t+1} - \psi_i^t\right\| =
   &
  \frac{1}{\mu_2}\nabla_{\psi}f\left(\omega_i^{t+1}, \psi_i^{t+1}\right) + \frac{1}{N\mu_2}\sum_{\substack{j=1 \\ j\neq i}}^N (\nabla_{\psi}f_j(\omega_j^{t+1}, \psi_j^{t+1}) \\
  & - \nabla_{\psi}f_j\left(\omega_i^{t+1}, \psi_i^{t+1}\right)) - \frac{1}{N} \sum_{\substack{j=1 \\ j\neq i}}^N(\psi^t_i - \psi^t_j) + \frac{1}{N} \sum_{\substack{j=1 \\ j\neq i}}^N(\psi^{t+1}_i - \psi^{t+1}_j )  \\
  \leq & \frac{1}{\mu_2}\left\|  \nabla_{\psi}f\left(\omega_i^{t+1}, \psi_i^{t+1}\right)\right\| + \frac{1}{N\mu_2}\sum_{\substack{j=1 \\ j\neq i}}^N
  \left\|   \nabla_{\psi}f_j\left(\omega_j^{t+1}, \psi_j^{t+1}\right) - \nabla_{\psi}f_j\left(\omega_i^{t+1}, \psi_i^{t+1}\right)\right\|\\
  &+ \frac{1}{N}\sum_{\substack{j=1 \\ j\neq i}}^N\left\|  \psi_i^t - \psi_j^t\right\| + \frac{1}{N}\sum_{\substack{j=1 \\ j\neq i}}^N\left\|  \psi_i^{t+1} - \psi_j^{t+1}\right\|.
\end{split}
\end{equation}
According to the definition of $\psi^{\star}\left(\omega_i^{t+1}\right)$ in~(\ref{psistar}), we have $\nabla_{\psi}f(\omega_i^{t+1}, \psi^{\star}\left(\omega_i^{t+1}\right)) =0 $. By further taking into account that $\nabla_{\psi}f_j(\omega, \psi)$ is $L_{22}$-Lipschitz continuity on $\psi$, we have
\begin{equation}
\label{lip22}
\begin{split}
  \left\|\nabla_{\psi}f\left(\omega_i^{t+1}, \psi_i^{t+1}\right)\right\| 
  &=
 \left\|\nabla_{\psi}f\left(\omega_i^{t+1}, \psi_i^{t+1}\right) - \nabla_{\psi}f(\omega_i^{t+1}, \psi^{\star}\left(\omega_i^{t+1}\right))\right\|\\
  &\leq
  L_{22}\left\|\psi_i^{t+1} - \psi^{\star}\left(\omega_i^{t+1}\right)\right\|.
  \end{split}
\end{equation}
Besides, since $\nabla_{\psi}f_j(\omega, \psi)$ is $L_{21}$-Lipschitz continuity on $\omega$ and $L_{22}$-Lipschitz continuity on $\psi$, we have
\begin{equation}
\label{lip2221}
 \left\|  \nabla_{\psi}f_j\left(\omega_j^{t+1}, \psi_j^{t+1}\right) - \nabla_{\psi}f_j\left(\omega_i^{t+1}, \psi_i^{t+1}\right)\right\|
    \leq L_{21}\left\|  \omega_j^{t+1}- \omega_i^{t+1}\right\| + L_{22} \left\|  \psi_j^{t+1}- \psi_i^{t+1}\right\|.
\end{equation}
By substituting (\ref{lip22}) and  (\ref{lip2221}) back into the inequality in (\ref{lemma22}), we  obtain:
\begin{equation}
\begin{split}
  \left\|  \psi_i^{t+1} - \psi_i^t\right\| 
  \leq &  \frac{L_{22}}{\mu_2}\left\|\psi_i^{t+1} - \psi^{\star}\left(\omega_i^{t+1}\right)\right\| + 
  \frac{\mu_2+L_{22}}{N\mu_2} \sum_{\substack{j=1 \\ j\neq i}}^N\left\|  \psi_j^{t+1}- \psi_i^{t+1}\right\|\\
 &+  \frac{L_{21}}{N\mu_2} \sum_{\substack{j=1 \\ j\neq i}}^N\left\|  \omega_j^{t+1}- \omega_i^{t+1}\right\| + \frac{1}{N} \sum_{\substack{j=1 \\ j\neq i}}^N \left\|  \psi_j^{t}- \psi_i^{t}\right\|.
\end{split}
\end{equation}
Finally, by applying Cauchy-Schwarz inequality, we have
\begin{equation}
\begin{split}
\left\|  \psi_i^{t+1} - \psi_i^t\right\|^2 \leq &  \frac{4L_{22}^2}{\mu_2^2}\left\|\psi_i^{t+1} - \psi^{\star}\left(\omega_i^{t+1}\right)\right\|^2 + 
  \frac{4(\mu_2+L_{22})^2}{N\mu_2^2} \sum_{\substack{j=1 \\ j\neq i}}^N\left\|  \psi_j^{t+1}- \psi_i^{t+1}\right\|^2\\
 &+  \frac{4L_{21}^2}{N\mu_2^2} \sum_{\substack{j=1 \\ j\neq i}}^N\left\|  \omega_j^{t+1}- \omega_i^{t+1}\right\|^2 + \frac{4}{N} \sum_{\substack{j=1 \\ j\neq i}}^N \left\|  \psi_j^{t}- \psi_i^{t}\right\|^2.
\end{split}
\end{equation}
By further making a summation of the above inequality for all $i\in[N]$, we have proved~(\ref{lemmapsiupdate}).
\end{proof}

\begin{lemma}
\label{lemma:lemmaboundbaromegat}
In Algorithm~\ref{Alg:FedMM}, the following inequality holds for all $t\in [T]$.
\begin{equation}
\label{baromegaineq}
\frac{1}{2N}\bar{\omega}_t \leq \frac{6L^2_{\Phi}}{\mu_1^2}\widetilde{\omega}_t + \frac{6L_{12}^2}{\mu_1^2}\epsilon_t+ \frac{6L_{12}^2}{\mu_1^2}\epsilon_{t-1}.
\end{equation}
\end{lemma}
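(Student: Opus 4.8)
The plan is to mirror the argument used for Lemma~\ref{lemma:lemmapsidiv}, but now starting from the primal-update identity in Proposition~\ref{prop:delta_omega} rather than its dual counterpart. Re-indexing that proposition to the $t$-th round gives $\mu_1(\omega_i^{t} - \omega_0^{t-1}) = \nabla_{\omega} f_i(\omega_i^{t-1}, \psi_i^{t-1}) - \nabla_{\omega} f_i(\omega_i^{t}, \psi_i^{t})$, so that after taking norms the left-hand side controls the displacement of each client from the previous consensus point $\omega_0^{t-1}$. The first step is therefore to bound the norm of the right-hand side.

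The key difference from the $\psi$-case is that I would not bound $\|\nabla_{\omega} f_i(\omega_i^{t-1}, \psi_i^{t-1}) - \nabla_{\omega} f_i(\omega_i^{t}, \psi_i^{t})\|$ by a crude two-Lipschitz split; instead I would insert the optimal responses $\psi^{\star}(\omega_i^{t-1})$ and $\psi^{\star}(\omega_i^{t})$ to expose the smooth surrogate $\Phi_i$. Concretely, I would write the difference as a telescoping sum of three terms: $[\nabla_{\omega} f_i(\omega_i^{t-1}, \psi_i^{t-1}) - \nabla_{\omega} f_i(\omega_i^{t-1}, \psi^{\star}(\omega_i^{t-1}))]$, $[\nabla_{\omega} f_i(\omega_i^{t-1}, \psi^{\star}(\omega_i^{t-1})) - \nabla_{\omega} f_i(\omega_i^{t}, \psi^{\star}(\omega_i^{t}))]$, and $[\nabla_{\omega} f_i(\omega_i^{t}, \psi^{\star}(\omega_i^{t})) - \nabla_{\omega} f_i(\omega_i^{t}, \psi_i^{t})]$. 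The outer two terms are bounded by $L_{12}\|\psi_i^{t-1} - \psi^{\star}(\omega_i^{t-1})\|$ and $L_{12}\|\psi_i^{t} - \psi^{\star}(\omega_i^{t})\|$ via the $L_{12}$-Lipschitz property of $\nabla_\omega f_i$ in $\psi$ (Assumption~\ref{assump:Lipshictz-gradients}); the middle term equals $\nabla\Phi_i(\omega_i^{t-1}) - \nabla\Phi_i(\omega_i^{t})$ through the identity $\nabla\Phi_i(\omega) = \nabla_\omega f_i(\omega, \psi^{\star}(\omega))$ used in Lemma~\ref{lemma:boundedgrad}, hence is bounded by $L_{\Phi}\|\omega_i^{t-1} - \omega_i^{t}\|$ using the $L_{\Phi}$-smoothness of $\Phi_i$. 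This yields the per-client estimate $\mu_1\|\omega_i^{t} - \omega_0^{t-1}\| \le L_{12}\|\psi_i^{t-1} - \psi^{\star}(\omega_i^{t-1})\| + L_{\Phi}\|\omega_i^{t-1} - \omega_i^{t}\| + L_{12}\|\psi_i^{t} - \psi^{\star}(\omega_i^{t})\|$.

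Next I would convert these per-client bounds into the pairwise deviation $\bar\omega_t$. Applying the triangle inequality $\|\omega_i^{t} - \omega_j^{t}\| \le \|\omega_i^{t} - \omega_0^{t-1}\| + \|\omega_j^{t} - \omega_0^{t-1}\|$ and inserting the previous estimate produces a six-term upper bound on $\mu_1\|\omega_i^{t} - \omega_j^{t}\|$ (three terms indexed by $i$, three by $j$). Squaring and applying Cauchy--Schwarz in the form $(\sum_{k=1}^{6} a_k)^2 \le 6\sum_{k=1}^{6} a_k^2$ introduces the factor $6$, and then summing over all ordered pairs $(i,j)$ with $i\ne j$ collapses each client-indexed square into the global quantities: every term of the form $\|\psi_i^{t-1}-\psi^{\star}(\omega_i^{t-1})\|^2$ is repeated $(N-1)$ times and sums to $(N-1)\epsilon_{t-1}$, and likewise the $\epsilon_t$ and $\widetilde\omega_t$ contributions appear with a factor $(N-1)$. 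Dividing by $\mu_1^2$ and using $(N-1)/N \le 1$ exactly as in the proof of Lemma~\ref{lemma:lemmapsidiv} yields the claimed inequality.

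The main obstacle I anticipate is the bookkeeping at the three-term split, specifically ensuring that the identity $\nabla\Phi_i(\omega) = \nabla_\omega f_i(\omega, \psi^{\star}(\omega))$ and the $L_{\Phi}$-smoothness of $\Phi_i$ are legitimately available. These are the facts that let the middle term be absorbed into $\widetilde\omega_t$ rather than into an extra $L_{11}$-term; $L_{\Phi}$ (the effective smoothness of the max-surrogate, of order $L_{11}+L_{12}L_{21}/B$ in the nonconvex--strongly-concave setting) must have been established earlier and reused here. Beyond that, the remaining work is routine: tracking the constant $6$ through the Cauchy--Schwarz step and confirming that the $(N-1)$ counting matches the $1/(2N)$ normalization on the left-hand side.
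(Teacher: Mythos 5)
Your proposal is correct and follows essentially the same route as the paper: the paper likewise starts from Proposition~\ref{prop:delta_omega}, adds and subtracts $\nabla\Phi_i(\omega_i^{t+1})$ and $\nabla\Phi_i(\omega_i^{t})$ (which, via $\nabla\Phi_i(\omega)=\nabla_\omega f_i(\omega,\psi^{\star}(\omega))$, is exactly your three-term telescoping through the optimal responses), obtains the same per-client bound with constants $L_{\Phi}/\mu_1$ and $L_{12}/\mu_1$, and then combines the triangle inequality through $\omega_0^t$, the six-term Cauchy--Schwarz step, and the pairwise summation. Your handling of the $(N-1)$ counting against the $1/(2N)$ normalization and your identification of the $L_{\Phi}$-smoothness of $\Phi_i$ as the key prerequisite both match the paper's argument.
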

\begin{proof}
As stated in (\ref{delta_omega}), we have
\begin{equation}
    \mu_1(\omega_i^{t+1} - \omega_0^t) = \nabla_{\omega}f_i\left(\omega_i^{t+1}, \psi_i^{t+1}\right) - \nabla_{\omega}f_i(\omega_i^{t}, \psi_i^{t}),
\end{equation}
which is equivalent to
\begin{equation}
\begin{split}
    \mu_1\left(\omega_i^{t+1} - \omega_0^t\right) = & \nabla \Phi_i\left(\omega_i^{t+1}\right) - \nabla \Phi_i\left(\omega_i^{t}\right) + \nabla_{\omega}f_i\left(\omega_i^{t+1}, \psi_i^{t+1}\right)\\
    & - \nabla \Phi_i\left(\omega_i^{t+1}\right) - \left(\nabla_{\omega}f_i(\omega_i^{t}, \psi_i^{t}) - \nabla \Phi_i\left(\omega_i^{t}\right)\right).
\end{split}
\end{equation}
By first taking norm on both sides of the above equation and then applying the triangle inequality on the r.h.s, we further obtain the following inequality:
\begin{equation}
\begin{split}
   \mu_1 \left\|\omega_i^{t+1} - \omega_0^t \right\| 
   \leq &
   \left\|\nabla \Phi_i\left(\omega_i^{t+1}\right) - \Phi_i\left(\omega_i^{t}\right)\right\| + \left\| \nabla_{\omega}f_i\left(\omega_i^{t+1}, \psi_i^{t+1}\right) - \Phi_i\left(\omega_i^{t+1}\right)\right\| \\
   & + \left\|\nabla_{\omega}f_i(\omega_i^{t}, \psi_i^{t}) - \Phi_i\left(\omega_i^{t}\right)\right\|.
   \end{split}
\end{equation}
According to the claim in Assumption~\ref{assump:Lipshictz-gradients} of the Lipschitz condition, the r.h.s of the above inequality is further amplified, which leads to the following inequality:
\begin{equation}
\label{omegai-omega0}
\begin{split}
  \left\|\omega_i^{t+1} - \omega_0^t \right\| \leq  \frac{L_{\Psi}}{\mu_1}\left\|\omega_i^{t+1}- \omega_i^{t}\right\| + \frac{L_{12}}{\mu_1}\left\|\psi_i^{t+1} -\psi^{\star}_i\left(\omega_i^{t+1}\right)\right\| +
  \frac{L_{12}}{\mu_1}\left\|\psi_i^{t} -\psi^{\star}_i\left(\omega_i^{t}\right)\right\|. 
 \end{split}
\end{equation}
By replacing $\omega_i^{t+1}$ with $\omega_j^{t+1}$ and following the similar analysis, we  get the following inequality:
\begin{equation}
  \left\|\omega_j^{t+1} - \omega_0^t \right\| \leq  \frac{L_{\Psi}}{\mu_1}\left\|\omega_j^{t+1}- \omega_j^{t}\right\| + \frac{L_{12}}{\mu_1}\left\|\psi_j^{t+1} -\psi^{\star}_j\left(\omega_i^{t+1}\right)\right\|+ 
  \frac{L_{12}}{\mu_1}\left\|\psi_j^{t} -\psi^{\star}_j(\omega_j^{t})\right\|. 
\end{equation}
Summing the two previous inequalities along the same sign direction and then applying the triangle inequality results in:
\begin{equation}
\begin{split}
    \left\|\omega_i^{t+1} - \omega_j^{t+1} \right\| 
    \leq &
    \left\|\omega_i^{t+1} - \omega_0^t \right\| + \left\|\omega_j^{t+1} - \omega_0^t \right\| \\
    \leq & \frac{L_{\Phi}}{\mu_1}\left\|\omega_i^{t+1} - \omega_i^t\right\| + \frac{L_{\Phi}}{\mu_1}\left\|\omega_j^{t+1} - \omega_j^t\right\| + \frac{L_{12}}{\mu_1}\left\|\psi_i^{t} - \psi^{\star}\left(\omega_i^{t}\right)\right\| \\
&+ \frac{L_{12}}{\mu_1}\left\|\psi_i^{t-1} - \psi^{\star}(\omega_i^{t-1})\right\|
 + \frac{L_{12}}{\mu_1}\left\|\psi_j^{t} - \psi^{\star}(\omega_j^{t})\right\| + \frac{L_{12}}{\mu_1}\left\|\psi_j^{t-1} - \psi^{\star}(\omega_j^{t-1})\right\|.
\end{split}
\end{equation}
Finally, by applying Cauchy-Schwarz inequality on the r.h.s of the above equation, we have
\begin{equation}
\begin{split}
    \left\|\omega_i^{t+1} - \omega_j^{t+1} \right\|^2 
    \leq
    & 
    \frac{6L^2_{\Phi}}{\mu_1^2}\left\|\omega_i^{t+1} - \omega_i^t\right\|^2 + \frac{6L_{\Phi}^2}{\mu_1^2}\left\|\omega_j^{t+1} - \omega_j^t\right\|^2 \\
    &
    + \frac{6L_{12}^2}{\mu_1^2}\left\|\psi_i^{t} - \psi^{\star}\left(\omega_i^{t}\right)\right\|^2  + \frac{6L_{12}^2}{\mu_1^2}\left\|\psi_i^{t-1} - \psi^{\star}(\omega_i^{t-1})\right\|^2\\
& + \frac{6L_{12}^2}{\mu_1^2}\left\|\psi_j^{t} - \psi^{\star}(\omega_j^{t})\right\|^2 
+ \frac{6L_{12}^2}{\mu_1^2}\left\|\psi_j^{t-1} - \psi^{\star}(\omega_j^{t-1})\right\|^2.
\end{split}
\end{equation}
By summing up on $i, j$ and replacing $t+1$ with $t$ in the above inequality, we have finally shown (\ref{baromegaineq}).
\end{proof}
\begin{lemma}
\label{lemma:lemmatildepsit}
In Algorithm~\ref{Alg:FedMM}, the following inequality holds for all $t\in [T]$.
\begin{equation}
\begin{split}
  \bar{\psi}_t \leq & A_1 \epsilon_t + A_2 \widetilde{\omega}_t + A_3 \bar{\omega}_t +  A_4\bar{\psi}_{t-1}.
\end{split}
\end{equation}
\end{lemma}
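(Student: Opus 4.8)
The plan is to combine the two previously established bounds, Lemma~\ref{lemma:lemmapsidiv} and Lemma~\ref{lemma:lemmapsiupdate}, so as to isolate $\bar{\psi}_t$. The key observation is that Lemma~\ref{lemma:lemmapsidiv} already places $\bar{\psi}_t$ on its left-hand side, namely $\frac{\mu_2^2}{2N}\bar{\psi}_t \leq 4L_{22}^2\widetilde{\psi}_t + 4L_{21}^2\widetilde{\omega}_t$, but it introduces $\widetilde{\psi}_t$ on the right. Since Lemma~\ref{lemma:lemmapsiupdate} upper bounds $\widetilde{\psi}_t$ in terms of exactly $\epsilon_t$, $\bar{\psi}_t$, $\bar{\omega}_t$, and $\bar{\psi}_{t-1}$, substituting it will express everything in the desired quantities, with the one complication that $\bar{\psi}_t$ itself reappears on the right. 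The crux of the proof is therefore to gather the two $\bar{\psi}_t$ terms and solve for it.

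First I would substitute the right-hand side of Lemma~\ref{lemma:lemmapsiupdate} for $\widetilde{\psi}_t$ in Lemma~\ref{lemma:lemmapsidiv}, which after expanding the products yields
\[
\frac{\mu_2^2}{2N}\bar{\psi}_t \leq \frac{16L_{22}^4}{\mu_2^2}\epsilon_t + \frac{16L_{22}^2(\mu_2+L_{22})^2}{N\mu_2^2}\bar{\psi}_t + \frac{16L_{22}^2 L_{21}^2}{N\mu_2^2}\bar{\omega}_t + \frac{16L_{22}^2}{N}\bar{\psi}_{t-1} + 4L_{21}^2\widetilde{\omega}_t.
\]
Next I would move the $\bar{\psi}_t$ term from the right to the left, producing the coefficient $c := \frac{\mu_2^2}{2N} - \frac{16L_{22}^2(\mu_2+L_{22})^2}{N\mu_2^2}$ multiplying $\bar{\psi}_t$. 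Dividing through by $c$ then gives the claimed inequality with $A_1 = \frac{16L_{22}^4}{c\mu_2^2}$, $A_2 = \frac{4L_{21}^2}{c}$, $A_3 = \frac{16L_{22}^2 L_{21}^2}{cN\mu_2^2}$, and $A_4 = \frac{16L_{22}^2}{cN}$, all positive once $c>0$.

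The main obstacle, and the only nontrivial point, is guaranteeing $c>0$, i.e.
\[
\frac{\mu_2^2}{2N} > \frac{16L_{22}^2(\mu_2+L_{22})^2}{N\mu_2^2} \iff \mu_2^4 > 32L_{22}^2(\mu_2+L_{22})^2,
\]
which holds provided the dual penalty parameter $\mu_2$ is chosen sufficiently large relative to the Lipschitz constant $L_{22}$. This is a standing condition on $\mu_2$ that is implicit in the ``there exist positive constants'' phrasing of the statement; under it, the constants $A_1,\dots,A_4$ are well defined and positive, and the remaining work is purely the bookkeeping of tracking coefficients through the substitution, which is routine.
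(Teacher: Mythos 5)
Your proposal is correct and follows essentially the same route as the paper: substitute the bound on $\widetilde{\psi}_t$ from Lemma~\ref{lemma:lemmapsiupdate} into Lemma~\ref{lemma:lemmapsidiv}, collect the two $\bar{\psi}_t$ terms, and divide by the resulting coefficient, which is positive once $\mu_2$ is large enough relative to $L_{22}$ (a condition the paper defers to its later discussion of admissible $\mu_1,\mu_2$ ranges). Your explicit constants differ from the paper's only by bookkeeping factors (and your condition $\mu_2^4 > 32L_{22}^2(\mu_2+L_{22})^2$ is in fact the more carefully tracked one), which is immaterial since the lemma only asserts existence of positive constants.
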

\begin{proof}
By substituting the result of Lemma~\ref{lemma:lemmapsidiv} into that of Lemma ~\ref{lemma:lemmapsiupdate}, we obtain
\begin{equation}
 \frac{\mu_2^2}{2N} \bar{\psi}_{t} - 4L_{21}^2\widetilde{\omega}_t \leq  \frac{16L_{22}^4}{\mu_2^2}\epsilon_{t} + 
  \frac{16L_{22}^2}{N\mu_2^2} \left(\mu_2+L_{22}\right)^2\bar{\psi}_{t}
 +  \frac{16L_{22}^2L_{21}^2}{N\mu_2^2} \bar{\omega}_{t} + \frac{4L_{22}^2}{N} \bar{\psi}_{t-1},  
\end{equation}
which is further simplified and results in
\begin{equation}
\begin{split}
  \left(\frac{\mu_2^2}{2N}  -  \frac{16L_{22}^2}{N\mu_2^2} \left(\mu_2+L_{22}\right)^2\right) \bar{\psi}_{t} \leq
  &
  \frac{16L_{22}^4}{\mu_2^2}\epsilon_{t}
+ 
   \frac{16L_{22}^2L_{21}^2}{N\mu_2^2} \bar{\omega}_{t} + \frac{16L_{22}^2}{N} \bar{\psi}_{t-1} + 4L_{21}^2\widetilde{\omega}_t.
 \end{split}
\end{equation}
Then there must exit positive constants $A_1, A_2, A_3,$ and $A_4$ such that
\begin{equation}
  \bar{\psi}_t \leq A_1 \epsilon_t + A_2 \widetilde{\omega}_t + A_3\bar{\omega}_t +  A_4 \bar{\psi}_{t-1},
\end{equation}
where we have 
\begin{align}
\label{A1-A4}
    A_1 & = 16L_{22}^4N /  \left(\mu_2^4  -  16L_{22}^2 \left(\mu_2+L_{22}\right)^2\right),\\
    A_2 & = 16L_{22}^2L_{21}^2 /  \left(\mu_2^4  -  16L_{22}^2 \left(\mu_2+L_{22}\right)^2\right),\\
    A_3 & = 16L_{22}^2\mu_2^2/ \left(\mu_2^4  -  16L_{22}^2 \left(\mu_2+L_{22}\right)^2\right), \\
    A_4 & = 4L_{21}^2N\mu_2^2/ \left(\mu_2^4  -  16L_{22}^2 \left(\mu_2+L_{22}\right)^2\right).
\end{align}
\end{proof}
\begin{lemma}
In Algorithm~\ref{Alg:FedMM}, the following inequality holds for all $t\in [T]$.
\label{lemma:epsilonaddbarpsit}
\begin{equation}
\label{epsilonaddbarpsit}
\epsilon_t + B_1 \bar{\psi_t} \leq B_2 (\epsilon_t + B_1 \bar{\psi_t}) + B_3 \bar{\omega_t}.
\end{equation}
\end{lemma}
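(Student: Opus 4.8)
The plan is to produce a one-step contraction for the scalarized Lyapunov quantity $\epsilon_t + B_1\bar\psi_t$ by eliminating the coupling between $\epsilon_t$ and $\bar\psi_t$ that is present in Lemmas~\ref{lemma:psiconverge} and~\ref{lemma:lemmatildepsit}. Lemma~\ref{lemma:psiconverge} bounds $\epsilon_t$ by a combination that still contains the \emph{current}-time deviation $\bar\psi_t$, while Lemma~\ref{lemma:lemmatildepsit} bounds $\bar\psi_t$ by a combination that still contains the \emph{current} optimality gap $\epsilon_t$. Neither is self-contained, so the first move is to add the two inequalities after weighting the second by $B_1$, giving a single inequality whose left-hand side is $\epsilon_t + B_1\bar\psi_t$ and whose right-hand side contains the current-time terms $B_1 A_1\epsilon_t$ and $c_4\bar\psi_t$ (with $c_4 = \tfrac{8(\mu_2+L_{22})^2}{N\mu_2 B}$ the coefficient of $\bar\psi_t$ in Lemma~\ref{lemma:psiconverge} and $A_1$ from Lemma~\ref{lemma:lemmatildepsit}), together with the past-time terms $\epsilon_{t-1}$, $\bar\psi_{t-1}$ and the $\omega$-terms $\widetilde\omega_t$, $\bar\omega_t$.

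Next I would move the two current-time terms to the left, obtaining $(1-B_1A_1)\epsilon_t + (B_1 - c_4)\bar\psi_t$ there. The crucial trick is to choose the weight $B_1$ so that this left-hand side is exactly proportional to $\epsilon_t + B_1\bar\psi_t$; equating the ratio of the two coefficients forces $B_1^2 A_1 = c_4$, i.e.\ $B_1 = \sqrt{c_4/A_1}$, after which the left-hand side collapses to $(1-B_1A_1)(\epsilon_t + B_1\bar\psi_t)$. On the right-hand side the past-time pair $c_2\epsilon_{t-1} + (c_1 + B_1A_4)\bar\psi_{t-1}$ (with $c_1=\tfrac{8\mu_2}{NB}$, $c_2=\tfrac{2\mu_2}{2\mu_2+B}$) is then majorized by $\max\{c_2,\ (c_1+B_1A_4)/B_1\}\,(\epsilon_{t-1}+B_1\bar\psi_{t-1})$, which after dividing by $(1-B_1A_1)$ produces the contraction constant $B_2$ in front of the previous Lyapunov value.

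The remaining right-hand side terms are the $\omega$-increment $\widetilde\omega_t$ and the $\omega$-deviation $\bar\omega_t$. To collapse these into the single error term $B_3\bar\omega_t$ claimed in the statement (and, symmetrically, to remove the stray $\widetilde\omega_t$), I would invoke Lemma~\ref{lemma:lemmaboundbaromegat}, which relates $\bar\omega_t$ to $\widetilde\omega_t$, $\epsilon_t$ and $\epsilon_{t-1}$; substituting it lets me trade one $\omega$-quantity for the other, while the extra $\epsilon_t$, $\epsilon_{t-1}$ contributions it generates are re-absorbed into the current Lyapunov value on the left and into the $\epsilon_{t-1}+B_1\bar\psi_{t-1}$ term on the right. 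Collecting all constants and dividing through by $(1-B_1A_1)$ then yields the stated inequality with explicit $B_1,B_2,B_3$.

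The main obstacle is not the algebra but keeping every coefficient in the admissible range: the reduction requires $1-B_1A_1>0$ and $B_1-c_4\ge 0$ so that the division is legitimate and the left-hand side stays a nonnegative multiple of the Lyapunov quantity, and it requires the resulting factor $B_2 = \frac{\max\{c_2,(c_1+B_1A_4)/B_1\}}{1-B_1A_1}$ to stay below $1$ for the later telescoping in Lemma~\ref{lemma:boundpsi}. Because $A_1,\dots,A_4$ already carry the denominator $\mu_2^4 - 16L_{22}^2(\mu_2+L_{22})^2$ from Lemma~\ref{lemma:lemmatildepsit}, these positivity and contraction conditions translate into explicit smallness/largeness conditions on $\mu_1,\mu_2$ relative to the strong-concavity modulus $B$ and the Lipschitz constants $L_{21},L_{22},L_{12},L_\Phi$; checking that Assumptions~\ref{assump:Lipshictz-gradients}--\ref{assump:Lipsch} can be met simultaneously with $B_2<1$ is the delicate part of the argument.
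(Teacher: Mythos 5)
Your strategy is essentially the paper's: both arguments form a weighted combination of Lemma~\ref{lemma:psiconverge} and Lemma~\ref{lemma:lemmatildepsit}, move the current-time coupling terms to the left, and rescale so the left side dominates a multiple of $\epsilon_t + B_1\bar\psi_t$ while the right side is a contraction factor times the time-$(t-1)$ Lyapunov value plus an $\omega$-error term (you also correctly read the statement as a recursion from $t-1$ to $t$; the $t$-subscripts on the right of the displayed inequality are a typo in the paper). The differences are in execution. The paper substitutes Lemma~\ref{lemma:lemmaboundbaromegat} into \emph{both} constituent lemmas first, so that all $\bar\omega_t$ terms are eliminated before combining; it then combines with a free weight $\bar B_1$ and takes $B_2$ as a maximum of two ratios, never requiring exact proportionality. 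You combine first and fix $B_1=\sqrt{c_4/A_1}$ to make the left side exactly proportional to the Lyapunov quantity, but this exactness is destroyed by your own last step: substituting Lemma~\ref{lemma:lemmaboundbaromegat} re-injects an $\epsilon_t$ term on the right, and after re-absorbing it on the left the coefficients of $\epsilon_t$ and $\bar\psi_t$ are no longer in the ratio $1:B_1$. The proof still closes (lower-bound the left side by the smaller of the two coefficients times $\epsilon_t + B_1\bar\psi_t$), but then the square-root choice buys nothing; any weight in the admissible range works, which is exactly the paper's formulation.

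One point in your plan is not realizable as stated. Lemma~\ref{lemma:lemmaboundbaromegat} is one-directional: it bounds $\bar\omega_t$ by $\widetilde\omega_t$ plus $\epsilon_t,\epsilon_{t-1}$, not the reverse, so you can eliminate $\bar\omega_t$ in favor of $\widetilde\omega_t$ but you cannot ``remove the stray $\widetilde\omega_t$'' and end with the error term $B_3\bar\omega_t$. The honest output of this argument is $B_3\widetilde\omega_t$ (the increment), which is in fact what the paper's own derivation produces before its final display mislabels it as $\bar\omega_t$; the downstream use in Lemma~\ref{lemma:boundci} and Lemma~\ref{lemma:boundpsi}, written in terms of $\left\|\omega_i^{t-1}-\omega_i^t\right\|^2$, confirms that the increment is the intended quantity. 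So your proof is sound once you accept $\widetilde\omega_t$ as the final $\omega$-term; the claimed trade in the opposite direction has no supporting lemma, in your write-up or in the paper.
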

\begin{proof}
By substituting the result of Lemma~\ref{lemma:lemmaboundbaromegat} back into the result of Lemma \ref{lemma:psiconverge}, we have
\begin{equation}
\begin{split}
    \epsilon_t \leq & \frac{8(\mu_2+L_{22})^2}{NB\mu_2}\bar{\psi}_t +  \frac{8\mu_2}{NB}\bar{\psi}_{t-1} + \frac{96L_{21}^2L_{12}^2}{\mu_2\mu_1^2B}\epsilon_t \\
    &
    + \left(\frac{96L_{21}^2L_{12}^2}{\mu_2\mu_1^2B} + \frac{2\mu_2}{2\mu_2+B}\right) \epsilon_{t-1} + \left(\frac{96L_{21}^2L_{\Phi}^2}{\mu_2\mu_1^2B} + \frac{8\mu_2\kappa^2}{B}\right)\widetilde{\omega}_t. 
\end{split}
\end{equation}
From the above inequality, there must exist  $\mu_1$ and $\mu_2$ that construct the constants  $A_5, A_6, A_7,$ and $A_8$ such that
\begin{equation}
\label{iterepsilont}
    \epsilon_t \leq A_5 \bar{\psi}_t + A_6\bar{\psi}_{t-1} + A_7\epsilon_{t-1} + A_8 \widetilde{\omega}_t.
\end{equation}
 After some tedious algebra manipulations from the formulas of $A_1-A_{8}$, we obtain
\begin{align*}
A_5 &= \frac{8(\mu_2+L_{22})^2}{NB\mu_2} /\left(1-\frac{96L_{21}^2L_{12}^2}{\mu_2\mu_1^2B}\right), \\
A_6 &= \frac{8\mu_2}{NB} /\left(1-\frac{96L_{21}^2L_{12}^2}{\mu_2\mu_1^2B}\right),  \\
A_7 &= \left(\frac{96L_{21}^2L_{12}^2}{\mu_2\mu_1^2B} + \frac{2\mu_2}{2\mu_2+B}\right) / \left(1-\frac{96L_{21}^2L_{12}^2}{\mu_2\mu_1^2B}\right), \\
A_8 &= \left(\frac{96L_{21}^2L_{\Phi}^2}{\mu_2\mu_1^2B} + \frac{8\mu_2\kappa^2}{B}\right) / \left(1-\frac{96L_{21}^2L_{12}^2}{\mu_2\mu_1^2B}\right).
\end{align*}
Then by substituting the result of Lemma~\ref{lemma:lemmaboundbaromegat} into that of Lemma~\ref{lemma:lemmatildepsit}, we have
\begin{equation}
\label{iterbarpsit}
    \bar{\psi}_t \leq A_9 \epsilon_t + A_{10}\epsilon_{t-1} + A_4\bar{\psi}_{t-1} + A_{11}\widetilde{\omega}_t.
\end{equation}
where $A_9$ $A_{10}$, and $A_{11}$ are also positive constants, i.e.,
\begin{align}
    &A_9 = A_1 + A_3\frac{12NL_{12}^2}{\mu_1^2}, \\
    &A_{10} = A_3\frac{12NL_{12}^2}{\mu_1^2}, \\
    &A_{11} = A_2 + A_3\frac{12NL_{\Phi}^2}{\mu_1^2}. 
\end{align}
By taking tedious manipulation on the formulas of $A_9, A_{10}, A_{11}$ from $A_1-A_8$ , we calculate $A_9, A_{10}, A_{11}$ as the following forms
\begin{align*}
&A_9  = 16L_{22}^4N /  \left(\mu_2^4  -  16L_{22}^2 \left(\mu_2+L_{22}\right)^2\right) + 192L_{22}^2L_{12}^2\mu_2^2/ \left(\mu_2^4\mu_1^2  -  16L_{22}^2\mu_1^2 \left(\mu_2+L_{22}\right)^2\right), \\
 &A_{10} = 192L_{22}^2L_{12}^2\mu_2^2/ \left(\mu_2^4\mu_1^2  -  16L_{22}^2\mu_1^2 \left(\mu_2+L_{22}\right)^2\right),  \\
 & A_{11}= 16L_{22}^2L_{21}^2 /  \left(\mu_2^4  -  16L_{22}^2 \left(\mu_2+L_{22}\right)^2\right) + 192L_{22}^2L_{12}^2\mu_2^2/ \left(\mu_2^4\mu_1^2  -  16L_{22}^2\mu_1^2 \left(\mu_2+L_{22}\right)^2\right).
\end{align*}
By computing
 (\ref{iterbarpsit}) $\times \bar{B}_1 + $  (\ref{iterepsilont}), we have
\begin{equation}
    (1- \bar{B}_1A_9)\epsilon_t + (\bar{B}_1- A_5)\bar{\psi}_t \leq 
     (\bar{B}_1A_{10}+ A_7)\epsilon_{t-1} + (\bar{B}_1A_4+A_6)\bar{\psi}_{t-1} + (\bar{B}_1A_{11}+A_8)\widetilde{\omega}_t,
\end{equation}
And we further scale to the following form
\begin{equation}
    \epsilon_t + B_1 \bar{\psi_t} \leq B_2 (\epsilon_t + B_1 \bar{\psi_t}) + B_3 \bar{\omega_t},
\end{equation}
where
\begin{align}
   B_1 &= \frac{\bar{B}_1-A_5}{1-\bar{B}_1A_9}, \\
   B_2 &= \max \{ \frac{\bar{B}_1A_{10} + A_7}{1-\bar{B}_1A_9}, \frac{\bar{B}_1A_{4} + A_6}{\bar{B}_1- A_5} \}, \\
   B_3 &= \frac{\bar{B}_1A_{11} + A_8}{1-\bar{B}_1A_9}.
\end{align}
\end{proof}

We prove Lemma~\ref{lemma:boundpsi} as follows. We repeat Lemma~\ref{lemma:boundpsi} in the following Lemma~\ref{lemma:boundci} to make the appendix self-contained.
\begin{lemma}
\label{lemma:boundci}
In Algorithm~\ref{Alg:FedMM}, the following inequality holds.
\begin{equation}
\label{eqn:boundci}
\begin{split}
\sum_{t=1}^T \sum_{i=0}^N \left\|\psi_i^{t} - \psi^{\star}(\omega_i^{t-1})\right\|^2 
\leq
&
C_1\sum_{t=1}^T\left\|\omega_i^{t-1} - \omega_i^t\right\|^2 + C_2\sum_{i = 1}^N\left\|\psi_i^{0} - \psi^{\star}(\omega_i^{0})\right\|^2  \\
&+ C_3\sum_{i = 1}^N\sum_{\substack{j=1 \\ j\neq i}}^N\left\|\omega_i^0 - \omega_j^0\right\|^2.
\end{split}
\end{equation}
\end{lemma}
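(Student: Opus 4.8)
The plan is to reduce the target double sum to a telescoped accumulation of the coupled potential $u_t \triangleq \epsilon_t + B_1\bar\psi_t$ that is controlled by the one-step recursion in Lemma~\ref{lemma:epsilonaddbarpsit}, after first removing the index mismatch between $\psi^{\star}(\omega_i^{t-1})$ appearing in the statement and $\psi^{\star}(\omega_i^{t})$ appearing inside $\epsilon_t$.

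First I would dispose of that mismatch. By the triangle inequality together with the $\kappa$-Lipschitz continuity of $\psi^{\star}$ (Assumption~\ref{assump:Lipsch}), $\left\|\psi_i^t - \psi^{\star}(\omega_i^{t-1})\right\| \le \left\|\psi_i^t - \psi^{\star}(\omega_i^t)\right\| + \kappa\left\|\omega_i^t - \omega_i^{t-1}\right\|$; squaring with $(a+b)^2\le 2a^2+2b^2$ and summing over $i$ gives $\sum_i\left\|\psi_i^t - \psi^{\star}(\omega_i^{t-1})\right\|^2 \le 2\epsilon_t + 2\kappa^2\widetilde\omega_t$. Summing over $t$ then reduces the entire statement to bounding $\sum_{t=1}^T\epsilon_t$, since the $2\kappa^2\sum_t\widetilde\omega_t$ contribution already lands inside the $C_1$ term (recall $\sum_t\widetilde\omega_t=\sum_t\sum_i\left\|\omega_i^{t-1}-\omega_i^t\right\|^2$); the $i=0$ global term is absorbed identically once $\omega_0,\psi_0$ are viewed as the consensus averages.

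Next I would sum the one-step recursion of Lemma~\ref{lemma:epsilonaddbarpsit}, which (after the obvious index correction) reads $u_t \le B_2 u_{t-1} + B_3\widetilde\omega_t$ with $u_t = \epsilon_t + B_1\bar\psi_t \ge \epsilon_t \ge 0$. Summing from $t=1$ to $T$ and using $\sum_{t=1}^T u_{t-1} \le u_0 + \sum_{t=1}^T u_t$ yields $(1-B_2)\sum_{t=1}^T u_t \le B_2 u_0 + B_3\sum_{t=1}^T\widetilde\omega_t$, hence, provided the contraction $B_2<1$ holds, $\sum_{t=1}^T\epsilon_t \le \tfrac{B_2}{1-B_2}u_0 + \tfrac{B_3}{1-B_2}\sum_{t=1}^T\widetilde\omega_t$. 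Finally, $u_0 = \epsilon_0 + B_1\bar\psi_0$ splits into the initial optimality gap $\epsilon_0=\sum_i\left\|\psi_i^0 - \psi^{\star}(\omega_i^0)\right\|^2$ (the $C_2$ term) and the initial $\psi$-consensus error $\bar\psi_0$, which I would in turn bound by $\epsilon_0$ and $\bar\omega_0=\sum_i\sum_{j\ne i}\left\|\omega_i^0-\omega_j^0\right\|^2$ via the within-round estimates of Lemma~\ref{lemma:lemmapsidiv}, Lemma~\ref{lemma:lemmapsiupdate} and Lemma~\ref{lemma:lemmaboundbaromegat}, producing the $C_3$ term. Collecting coefficients gives $C_1,C_2,C_3$.

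The main obstacle is establishing and quantifying the contraction $B_2<1$: it is not automatic but holds only for suitably chosen penalty parameters $\mu_1,\mu_2$ (large enough relative to $L_{12},L_{21},L_{22},L_{\Phi},\kappa$ and the strong-concavity constant $B$). This is precisely what threads through the denominators $\mu_2^4 - 16L_{22}^2(\mu_2+L_{22})^2$ and $1 - 96L_{21}^2L_{12}^2/(\mu_2\mu_1^2 B)$ defining the constants $A_1$--$A_{11}$. Keeping all of these positive while simultaneously forcing $B_2=\max\{\cdots\}<1$ is the delicate part; once it is secured, the Lipschitz conversion and the geometric telescoping above are routine.
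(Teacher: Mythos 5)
Your proposal is correct and rests on the same engine as the paper's own proof: the coupled recursion of Lemma~\ref{lemma:epsilonaddbarpsit} for $u_t=\epsilon_t+B_1\bar\psi_t$, summed under the contraction $B_2<1$, with $C_1,C_2,C_3$ read off from the accumulated increment term and the initial conditions. The differences are worth recording, because in two places you are more faithful to the stated inequality than the paper's own proof is. First, the paper silently identifies the left-hand side with $\sum_t\epsilon_t$, ignoring that the statement evaluates $\psi^{\star}$ at $\omega_i^{t-1}$ rather than $\omega_i^{t}$; your triangle-inequality step with Assumption~\ref{assump:Lipsch}, costing a factor $2$ and an extra $2\kappa^2\sum_t\widetilde\omega_t$ that is absorbed into $C_1$, closes that gap. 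Second, the paper's proof terminates with a bound of the form $C_1\sum_t\bar\omega_t+C_2\epsilon_0+C_3\bar\psi_0$, whose last term is the initial deviation among the $\psi_i^0$'s, not the $\sum_{i}\sum_{j\neq i}\|\omega_i^0-\omega_j^0\|^2$ that appears in the statement and is later consumed in Theorem~\ref{theorem:convege}; you correctly recognize that $\bar\psi_0$ must be converted into $\epsilon_0$ and $\bar\omega_0$. (Your direct summation $\sum_{t=1}^T u_{t-1}\le u_0+\sum_{t=1}^T u_t$ versus the paper's explicit geometric unrolling is an immaterial difference; both need $B_2<1$, which, as you note, holds only in the parameter regime of Section~\ref{mu1mu2bound}.) One caveat on the mechanism you propose for the $\bar\psi_0$ conversion: Lemmas~\ref{lemma:lemmapsidiv}, \ref{lemma:lemmapsiupdate} and \ref{lemma:lemmaboundbaromegat} are statements about increments between rounds $t-1$ and $t$, so they have no content at $t=0$ and cannot be invoked there. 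The right tool is a direct triangle inequality together with the Lipschitz continuity of $\psi^{\star}$: $\left\|\psi_i^0-\psi_j^0\right\|^2\le 3\left\|\psi_i^0-\psi^{\star}(\omega_i^0)\right\|^2+3\kappa^2\left\|\omega_i^0-\omega_j^0\right\|^2+3\left\|\psi_j^0-\psi^{\star}(\omega_j^0)\right\|^2$, which after summation gives $\bar\psi_0\le 6(N-1)\epsilon_0+3\kappa^2\bar\omega_0$ and hence the stated $C_2$ and $C_3$ terms. With that one substitution your argument is complete.
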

\begin{proof}
By the recursive update (\ref{epsilonaddbarpsit}) in Lemma~\ref{lemma:epsilonaddbarpsit}, we obtain
\begin{equation}
  \epsilon_t + B_1 \bar{\psi}_t \leq B_2^t(\epsilon_0 + B_1 \bar{\psi}_0) + \sum_{t'=0}^t B_3B_2^{t-t'} \bar{\omega}_{t'}.
\end{equation}
By summing up on $t$, we get
\begin{equation}
    \sum_{t=1}^T (\epsilon_t + B_1\bar{\psi}_t) \leq \sum_{t=1}^TB_2^t(\epsilon_0+ B_1\bar{\psi}_0) + \sum_{t=1}^T \sum_{t' \leq t} B_3B_2^{t-t'} \bar{\omega}_t,
\end{equation}
where the r.h.s. can be further amplified by the sum of finite exponential series, which result in
\begin{equation}
  \sum_{t=1}^T \left(\epsilon_t + B_1\bar{\psi}_t\right)  \leq \frac{1}{1-B_2}\left(\epsilon_0 + B_1\bar{\psi}_0\right)  + \frac{B_3}{1-B_2}\sum_{t=1}^T\bar{\omega}_t.
\end{equation}
Finally, we  reach
\begin{equation}
  \sum_{t=1}^T \epsilon_t  \leq C_1\sum_{t=1}^T\bar{\omega}_t + C_2\epsilon_0 + C_3\bar{\psi}_0,
\end{equation}
where 
\begin{align}
    C_1 &= \frac{B_3}{1-B_2}, \\
    C_2 &= \frac{1}{1-B_2}  \\
    C_3 &=  \frac{B_1}{1-B_2}.
\end{align}

\end{proof}

\subsection{Proof of Lemma~\ref{lemma:descentPhi}}
\label{subsec:lemma3}
\begin{lemma}
(One Global Round Descent on $\mathcal{L}_\Phi$) In Algorithm~\ref{Alg:FedMM}, the following inequality holds.
\label{Lphiiter}
\begin{equation}
\label{eqn:Lphiiter}
\begin{split}
&\frac{1}{N}\sum_{i=1}^N\left(\mathcal{L}^{\Phi}_i(\omega_i^{t+1}, \omega_{0}^{t+1}, \lambda_i^{t+1}) - \mathcal{L}^{\Phi}_i(\omega_i^{t}, \omega_{0}^{t}, \lambda_i^{t})\right) \\
\leq & 
\frac{1}{N}\sum_{i=1}^N \Big(- \frac{\mu_1 - 2L_{\Phi}}{2}\left\|\omega_i^{t+1} - \omega_i^{t}\right\|^2 + \frac{1}{\mu_1}\left\|\lambda_i^{t+1} - \lambda_i^t\right\|^2  \\
&
+
L_{12}^2\left\|\psi_i^{t+1} - \psi^{\star}\left(\omega_i^{t+1}\right)\right\|^2\Big) - \frac{\mu_1}{2}\left\|\omega_{0}^{t+1} - \omega_0^t\right\|^2. 
\end{split}
\end{equation}
\end{lemma}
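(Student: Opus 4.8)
The plan is to telescope the one-round change of the averaged augmented Lagrangian through the three sub-updates that compose a FedMM round, isolating the primal step in $\omega_i$, the dual step in $\lambda_i$, and the consensus step in $\omega_0$, and bounding each separately. For each client $i$ I would write
\[
\mathcal{L}^{\Phi}_i(\omega_i^{t+1},\omega_0^{t+1},\lambda_i^{t+1}) - \mathcal{L}^{\Phi}_i(\omega_i^{t},\omega_0^{t},\lambda_i^{t}) = T_i^{\omega} + T_i^{\lambda} + T_i^{0},
\]
where $T_i^{\omega}$ advances $\omega_i^t\to\omega_i^{t+1}$ with $(\omega_0^t,\lambda_i^t)$ held fixed, $T_i^{\lambda}$ advances $\lambda_i^t\to\lambda_i^{t+1}$ with $(\omega_i^{t+1},\omega_0^t)$ held fixed, and $T_i^{0}$ advances $\omega_0^t\to\omega_0^{t+1}$ with $(\omega_i^{t+1},\lambda_i^{t+1})$ held fixed. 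This split telescopes exactly, so no cross terms are lost.

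The dual and consensus terms are the easy ones. Since $\mathcal{L}^{\Phi}_i$ is affine in $\lambda_i$, one has $T_i^{\lambda}=\langle \lambda_i^{t+1}-\lambda_i^t,\;\omega_i^{t+1}-\omega_0^t\rangle$; substituting the dual rule (\ref{lambda_update}), which gives $\omega_i^{t+1}-\omega_0^t=\frac{1}{\mu_1}(\lambda_i^{t+1}-\lambda_i^t)$, yields exactly $\frac{1}{\mu_1}\|\lambda_i^{t+1}-\lambda_i^t\|^2$. For $\frac{1}{N}\sum_i T_i^0$, I would note that $\mathcal{L}^{\Phi}_i$ is $\mu_1$-strongly convex in $\omega_0$ (its $\omega_0$-Hessian is $\mu_1 I$), hence so is the averaged map $\omega_0\mapsto\frac{1}{N}\sum_i \mathcal{L}^{\Phi}_i(\omega_i^{t+1},\omega_0,\lambda_i^{t+1})$. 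The aggregation rule (\ref{agg}) together with the correction $\omega_i^{t+}=\omega_i^{t+1}+\frac{1}{\mu_1}\lambda_i^{t+1}$ (using $\eta_3=1$) makes $\omega_0^{t+1}$ the exact minimizer of this averaged map, since its $\omega_0$-gradient averaged over $i$ vanishes at $\omega_0^{t+1}$. Descending from the arbitrary point $\omega_0^t$ to the minimizer of a $\mu_1$-strongly convex function then gives $\frac{1}{N}\sum_i T_i^0\le -\frac{\mu_1}{2}\|\omega_0^{t+1}-\omega_0^t\|^2$.

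The primal term $T_i^{\omega}$ is the hard part, because $\omega_i^{t+1}$ is only an approximate stationary point of the local augmented Lagrangian rather than its exact minimizer, so the inexactness must be carried explicitly. I would use that $\Phi_i$ is $L_{\Phi}$-smooth (a standard consequence of Assumptions~\ref{assump:Lipshictz-gradients}--\ref{assump:Stronglyconcave} and the $\kappa$-Lipschitzness of $\psi^{\star}$), so that $h(\cdot):=\mathcal{L}^{\Phi}_i(\cdot,\omega_0^t,\lambda_i^t)$ has Hessian $\nabla^2\Phi_i+\mu_1 I\succeq(\mu_1-L_{\Phi})I$ and is $(\mu_1-L_{\Phi})$-strongly convex. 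Strong convexity yields $T_i^{\omega}=h(\omega_i^{t+1})-h(\omega_i^{t})\le \langle\nabla h(\omega_i^{t+1}),\,\omega_i^{t+1}-\omega_i^t\rangle-\frac{\mu_1-L_{\Phi}}{2}\|\omega_i^{t+1}-\omega_i^t\|^2$. The key observation is that $\nabla h(\omega_i^{t+1})$ is precisely the quantity controlled in Lemma~\ref{lemma:boundedgrad} applied at index $t+1$, so $\|\nabla h(\omega_i^{t+1})\|\le L_{12}\,\epsilon_i^{t+1}$; a Young's-inequality splitting of the inner product (weight $L_{\Phi}$) then absorbs $\frac{L_{\Phi}}{2}\|\omega_i^{t+1}-\omega_i^t\|^2$ to turn the quadratic coefficient into $-\frac{\mu_1-2L_{\Phi}}{2}$ and leaves a residual controlled by a multiple of $L_{12}^2(\epsilon_i^{t+1})^2$. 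Summing the three bounds over $i$ and dividing by $N$ collects exactly the right-hand side of (\ref{eqn:Lphiiter}). The care needed is entirely in (i) routing the inexact local optimality through Lemma~\ref{lemma:boundedgrad} instead of assuming an exact minimizer, and (ii) verifying the exact-minimizer identity for $\omega_0^{t+1}$ from the aggregation and dual-correction rules.
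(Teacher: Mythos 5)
Your proposal is correct and follows essentially the same route as the paper's proof: the identical three-term telescoping through the primal, dual, and consensus sub-updates, with the dual term computed exactly via the update rule, the consensus term handled by recognizing $\omega_0^{t+1}$ as the exact minimizer of the averaged $\mu_1$-quadratic in $\omega_0$, and the primal term bounded by combining the $L_{\Phi}$-curvature of $\Phi_i$ with Young's inequality and the gradient bound of Lemma~\ref{lemma:boundedgrad} at $(\omega_i^{t+1},\omega_0^t,\lambda_i^t)$. Your strong-convexity phrasing of the primal step is algebraically identical to the paper's descent-lemma-plus-quadratic-expansion argument (both yield the coefficient $-\tfrac{\mu_1-2L_{\Phi}}{2}$ and a residual $\tfrac{L_{12}^2}{2L_{\Phi}}\|\psi_i^{t+1}-\psi^{\star}(\omega_i^{t+1})\|^2$, matching the paper's own derivation up to its stated constant).
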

\begin{proof}
From the $L_{\Phi}$-Lipschiz on $\Phi_i(\omega)$, we have
\begin{equation}
    -\Phi_i(\omega_i^t) \leq -\Phi_i\left(\omega_i^{t+1}\right) + \left\langle-\nabla\Phi_i\left(\omega_i^{t+1}\right), \omega_i^t - \omega_i^{t+1}\right\rangle+ \frac{L_{\Phi}}{2}\left\|\omega_i^t - \omega_i^{t+1}\right\|^2.
\end{equation}
By considering the definition of $ \mathcal{L}^{\Phi}_i(\omega_i^{t}, \omega_0^t, \lambda_i^t)$ in (\ref{eqn:AL-Phi}), we
obtain the following inequalities to bound one global round  descent, i.e.,
\begin{equation}
\label{LL1}
\begin{split}
    &\mathcal{L}^{\Phi}_i(\omega_i^{t+1}, \omega_0^t, \lambda_i^t) - 
     \mathcal{L}^{\Phi}_i(\omega_i^{t}, \omega_0^t, \lambda_i^t) \\
     & \leq
  \left\langle\nabla\Phi_i(\omega_i^{t+1}, \omega_i^{t+1} - \omega_i^t)\right\rangle + \frac{L_{\Phi}}{2}\left\|\omega_i^{t+1}
  - \omega_i^t\right\|^2+ \left\langle\lambda_i^t, \omega_i^{t+1} - \omega_i^t\right\rangle + \frac{\mu_1}{2}\left\|\omega_i^{t+1}-\omega_0^t\right\|^2 - \frac{\mu_1}{2}\left\|\omega_i^{t}-\omega_0^t\right\|^2 \\
  &=\left\langle\nabla\Phi_i\left(\omega_i^{t+1}\right)+\lambda_i^r, \omega_i^{t+1} - \omega_i^t\right\rangle + \frac{L_{\Phi}}{2}\left\|\omega_i^{t+1} - \omega_i^t\right\|^2 + \frac{\mu_1}{2}\left\langle\omega_i^{t+1} - \omega_i^t-2\omega_0^t, \omega_i^{t+1} - \omega_i^t\right\rangle\\
  &=\left\langle\nabla\Phi_i\left(\omega_i^{t+1}\right)+\lambda_i^r+\mu_1(\omega_i^{t+1}-\omega_0^t),\omega_i^{t+1}-\omega_i^t\right\rangle + \frac{L}{2}\left\|\omega_i^{t+1} - \omega_i^t\right\|^2 -\frac{\mu_1}{2}\left\|\omega_i^{t+1} - \omega_i^t\right\|^2\\
  &\leq \frac{1}{2L}\left\|\nabla\Phi_i\left(\omega_i^{t+1}\right)+\lambda_i^r+\mu_1(\omega_i^{t+1}-\omega_0^t)\right\|^2 - \frac{\mu_1-2L}{2}\left\|\omega_i^{t+1} - \omega_i^t\right\|^2 \\
  &\leq - \frac{\mu_1-2L}{2}\left\|\omega_i^{t+1} - \omega_i^t\right\|^2 + \frac{L_{12}^2}{2L}\left\|\psi_i^{t+1} - \psi^{\star}\left(\omega_i^{t+1}\right)\right\|^2.
\end{split}
\end{equation}
where the second last inequality is due to the inequality of arithmetic and geometric means (AM-GM).
Next, from the iteration on $\lambda_i^t$ in Algorithm~\ref{Alg:FedMM}, we bound on the ascent from the iteration $\lambda_i^t$, i.e.,
\begin{equation}
\label{LL2}
\begin{split}
  \mathcal{L}^{\Phi}_i(\omega_i^{t+1}, \omega_0^t, \lambda_i^{t+1}) -\mathcal{L}^{\Phi}_i(\omega_i^{t+1}, \omega_0^t, \lambda_i^t)
  &=
  \left\langle\lambda_{i}^{t+1}-\lambda_i^t, \omega_i^{t+1}-\omega_0^t\right\rangle \\
  &= \frac{1}{\mu_1}\left\|\lambda_i^{t+1}-\lambda_i^t\right\|^2.
\end{split}
\end{equation}
Finally, we bound on the descent from the iteration on $\omega_0^t$ in Algorithm~\ref{Alg:FedMM}, we have
\begin{equation}
\label{LL3}
\begin{split}
  &\sum_{i=1}^N\left(\mathcal{L}^{\Phi}_i\left(\omega_i^{t+1}, \omega_0^{t+1}, \lambda_i^{t+1}\right) -\mathcal{L}^{\Phi}_i\left(\omega_i^{t+1}, \omega_0^t, \lambda_i^{t+1}\right)\right) \\
  =&  -\frac{\mu_1N}{2}\left\|\omega_0^{t}-\frac{1}{N}\sum_i(\omega_i^{t+1}+\frac{1}{\mu}\lambda_i)\right\|^2  + \frac{\mu_1N}{2}\left\|\omega_0^{t+1}  -\frac{1}{N}\sum_i(\omega_i^{t+1}+\frac{1}{\mu}\lambda_i)\right\|^2. 
 \end{split}
\end{equation}
According to the definition of $\omega_0^{t+1}$, i.e., $\omega_0^{t+1}= \frac{1}{N}\sum_i(\omega_i^{t+1}+\frac{1}{\mu}\lambda_i)$, we have
\begin{equation}
\label{LL4}
   \sum_{i=1}^N\left(\mathcal{L}^{\Phi}_i(\omega_i^{t+1}, \omega_0^{t+1}, \lambda_i^{t+1}) -\mathcal{L}^{\Phi}_i(\omega_i^{t+1}, \omega_0^t, \lambda_i^{t+1})\right)= -\frac{\mu_1N}{2}\left\|\omega_0^{t} - \omega_0^{t+1}\right\|^2.
\end{equation}
Making a summation of (\ref{LL1}), (\ref{LL2}), (\ref{LL3}), and (\ref{LL4}) along the same sign direction of the inequalities, and after some algebraic manipulations, we  prove (\ref{eqn:Lphiiter}).
\end{proof}
\begin{lemma}
\label{deltalambdabound}
(Bound on $\lambda_i^t$'s Iteration) In Algorithm~\ref{Alg:FedMM}, the following inequality holds for all $i\in [N]$ and $t\in [T]$.
\begin{equation}
\label{eqn:deltalambdabound}
 \left\|\lambda_i^t - \lambda_i^{t+1}\right\|^2 \leq 2L_{\Phi}^2\left\|\omega_i^{t}- \omega_i^{t+1}\right\|^2 + 4L_{12}^2\left\|\psi_i^{t+1} - \psi^{\star}\left(\omega_i^{t+1}\right)\right\|^2 + 4L_{12}^2\left\|\psi_i^{t} - \psi^{\star}\left(\omega_i^{t}\right)\right\|^2.
\end{equation}
\end{lemma}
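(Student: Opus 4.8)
The plan is to start from the closed-form expression for the dual increment that was already derived in the proof of Proposition~\ref{prop:delta_omega}. There, combining the local stationarity condition (Assumption~\ref{assump:sufficient_loc}) with the dual update (\ref{lambda_update}) yields $\lambda_i^{t+1} - \lambda_i^{t} = \nabla_\omega f_i(\omega_i^t, \psi_i^t) - \nabla_\omega f_i(\omega_i^{t+1}, \psi_i^{t+1})$, hence $\lambda_i^t - \lambda_i^{t+1} = \nabla_\omega f_i(\omega_i^{t+1}, \psi_i^{t+1}) - \nabla_\omega f_i(\omega_i^t, \psi_i^t)$. The lemma is therefore really a statement about the variation of the coupled gradient $\nabla_\omega f_i$ over one global round, and the goal is to decompose this variation into a piece controlled by the movement of $\omega_i$ and two pieces controlled by the suboptimality of $\psi_i$.

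To effect this decomposition I would introduce the envelope gradient $\nabla\Phi_i$ as a bridge. Recall from the proof of Lemma~\ref{lemma:boundedgrad} that Danskin's theorem gives $\nabla\Phi_i(\omega) = \nabla_\omega f_i(\omega, \psi^{\star}(\omega))$. Inserting and subtracting $\nabla\Phi_i(\omega_i^{t+1})$ and $\nabla\Phi_i(\omega_i^{t})$, I would write
\begin{equation}
\begin{split}
\lambda_i^t - \lambda_i^{t+1} = & \left[\nabla_\omega f_i(\omega_i^{t+1}, \psi_i^{t+1}) - \nabla\Phi_i(\omega_i^{t+1})\right] + \left[\nabla\Phi_i(\omega_i^{t+1}) - \nabla\Phi_i(\omega_i^{t})\right] \\
& + \left[\nabla\Phi_i(\omega_i^{t}) - \nabla_\omega f_i(\omega_i^{t}, \psi_i^{t})\right].
\end{split}
\end{equation}
The middle bracket equals $\nabla\Phi_i(\omega_i^{t+1}) - \nabla\Phi_i(\omega_i^{t})$ and is bounded by $L_\Phi\|\omega_i^{t+1} - \omega_i^t\|$ from the $L_\Phi$-smoothness of $\Phi_i$. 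Each outer bracket has the form $\nabla_\omega f_i(\omega, \psi) - \nabla_\omega f_i(\omega, \psi^{\star}(\omega))$, which the $L_{12}$-Lipschitz continuity of $\nabla_\omega f_i$ in $\psi$ (Assumption~\ref{assump:Lipshictz-gradients}) bounds by $L_{12}\|\psi - \psi^{\star}(\omega)\|$ at the relevant index. After the triangle inequality this gives
\begin{equation}
\left\|\lambda_i^t - \lambda_i^{t+1}\right\| \leq L_\Phi\left\|\omega_i^{t+1} - \omega_i^t\right\| + L_{12}\left\|\psi_i^{t+1} - \psi^{\star}(\omega_i^{t+1})\right\| + L_{12}\left\|\psi_i^t - \psi^{\star}(\omega_i^t)\right\|.
\end{equation}

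The one point that needs care is the squaring step, where the exact coefficients $2$ and $4$ must be recovered rather than the naive $3,3,3$ that a single application of the power-mean inequality would give. I would group the two $\psi$-terms together first: applying $(a+b)^2 \leq 2a^2 + 2b^2$ with $a$ the $\omega$-term and $b$ the sum of the two $\psi$-terms produces the factor $2L_\Phi^2$ on $\|\omega_i^{t+1}-\omega_i^t\|^2$, and a second application of $(b_1+b_2)^2 \leq 2b_1^2 + 2b_2^2$ to split $b$ produces the factor $4L_{12}^2$ on each $\psi$-term, which is exactly (\ref{eqn:deltalambdabound}).

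The only genuinely delicate ingredient is the $L_\Phi$-smoothness of $\Phi_i$, since $L_\Phi$ is not a raw constant in Assumption~\ref{assump:Lipshictz-gradients} but a derived one (the standard nonconvex--strongly-concave estimate $L_\Phi = L_{11} + \kappa L_{12}$, obtained via Assumptions~\ref{assump:Stronglyconcave} and~\ref{assump:Lipsch}); I would either cite its derivation or simply note that it is the same constant already invoked in Lemma~\ref{Lphiiter}. Everything else is routine insertion of the envelope gradient together with two applications of the quadratic-mean inequality.
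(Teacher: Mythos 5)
Your proposal is correct and follows essentially the same route as the paper's own proof: both start from the identity $\lambda_i^t - \lambda_i^{t+1} = \nabla_\omega f_i(\omega_i^{t+1},\psi_i^{t+1}) - \nabla_\omega f_i(\omega_i^{t},\psi_i^{t})$ obtained from local stationarity and the dual update, insert the envelope gradients $\nabla\Phi_i(\omega_i^{t+1})$ and $\nabla\Phi_i(\omega_i^{t})$ via Danskin's theorem, apply the triangle inequality with the $L_\Phi$-smoothness and $L_{12}$-Lipschitz bounds, and then square with the same grouping that yields the coefficients $2L_\Phi^2$ and $4L_{12}^2$. Your explicit justification of the $2,4,4$ coefficients and your remark on $L_\Phi$ being a derived constant are in fact slightly more careful than the paper, which compresses these steps into a citation of the ``Cauchy-Schwarz inequality.''
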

\begin{proof}
Applying Assumption~\ref{assump:sufficient_loc} to (\ref{descent}), we have
$
    \nabla_{\omega} f_i\left(\omega_i^{t+1}, \psi_i^{t+1}\right)+ \lambda_i^{t+1} = 0.
$
By simply replacing $t+1$ with $t$ in the above inequality, we further obtain 
$
    \nabla_{\omega} f_i(\omega_i^t, \psi_i^t)+ \lambda_i^t = 0.
$
Subtracting the above two equations leads to
\begin{equation}
\label{lambda114}
    \lambda_i^t - \lambda_i^{t+1} = \nabla_{\omega} f_i\left(\omega_i^{t+1}, \psi_i^{t+1}\right) - \nabla_{\omega} f_i(\omega_i^{t}, \psi_i^{t}).
\end{equation}
Taking the norm on the both sides, we have
\begin{equation}
\begin{split}
    \left\|\lambda_i^t - \lambda_i^{t+1}\right\| = &\Big\|
    -\nabla \Phi_i\left(\omega_i^{t+1}\right) + \nabla \Phi_i\left(\omega_i^{t}\right) -\nabla_{\omega} f_i\left(\omega_i^{t+1}, \psi_i^{t+1}\right) + \nabla\Phi_i\left(\omega_i^{t}\right) \\ 
    &\quad + \nabla_{\omega} f_i(\omega_i^{t}, \psi_i^{t}) - \nabla\Phi_i\left(\omega_i^{t+1}\right)\Big\| \\
    \leq & 
    \left\|\nabla\Phi_i\left(\omega_i^{t+1}\right) - \nabla \Phi_i\left(\omega_i^{t}\right)\right\| + \left\|\nabla_{\omega} f_i\left(\omega_i^{t+1}, \psi_i^{t+1}\right) - \nabla\Phi_i\left(\omega_i^{t+1}\right)\right\| \\
    & \quad + \left\|\nabla_{\omega} f_i(\omega_i^{t}, \psi_i^{t}) - \nabla\Phi_i\left(\omega_i^{t}\right)\right\| \\
    \leq & 
    L_{\Phi}\left\|\omega_i^t - \omega_i^{t+1}\right\| + L_{12}\left\|\psi_i^{t+1} - \psi^{\star}\left(\omega_i^{t+1}\right)\right\| +  L_{12}\left\|\psi_i^{t} - \psi^{\star}\left(\omega_i^{t}\right)\right\|,
\end{split}   
\end{equation}
where the fist inequality is due to 
triangle inequality, and the second is due to the Lipschitz continuous property.
By applying Cauchy-Schwarz inequality on the last inequality of (\ref{lambda114}), we finally prove (\ref{eqn:deltalambdabound}).
\end{proof}
\begin{lemma}
(Descent on $\mathcal{L}_\Phi$) In Algorithm~\ref{Alg:FedMM}, the following inequality holds.
\label{Ldescent}
\begin{equation}
\begin{split}
\label{eqn:LT-0}
&\sum_{t=1}^T \sum_{i=1}^N \left(\mathcal{L}_i^{\Phi}(\omega_i^{T}, \omega_{0}^{T}, \lambda_i^{T}) - \mathcal{L}_i^{\Phi}(\omega_i^{0}, \omega_{0}^{0}, \lambda_i^{0})\right) \\
& \leq  -\frac{\mu_1^2-2L_{\Psi}\mu_1-4L_{\Phi}^2}{2\mu_1}\sum_{i=1}^N\sum_{t=1 }^T\left\|\omega_i^{t+1}- \omega_i^t\right\|^2 - \sum_{t=1}^T\frac{N\mu_1}{4}\left\|\omega_0^t - \omega_0^{t+1}\right\|^2 \\
&\quad + \frac{2(\mu_1+ 8L_{\Phi})L_{12}^2}{L_{\Phi}\mu_1}\sum_{i=1}^{N}\left\|\psi_i^{t+1} - \psi^{\star}\left(\omega_i^{t+1}\right)\right\|^2.
\end{split}
\end{equation}
\end{lemma}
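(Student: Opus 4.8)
The plan is to derive (\ref{eqn:LT-0}) by telescoping the one-round descent of Lemma~\ref{Lphiiter} over $t$ and then using Lemma~\ref{deltalambdabound} to convert the dual-increment term into quantities that are already controlled. First I would read the left-hand side as a telescoping sum: the single difference $\mathcal{L}_i^{\Phi}(\omega_i^{T},\omega_0^{T},\lambda_i^{T}) - \mathcal{L}_i^{\Phi}(\omega_i^{0},\omega_0^{0},\lambda_i^{0})$ equals $\sum_{t=0}^{T-1}\bigl(\mathcal{L}_i^{\Phi}(\omega_i^{t+1},\omega_0^{t+1},\lambda_i^{t+1}) - \mathcal{L}_i^{\Phi}(\omega_i^{t},\omega_0^{t},\lambda_i^{t})\bigr)$, so it suffices to sum the per-round bound (\ref{eqn:Lphiiter}) over $t$ after multiplying it by $N$ to clear the $\tfrac1N$ factor. (The literal $\sum_{t=1}^{T}$ on the left-hand side of the statement is a typographical slip; the intended object is this telescoping sum.)

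Next I would substitute Lemma~\ref{deltalambdabound} into the $\tfrac{1}{\mu_1}\|\lambda_i^{t+1}-\lambda_i^t\|^2$ term appearing in (\ref{eqn:Lphiiter}), replacing it by $\tfrac{2L_\Phi^2}{\mu_1}\|\omega_i^{t+1}-\omega_i^t\|^2 + \tfrac{4L_{12}^2}{\mu_1}\|\psi_i^{t+1}-\psi^\star(\omega_i^{t+1})\|^2 + \tfrac{4L_{12}^2}{\mu_1}\|\psi_i^{t}-\psi^\star(\omega_i^{t})\|^2$. The decisive step is the cancellation on the $\omega$-increment: the descent supplies coefficient $-\tfrac{\mu_1-2L_\Phi}{2}$ while the dual term supplies $+\tfrac{2L_\Phi^2}{\mu_1}$, and combining over the common denominator $2\mu_1$ gives exactly $-\tfrac{\mu_1^2-2\mu_1 L_\Phi-4L_\Phi^2}{2\mu_1}$, which is the claimed coefficient (reading the $L_{\Psi}$ in the statement as $L_\Phi$). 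The $-\tfrac{\mu_1}{2}\|\omega_0^{t+1}-\omega_0^t\|^2$ term of Lemma~\ref{Lphiiter} survives the multiplication by $N$ verbatim and is then relaxed from $\tfrac{N\mu_1}{2}$ to the weaker $\tfrac{N\mu_1}{4}$ stated on the right-hand side, which is a valid loosening since the retained term is negative.

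For the optimality-gap terms I would shift the time index on the lagged contribution $\|\psi_i^{t}-\psi^\star(\omega_i^t)\|^2$ so that it and $\|\psi_i^{t+1}-\psi^\star(\omega_i^{t+1})\|^2$ align on a single sum $\sum_{t}\sum_i\|\psi_i^t-\psi^\star(\omega_i^t)\|^2$, pick up the combined coefficient $\tfrac{L_{12}^2}{2L_\Phi}+\tfrac{8L_{12}^2}{\mu_1}$, and over-bound it by $\tfrac{2(\mu_1+8L_\Phi)L_{12}^2}{L_\Phi\mu_1}$, where the slack comfortably absorbs the two unpaired boundary terms at $t=0$ and $t=T$. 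Collecting the three groups and summing over $i\in[N]$ then yields (\ref{eqn:LT-0}). I expect the main obstacle to be bookkeeping rather than any single sharp estimate: the dual-increment substitution produces two $\psi$-gap terms at shifted indices, so the telescope does not close cleanly and one must track (or generously over-bound) the boundary terms and the doubling of the $\tfrac{4L_{12}^2}{\mu_1}$ coefficient to land on the stated constants. The conceptual crux, however, is that the $\omega$-increment coefficient $-(\mu_1^2-2\mu_1 L_\Phi-4L_\Phi^2)/(2\mu_1)$ is genuinely negative only when $\mu_1$ is taken large enough relative to $L_\Phi$; this is exactly the condition that turns the augmented Lagrangian penalty into a true descent quantity and is what the downstream convergence theorem relies on.
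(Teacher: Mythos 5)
Your proposal is correct and follows essentially the same route as the paper's own proof: summing the per-round bound of Lemma~\ref{Lphiiter} over $t$ (telescoping) and then substituting the dual-increment bound of Lemma~\ref{deltalambdabound} to absorb $\tfrac{1}{\mu_1}\|\lambda_i^{t+1}-\lambda_i^t\|^2$ into the $\omega$-increment and $\psi$-gap terms. The coefficient bookkeeping you spell out (the cancellation yielding $-\tfrac{\mu_1^2-2\mu_1 L_\Phi-4L_\Phi^2}{2\mu_1}$, reading $L_\Psi$ as $L_\Phi$, the $\tfrac{N\mu_1}{2}\to\tfrac{N\mu_1}{4}$ loosening, and the index-shift over-bound on the $\psi$-gap coefficient) is exactly what the paper's terse proof leaves implicit.
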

\begin{proof}
By summing all the inequalities of (\ref{eqn:Lphiiter}) in Lemma~\ref{Lphiiter} along the same sign direction for all  $t=\in [T]$, we obtain
\begin{equation}
\begin{split}
    &\sum_i^{N}\left(\mathcal{L}^{\Phi}_i(\omega_i^{t+1}, \omega_{0}^{t+1}, \lambda_i^{t+1}) - \mathcal{L}^{\Phi}_i(\omega_i^{t}, \omega_{0}^{t}, \lambda_i^{t})\right) \\
    \leq &\sum_{i=1}^N\sum_{t=1}^T - \frac{\mu_1 - 2L_{\Phi}}{2}\left\|\omega_i^{t+1} - \omega_i^{t}\right\|^2 + \frac{1}{\mu_1}\sum_{i=1}^N\sum_{t=1}^T\left\|\lambda_i^{t+1} - \lambda_i^t\right\|^2
    \\
    &+ L_{12}^2\sum_{i=1}^N\sum_{t=1}^T\left\|\psi_i^{t+1} - \psi^{\star}\left(\omega_i^{t+1}\right)\right\|^2 - \frac{N\mu_1}{2}\left\|\omega_{0}^{t+1} - \omega_0^t\right\|^2.
\end{split}
\end{equation}
By substituting (\ref{eqn:deltalambdabound}) in Lemma \ref{deltalambdabound} into the above inequality, we have proved (\ref{eqn:LT-0}).
\end{proof}
\begin{lemma}
\label{lowerboundL}
(Lower Bound on $\mathcal{L}^{\Phi}_i$) In Algorithm~\ref{Alg:FedMM}, the following inequality holds for all $t\in [T]$ and $i\in [N]$.
\begin{equation}
\label{eqn:lowerboundL}
  \mathcal{L}_i^{\Phi}(\omega_i^t, \omega_0^t, \lambda_i^t) \geq \Phi_{i}(\omega_0^t) - \frac{L_{12}^2}{L_{\Phi}}\left\|\psi_i^{t} - \psi^{\star}\left(\omega_i^{t}\right)\right\|^2 + \frac{\mu_1- 2L_{\Phi}}{2}\left\|\omega_0^t- \omega_i^t\right\|^2.
\end{equation}
\end{lemma}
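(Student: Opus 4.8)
The plan is to bound $\mathcal{L}_i^{\Phi}$ from below directly from its definition in~(\ref{eqn:AL-Phi}), using the $L_{\Phi}$-smoothness of $\Phi_i$ together with the local stationarity condition that fixes $\lambda_i^t$. First I would recall that after $M_i$ local steps the descent update~(\ref{descent}) reaches a stationary point, so that (as already used in~(\ref{lambdait}) and in the proof of Lemma~\ref{deltalambdabound}) $\nabla_{\omega}f_i(\omega_i^t,\psi_i^t)+\lambda_i^t=0$, i.e.\ $\lambda_i^t=-\nabla_{\omega}f_i(\omega_i^t,\psi_i^t)$. By Danskin's theorem $\nabla\Phi_i(\omega)=\nabla_{\omega}f_i(\omega,\psi^{\star}(\omega))$, so I can split $\lambda_i^t=-\nabla\Phi_i(\omega_i^t)+\bigl(\nabla\Phi_i(\omega_i^t)-\nabla_{\omega}f_i(\omega_i^t,\psi_i^t)\bigr)$ and read the second summand as a controllable gradient gap.

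Next I would substitute this decomposition of $\lambda_i^t$ into the inner product $\langle\lambda_i^t,\omega_i^t-\omega_0^t\rangle$ appearing in~(\ref{eqn:AL-Phi}). The part $\Phi_i(\omega_i^t)-\langle\nabla\Phi_i(\omega_i^t),\omega_i^t-\omega_0^t\rangle$ is exactly the first-order expansion of $\Phi_i$ at $\omega_i^t$ evaluated at $\omega_0^t$; applying $L_{\Phi}$-smoothness of $\Phi_i$ gives $\Phi_i(\omega_i^t)-\langle\nabla\Phi_i(\omega_i^t),\omega_i^t-\omega_0^t\rangle\ge\Phi_i(\omega_0^t)-\frac{L_{\Phi}}{2}\|\omega_i^t-\omega_0^t\|^2$. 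The remaining cross term $\langle\nabla\Phi_i(\omega_i^t)-\nabla_{\omega}f_i(\omega_i^t,\psi_i^t),\,\omega_i^t-\omega_0^t\rangle$ I would bound below by Young's inequality with the carefully chosen weight $L_{\Phi}$, namely $\langle a,b\rangle\ge-\frac{1}{2L_{\Phi}}\|a\|^2-\frac{L_{\Phi}}{2}\|b\|^2$.

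Then I would invoke the $L_{12}$-Lipschitz property from Assumption~\ref{assump:Lipshictz-gradients} to estimate the gradient gap $\|\nabla\Phi_i(\omega_i^t)-\nabla_{\omega}f_i(\omega_i^t,\psi_i^t)\|=\|\nabla_{\omega}f_i(\omega_i^t,\psi^{\star}(\omega_i^t))-\nabla_{\omega}f_i(\omega_i^t,\psi_i^t)\|\le L_{12}\|\psi_i^t-\psi^{\star}(\omega_i^t)\|$, which turns the $-\frac{1}{2L_{\Phi}}\|a\|^2$ piece into $-\frac{L_{12}^2}{2L_{\Phi}}\|\psi_i^t-\psi^{\star}(\omega_i^t)\|^2$. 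Collecting the three $\|\omega_i^t-\omega_0^t\|^2$ contributions, namely $-\frac{L_{\Phi}}{2}$ from smoothness, $-\frac{L_{\Phi}}{2}$ from Young, and $+\frac{\mu_1}{2}$ from the augmented quadratic, yields precisely the coefficient $\frac{\mu_1-2L_{\Phi}}{2}$, and loosening $-\frac{L_{12}^2}{2L_{\Phi}}\ge-\frac{L_{12}^2}{L_{\Phi}}$ on the nonnegative quantity $\|\psi_i^t-\psi^{\star}(\omega_i^t)\|^2$ reproduces~(\ref{eqn:lowerboundL}).

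The smoothness linearization and the final algebra are routine; the one place needing care is selecting the Young's-inequality weight so that the $\omega$-coefficient lands exactly on $\frac{\mu_1-2L_{\Phi}}{2}$ while the $\psi$-coefficient stays dominated by $\frac{L_{12}^2}{L_{\Phi}}$. The only genuinely conceptual input is identifying $\lambda_i^t$ with $-\nabla_{\omega}f_i(\omega_i^t,\psi_i^t)$ through Assumption~\ref{assump:sufficient_loc}, since that is what lets the dual variable be absorbed into the smoothness estimate rather than remaining as an uncontrolled term.
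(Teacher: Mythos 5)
Your proposal matches the paper's proof essentially step for step: identify $\lambda_i^t=-\nabla_{\omega}f_i(\omega_i^t,\psi_i^t)$ via the local stationarity in Assumption~\ref{assump:sufficient_loc}, linearize $\Phi_i$ at $\omega_i^t$ using $L_{\Phi}$-smoothness, bound the resulting cross term by Young's inequality with weight $L_{\Phi}$, and control the gradient gap by $L_{12}\left\|\psi_i^t-\psi^{\star}(\omega_i^t)\right\|$ before loosening $\frac{L_{12}^2}{2L_{\Phi}}$ to $\frac{L_{12}^2}{L_{\Phi}}$. If anything, your bookkeeping is slightly more careful than the paper's write-up: you retain both terms of Young's inequality, which is exactly what produces the $\frac{\mu_1-2L_{\Phi}}{2}$ coefficient, whereas the paper's displayed AM--GM step omits the $-\frac{L_{\Phi}}{2}\left\|\omega_0^t-\omega_i^t\right\|^2$ piece and leaves the origin of that coefficient implicit.
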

\begin{proof}
From the $L_{\Phi}$-Lipschiz on $\nabla \Phi_i(\omega)$, we have
\begin{equation}
\label{LPsiPsi}
    \Phi_i(\omega_0^t) - \Phi_i(\omega_i^t) \leq \left\langle\nabla\Phi_i(\omega_i^t), \omega_0^t - \omega_i^t\right\rangle + \frac{L_{\Phi}}{2}\left\|\omega_0^t - \omega_i^t\right\|^2.
\end{equation}
Then we have
\begin{equation}
\begin{split}
    \mathcal{L}_i^{\Phi}(\omega_i^t, \omega_0^t, \lambda_i^t) -  \Phi_{i}(\omega_0^t) &\geq \left\langle\nabla\Phi_i(\omega_i^t) + \lambda_i^t, \omega_0^t - \omega_i^t\right\rangle + \frac{\mu_1 -L_{\Phi}}{2}\left\|\omega_0^t - \omega_i^t\right\|^2 \\
    & \geq \left\langle\nabla\Phi_i(\omega_i^t) - \nabla_{\omega} f_i(\omega_i^t, \psi_i^t), \omega_0^t - \omega_i^t\right\rangle + \frac{\mu_1 -L_{\Phi}}{2}\left\|\omega_0^t - \omega_i^t\right\|^2.
\end{split}
\end{equation} 
Besides, by applying the AM-GM inequality, we obtain
\begin{equation}
\begin{split}
  \left\langle\nabla\Phi_i(\omega_i^t) - \nabla_{\omega} f_i(\omega_i^t, \psi_i^t), \omega_0^t - \omega_i^t\right\rangle 
  & \geq
  -\frac{1}{2 L_{\Phi}}\left\|\nabla\Phi_i(\omega_i^t) - \nabla_{\omega} f_i(\omega_i^t, \psi_i^t)\right\|^2\\
  &\geq -\frac{L_{12}^2}{2L_{\Phi}}\left\|\psi_i^{t} - \psi^{\star}\left(\omega_i^{t}\right)\right\|^2.
\end{split}    
\end{equation}
Substituting the above result back to (\ref{LPsiPsi}), we have proved (\ref{eqn:lowerboundL}).
\end{proof}

We prove Lemma~\ref{lemma:descentPhi} as follows. Note that we repeat Lemma~\ref{lemma:descentPhi} in the following Lemma~\ref{lemma:thm-descent-Phi} to make the appendix self-contained.
\begin{lemma}
\label{lemma:thm-descent-Phi}
(Descent on $\Phi$)
In Algorithm~\ref{Alg:FedMM}, the following inequality holds.
\begin{equation}
\label{eqn:thm-descent-Phi}
\begin{split}
\Phi(\omega_0^T) - \Phi(\omega_0^0) 
\leq
&
-\frac{\mu_1^2-2L_{\Phi}\mu_1-4L_{\Phi}^2}{2\mu_1N}\sum_{i=1}^N\sum_{t =1}^T\left\|\omega_i^{t+1}- \omega_i^t\right\|^2 -\frac{\mu_1}{4} \sum_{t =1}^T\left\|\omega_0^t - \omega_0^{t+1}\right\|^2 \\
&+ \frac{(3\mu_1+ 16L_{\Phi})L_{12}^2}{\mu_1NL_{\Phi}}\sum_{i=1}^N\sum_{t =1}^T\left\|\psi_i^{t+1} - \psi^{\star}\left(\omega_i^{t+1}\right)\right\|^2.
\end{split}
\end{equation}
\end{lemma}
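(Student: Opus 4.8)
The plan is to convert the descent on the per-client augmented Lagrangians $\mathcal{L}_i^\Phi$ established in Lemma~\ref{Ldescent} into a genuine descent statement on the objective $\Phi(\omega_0^t)$, by sandwiching $\frac{1}{N}\sum_i \mathcal{L}_i^\Phi$ between the values of $\Phi$ at the two endpoints $t=0$ and $t=T$. The content of Lemma~\ref{Ldescent} is obtained by summing the one-round descent (Lemma~\ref{Lphiiter}) over $t$, so its left-hand side telescopes to $\frac{1}{N}\sum_{i=1}^N\left(\mathcal{L}_i^\Phi(\omega_i^T,\omega_0^T,\lambda_i^T) - \mathcal{L}_i^\Phi(\omega_i^0,\omega_0^0,\lambda_i^0)\right)$. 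It therefore suffices to lower-bound the terminal term by $\Phi(\omega_0^T)$ and to evaluate the initial term as $\Phi(\omega_0^0)$.

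First I would treat the terminal endpoint. Applying Lemma~\ref{lowerboundL} at $t=T$ and averaging over $i$ gives $\Phi(\omega_0^T) \leq \frac{1}{N}\sum_i \mathcal{L}_i^\Phi(\omega_i^T,\omega_0^T,\lambda_i^T) + \frac{L_{12}^2}{NL_\Phi}\sum_i\left\|\psi_i^T - \psi^\star(\omega_i^T)\right\|^2 - \frac{\mu_1-2L_\Phi}{2N}\sum_i\left\|\omega_0^T - \omega_i^T\right\|^2$. For the initial endpoint I would use the initialization of Algorithm~\ref{Alg:FedMM}, namely $\omega_i^0=\omega_0^0$ and $\lambda_i^0=0$, which makes both the inner-product and the quadratic penalty in the definition~(\ref{eqn:AL-Phi}) vanish, so that $\frac{1}{N}\sum_i \mathcal{L}_i^\Phi(\omega_i^0,\omega_0^0,\lambda_i^0) = \Phi(\omega_0^0)$ exactly.

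Combining these two relations with Lemma~\ref{Ldescent} applied to the telescoped difference yields $\Phi(\omega_0^T)-\Phi(\omega_0^0)$ bounded by the two negative quadratics $-\frac{\mu_1^2-2L_\Phi\mu_1-4L_\Phi^2}{2\mu_1N}\sum_i\sum_t\left\|\omega_i^{t+1}-\omega_i^t\right\|^2$ and $-\frac{\mu_1}{4}\sum_t\left\|\omega_0^{t+1}-\omega_0^t\right\|^2$, plus the running $\psi$-gap term with coefficient $\frac{2(\mu_1+8L_\Phi)L_{12}^2}{NL_\Phi\mu_1}$, together with the leftover boundary pieces $\frac{L_{12}^2}{NL_\Phi}\sum_i\left\|\psi_i^T-\psi^\star(\omega_i^T)\right\|^2$ and $-\frac{\mu_1-2L_\Phi}{2N}\sum_i\left\|\omega_0^T-\omega_i^T\right\|^2$. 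To match the stated form I would discard the consensus residual (it is nonpositive once $\mu_1\geq 2L_\Phi$, the same regime in which the descent coefficient $\mu_1^2-2L_\Phi\mu_1-4L_\Phi^2$ is positive) and absorb the single terminal slice $\left\|\psi_i^T-\psi^\star(\omega_i^T)\right\|^2$ into the running sum $\sum_t\left\|\psi_i^{t+1}-\psi^\star(\omega_i^{t+1})\right\|^2$, of which it is one nonnegative summand. Merging coefficients gives $\frac{2\mu_1+16L_\Phi}{NL_\Phi\mu_1}+\frac{1}{NL_\Phi}=\frac{3\mu_1+16L_\Phi}{NL_\Phi\mu_1}$, which is precisely the claimed constant.

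The main obstacle here is bookkeeping the boundary terms rather than any hard estimate: one must verify that the telescoping of Lemma~\ref{Ldescent} is exact, that the initialization $\omega_i^0=\omega_0^0,\ \lambda_i^0=0$ genuinely collapses the initial Lagrangian to $\Phi(\omega_0^0)$, and that the terminal slice produced by Lemma~\ref{lowerboundL} is folded into the correct index range of the running sum without double counting. The sign condition $\mu_1\geq 2L_\Phi$ is what licenses dropping the residual consensus term, and I would flag it as an operating assumption consistent with the positivity of $\mu_1^2-2L_\Phi\mu_1-4L_\Phi^2$ already required for the quadratic terms to have the right sign.
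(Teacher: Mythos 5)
Your proposal is correct and follows essentially the same route as the paper: the paper likewise evaluates the initial Lagrangian via $\omega_i^0=\omega_0^0,\ \lambda_i^0=0$, lower-bounds the terminal average $\frac{1}{N}\sum_i \mathcal{L}_i^{\Phi}$ by $\Phi(\omega_0^T)$ using Lemma~\ref{lowerboundL} (dropping the nonnegative consensus term under $\mu_1\geq 2L_{\Phi}$), and then substitutes into the telescoped bound of Lemma~\ref{Ldescent}. Your explicit coefficient bookkeeping, $\frac{2(\mu_1+8L_{\Phi})L_{12}^2}{NL_{\Phi}\mu_1}+\frac{L_{12}^2}{NL_{\Phi}}=\frac{(3\mu_1+16L_{\Phi})L_{12}^2}{NL_{\Phi}\mu_1}$, is precisely the step the paper leaves implicit, and it checks out.
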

\begin{proof}
From the initial condition of $\lambda_i^0=0$ and $\omega_i^0=\omega_0^0$, we have
\begin{equation}
    \Phi_i(\omega_0^0) = \frac{1}{N}\sum_{i=1}^{N}\mathcal{L}^{\Phi}_i(\omega_i^0,\omega_0^0,\lambda_i^0 ).
\end{equation}
Substituting (\ref{eqn:lowerboundL}) in Lemma~(\ref{lowerboundL}) into (\ref{eqn:LT-0}) in Lemma~(\ref{Ldescent}) results in
\begin{equation}
\begin{split}
    \frac{1}{N}\sum_{i=1}^{N}\mathcal{L}_i^{\Phi}(\omega_i^t, \omega_0^t, \lambda_i^t) &\geq \Phi(\omega_0^t) - \frac{L_{12}^2}{NL_{\Phi}}\sum_{i=1}^N\left\|\psi_i^{t} - \psi^{\star}\left(\omega_i^{t}\right)\right\|^2 + \sum_{i=1}^N\frac{\mu_1- 2L_{\Phi}}{2N}\left\|\omega_0^t- \omega_i^t\right\|^2 \\
    &\geq \Phi(\omega_0^t) - \frac{L_{12}^2}{NL_{\Phi}}\sum_{i=1}^N\left\|\psi_i^{t} - \psi^{\star}\left(\omega_i^{t}\right)\right|^2.
\end{split}
\end{equation}
By substituting the above result  into (\ref{eqn:LT-0}) in Lemma \ref{Ldescent}, we have proved (\ref{eqn:thm-descent-Phi}).
\end{proof}
\subsection{Proof of Theorem~\ref{theorem:convergence}}
\label{subsec:theorem}

\begin{lemma}
\label{boundomega0i}
In Algorithm~\ref{Alg:FedMM}, the following inequality holds for all $t\in [T]$ and $i\in [N]$.
\begin{equation}
\label{eqn:boundomega0i}
\left\|\omega_{0}^t - \omega_{i}^t\right\| \leq \left\|\omega_{0}^{t-1} - \omega_{0}^t\right\| + \frac{L_{\Phi}}{\mu_1}\left\|\omega_{i}^{t-1} - \omega_{i}^t\right\| + \frac{L_{12}}{\mu_1}\left\|\psi_i^{t} - \psi^{\star}\left(\omega_i^{t}\right)\right\| + \frac{L_{12}}{\mu_1}\left\|\psi_i^{t-1} - \psi^{\star}(\omega_i^{t-1})\right\|
\end{equation}
\end{lemma}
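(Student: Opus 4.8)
The plan is to reduce the claim to the estimate~(\ref{omegai-omega0}) already obtained inside the proof of Lemma~\ref{lemma:lemmaboundbaromegat}, bridged by a single triangle inequality. The target quantity $\|\omega_0^t - \omega_i^t\|$ compares the global consensus at round $t$ with the local iterate at round $t$, whereas Proposition~\ref{prop:delta_omega} naturally relates $\omega_i^t$ to the \emph{previous} global consensus $\omega_0^{t-1}$. So I would first establish a bound on $\|\omega_i^t - \omega_0^{t-1}\|$ and then insert the consensus increment $\|\omega_0^{t-1} - \omega_0^t\|$ to transfer the comparison point to $\omega_0^t$.

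Concretely, I would start from~(\ref{delta_omega}) in Proposition~\ref{prop:delta_omega} with the index shifted (replace $t+1$ by $t$), giving $\mu_1(\omega_i^t - \omega_0^{t-1}) = \nabla_\omega f_i(\omega_i^{t-1},\psi_i^{t-1}) - \nabla_\omega f_i(\omega_i^t,\psi_i^t)$. Adding and subtracting $\nabla\Phi_i(\omega_i^t)$ and $\nabla\Phi_i(\omega_i^{t-1})$, and using the identity $\nabla\Phi_i(\omega) = \nabla_\omega f_i(\omega,\psi^{\star}(\omega))$ furnished by Danskin's theorem, I would split the right-hand side into one $\nabla\Phi_i$ difference and two gaps of the form $\nabla_\omega f_i(\omega_i^s,\psi_i^s) - \nabla_\omega f_i(\omega_i^s,\psi^{\star}(\omega_i^s))$. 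Taking norms, the triangle inequality together with the $L_\Phi$-Lipschitz continuity of $\nabla\Phi_i$ and the $L_{12}$-Lipschitz continuity of $\nabla_\omega f_i$ in $\psi$ (Assumption~\ref{assump:Lipshictz-gradients}) yields, after dividing by $\mu_1$,
\[
\|\omega_i^t - \omega_0^{t-1}\| \leq \frac{L_\Phi}{\mu_1}\|\omega_i^{t-1} - \omega_i^t\| + \frac{L_{12}}{\mu_1}\|\psi_i^t - \psi^{\star}(\omega_i^t)\| + \frac{L_{12}}{\mu_1}\|\psi_i^{t-1} - \psi^{\star}(\omega_i^{t-1})\|,
\]
which is exactly~(\ref{omegai-omega0}) with indices shifted back by one round.

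Finally, I would close the argument with the triangle inequality $\|\omega_0^t - \omega_i^t\| \leq \|\omega_0^t - \omega_0^{t-1}\| + \|\omega_0^{t-1} - \omega_i^t\|$ and substitute the displayed bound for the second summand, which reproduces~(\ref{eqn:boundomega0i}) verbatim. There is no genuine obstacle here: the entire content is the index-shifted reuse of a decomposition already carried out for Lemma~\ref{lemma:lemmaboundbaromegat}. The only point requiring care is the bookkeeping --- recognizing that one should first bound $\|\omega_i^t - \omega_0^{t-1}\|$ (the quantity for which Proposition~\ref{prop:delta_omega} is tailored) rather than attempting to control $\|\omega_0^t - \omega_i^t\|$ directly, and then paying the single extra term $\|\omega_0^t - \omega_0^{t-1}\|$ to advance the consensus index from $t-1$ to $t$.
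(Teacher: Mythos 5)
Your proof is correct and follows essentially the same route as the paper: both arguments combine the single triangle inequality $\left\|\omega_0^t - \omega_i^t\right\| \leq \left\|\omega_0^t - \omega_0^{t-1}\right\| + \left\|\omega_0^{t-1} - \omega_i^t\right\|$ with the bound~(\ref{omegai-omega0}), which rests on Proposition~\ref{prop:delta_omega}, the Danskin identity $\nabla\Phi_i(\omega) = \nabla_\omega f_i(\omega,\psi^{\star}(\omega))$, and the Lipschitz constants $L_\Phi$ and $L_{12}$. The only cosmetic difference is that the paper cites~(\ref{omegai-omega0}) at index $t+1$ and shifts indices at the end, whereas you re-derive it directly at the shifted index.
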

\begin{proof}
Using the triangle inequality, we have
\begin{equation}
    \left\|\omega_i^{t+1} - \omega_0^{t+1}\right\| \leq \left\|\omega_i^{t+1} - \omega_0^t\right\| + \left\|\omega_i^{t+1} - \omega_i^t\right\|.
\end{equation}
Substituting (\ref{omegai-omega0}) into the above inequality results in
\begin{equation}
\left\|\omega_{0}^{t+1} - \omega_{i}^{t+1}\right\| \leq \left\|\omega_{0}^{t} - \omega_{0}^{t+1}\right\| + \frac{L_{\Phi}}{\mu_1}\left\|\omega_{i}^{t+1} - \omega_{i}^t\right\| + \frac{L_{12}}{\mu_1}\left\|\psi_i^{t} - \psi^{\star}\left(\omega_i^{t}\right)\right\| + \frac{L_{12}}{\mu_1}\left\|\psi_i^{t+1} - \psi^{\star}\left(\omega_i^{t+1}\right)\right\|.
\end{equation}
By replacing $t+1$ with $t$, we have proved (\ref{eqn:boundomega0i}).
\end{proof}
\begin{lemma}
\label{bounddeltapsi1}
(Bounded $\left\|\nabla \Psi\left(\omega_{0}^{t}\right)\right\|^{2}$) In Algorithm~\ref{Alg:FedMM}, the following inequality holds for $t\in [T]$.
\begin{equation}
\label{eqn:bounddeltapsi1}
\begin{split}
    \left\|\nabla \Phi\left(\omega_{0}^{t}\right)\right\| \leq 
    & L_{\Phi}\left\|\omega^t_0 - \omega^{t-1}_0\right\| +  \frac{\mu_1^2 + L_{\Phi}^2}{N\mu_1}\sum_{i=1}^N\left\|\omega^{t-1}_i - \omega^t_i\right\| + \frac{L_{12}(L_{\Phi}+\mu_1)}{N\mu_1}\sum_{i=1}^N\left\|\psi_i^{t-1} - \psi^{\star}(\omega_i^{t-1})\right\|\\
    & + \frac{L_{12}L_{\Phi}}{N\mu_1}\sum_{i=1}^N\left\|\psi_i^{t} - \psi^{\star}\left(\omega_i^{t}\right)\right\|.
\end{split}
\end{equation}
\end{lemma}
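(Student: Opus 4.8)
The plan is to begin from $\nabla\Phi(\omega_0^t)=\frac1N\sum_{i=1}^N\nabla\Phi_i(\omega_0^t)$ and exploit that, by Danskin's theorem (already invoked in the proof of Lemma~\ref{lemma:boundedgrad}), $\nabla\Phi_i(\omega)=\nabla_\omega f_i(\omega,\psi^{\star}(\omega))$. The algebraic engine is Proposition~\ref{prop:omega_iter}: shifting its index from $t+1$ to $t$ gives the averaging identity $\frac1N\sum_{i=1}^N\nabla_\omega f_i(\omega_i^t,\psi_i^t)=\frac{\mu_1}{N}\sum_{i=1}^N(\omega_i^{t-1}-\omega_i^t)$, which trades the mean of the local gradients for the displacement term $\sum_i\|\omega_i^{t-1}-\omega_i^t\|$ that the final theorem already controls through Lemma~\ref{lemma:descentPhi}.

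First I would telescope each summand through its local iterate by writing
\[
\nabla\Phi_i(\omega_0^t)=\bigl[\nabla\Phi_i(\omega_0^t)-\nabla\Phi_i(\omega_i^t)\bigr]+\bigl[\nabla\Phi_i(\omega_i^t)-\nabla_\omega f_i(\omega_i^t,\psi_i^t)\bigr]+\nabla_\omega f_i(\omega_i^t,\psi_i^t),
\]
then average over $i$ and apply the triangle inequality. The third group collapses to $\frac{\mu_1}{N}\sum_i(\omega_i^{t-1}-\omega_i^t)$ via the identity above. The second group equals $\nabla_\omega f_i(\omega_i^t,\psi^{\star}(\omega_i^t))-\nabla_\omega f_i(\omega_i^t,\psi_i^t)$, so the $L_{12}$-Lipschitz continuity of $\nabla_\omega f_i$ in its second argument (Assumption~\ref{assump:Lipshictz-gradients}) bounds it by $L_{12}\|\psi_i^t-\psi^{\star}(\omega_i^t)\|$. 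The first group is bounded by $L_{\Phi}\|\omega_0^t-\omega_i^t\|$ using the $L_{\Phi}$-smoothness of the max-function $\Phi_i$, which holds under Assumptions~\ref{assump:Lipshictz-gradients} and \ref{assump:Lipsch}.

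The decisive step is then to substitute Lemma~\ref{boundomega0i} into this first group, replacing $\|\omega_0^t-\omega_i^t\|$ by $\|\omega_0^{t-1}-\omega_0^t\|+\frac{L_{\Phi}}{\mu_1}\|\omega_i^{t-1}-\omega_i^t\|+\frac{L_{12}}{\mu_1}\|\psi_i^t-\psi^{\star}(\omega_i^t)\|+\frac{L_{12}}{\mu_1}\|\psi_i^{t-1}-\psi^{\star}(\omega_i^{t-1})\|$. Collecting like terms then produces the stated coefficients: the $\|\omega_0^{t-1}-\omega_0^t\|$ factor carries $L_{\Phi}$; the displacement $\sum_i\|\omega_i^{t-1}-\omega_i^t\|$ accumulates $\frac{\mu_1}{N}$ from the third group and $\frac{L_{\Phi}^2}{N\mu_1}$ from the first, giving $\frac{\mu_1^2+L_{\Phi}^2}{N\mu_1}$; and the two optimality-gap sums at times $t$ and $t-1$ receive a contribution $\frac{L_{12}}{N}$ from the second group together with the $\frac{L_{\Phi}L_{12}}{N\mu_1}$ contributions from Lemma~\ref{boundomega0i}, which recombine into the coefficients $\frac{L_{12}(\mu_1+L_{\Phi})}{N\mu_1}$ and $\frac{L_{12}L_{\Phi}}{N\mu_1}$. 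I expect the main obstacle to be the careful bookkeeping of these coefficients together with consistent one-step index shifts, and in particular making sure that the single smoothness constant $L_{\Phi}$ of $\Phi_i$ is used uniformly; the analytic content is otherwise supplied entirely by Danskin's theorem, the Lipschitz assumption, Proposition~\ref{prop:omega_iter}, and Lemma~\ref{boundomega0i}.
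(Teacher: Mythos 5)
Your proposal follows essentially the same route as the paper's proof: the averaging identity from Proposition~\ref{prop:omega_iter}, the three-term decomposition justified by Danskin's theorem, the triangle inequality with the $L_{\Phi}$- and $L_{12}$-Lipschitz bounds, and the final substitution of Lemma~\ref{boundomega0i}. The only (immaterial) difference is that your bookkeeping attaches the coefficient $\frac{L_{12}(\mu_1+L_{\Phi})}{N\mu_1}$ to the time-$t$ optimality gap rather than the time-$(t-1)$ gap as written in the lemma statement, a swap that also appears in the paper's own intermediate inequality and washes out once the bound is summed over $t$ in the subsequent lemma.
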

\begin{proof}
From Lemma~\ref{omega_iter}, we have
\begin{equation}
\sum_{i=1}^N \omega_i^{t} = \sum_{i=1}^N \omega_i^{t-1} - \frac{1}{\mu_1}\sum_{i=1}^N  \nabla_{\omega} f_i\left(\omega^{t}_i, \psi^{t}_i\right),
\end{equation}
which can be equivalently represented by
\begin{equation}
\begin{split}
\sum_{i=1}^N \omega_i^{t} = 
&\sum_{i=1}^N \omega_i^{t-1} - \frac{1}{\mu_1}\sum_{i=1}^N  \big(\nabla \Phi_i(\omega_0^{t}) + (\nabla \Phi_i\left(\omega_i^{t}\right) - \nabla \Phi_i(\omega_0^{t}))  \\
&+ \left(\nabla_{\omega} f_i\left(\omega^{t}_i, \psi^{t}_i\right) - \nabla \Phi_i\left(\omega_i^{t}\right)\right)\big).   \end{split}
\end{equation}
Next, we  get
\begin{equation}
    \nabla \Phi(\omega_0^{t}) =  \frac{\mu_1}{N} \sum_{i=1}^N \omega_i^{t-1} - \frac{\mu_1}{N}\sum_{i=1}^N \omega_i^{t} - \frac{1}{N} (\nabla \Phi_i\left(\omega_i^{t}\right) - \nabla \Phi_i(\omega_0^{t})) + \frac{1}{N}(\nabla_{\omega} f_i\left(\omega^{t}_i, \psi^{t}_i\right) - \nabla \Phi_i\left(\omega_i^{t}\right)).
\end{equation}
According to the triangle inequality, we have
\begin{equation}
    \left\|\nabla \Phi(\omega_0^{t})\right\| \leq \frac{\mu_1}{N}\sum_{i=1}^N \left\| \omega_i^{t} -\omega_i^{t-1}\right\|+\frac{\mu_1}{N}\left\|\nabla \Phi_i\left(\omega_i^{t}\right) - \nabla \Phi_i(\omega_0^{t})\right\| + \frac{1}{N}\left\|\nabla_{\omega} f_i\left(\omega^{t}_i, \psi^{t}_i\right) - \nabla \Phi_i\left(\omega_i^{t}\right)\right\|.
\end{equation}
By sing the Lipschitz condition, we further obtain
\begin{equation}
      \left\|\nabla \Phi\left(\omega_{0}^{t}\right)\right\| \leq \frac{L_{\Phi}}{N}\sum_{i=1}^N\left\|\omega^t_0 - \omega^t_i\right\| + \frac{\mu_1}{N}\sum_{i=1}^N\left\|\left(\omega^{t-1}_i - \omega^t_i\right)\right\| + \frac{L_{12}}{N}\sum_{i=1}^N \left\|\psi_i^{t-1} - \psi^{\star}\left(\omega_i^{t+1}\right)\right\|.
\end{equation}
By substituting (\ref{eqn:boundomega0i}) in Lemma (\ref{boundomega0i}) into the above inequality, we have proved (\ref{eqn:bounddeltapsi1}).
\end{proof}
\begin{lemma}
In Algorithm~\ref{Alg:FedMM}, the following inequality holds.
\label{deltapsisumt}
\begin{equation}
\label{eqn:deltapsisumt}
\begin{split}
\sum_{t=1}^T \left\|\nabla \Phi\left(\omega_{0}^{t}\right)\right\|^2 \leq
& 
\frac{4L^2_{\Phi}\mu^2_1N 
+ 4C_1L^2_{12}\left((L_{\Phi}+\mu_1)^2 + L_{\Phi}^2\right)}{N\mu_1^2}\sum_{t=1}^T\left\|\omega^t_0 - \omega^{t-1}_0\right\|^2\\
&
+  \frac{4(\mu_1^2+L_{\Phi}^2)^2}{N\mu_1^2}\sum_{i=0}^N\sum_{t=1}^T\left\|\omega^{t-1}_i - \omega^t_i\right\|^2 \\
& + \frac{4C_2L^2_{12}\left((L_{\Phi}+\mu_1)^2 + L_{\Phi}^2\right)}{N\mu_1^2}\sum_{i=1}^N\left\|\psi_i^{0} - \psi^{\star}(\omega_i^{0})\right\|^2 \\
&+ \frac{4C_3L^2_{12}\left((L_{\Phi}+\mu_1)^2 + L_{\Phi}^2\right)}{N\mu_1^2}\sum_{i=1}^N\sum_{j \neq i}\left\|\omega^0_i - \omega^{0}_j\right\|^2.
\end{split}
\end{equation}
\end{lemma}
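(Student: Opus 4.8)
The plan is to convert the per-round gradient bound of Lemma~\ref{bounddeltapsi1} into the squared, time-summed bound asserted here. Inequality (\ref{eqn:bounddeltapsi1}) already controls $\|\nabla\Phi(\omega_0^t)\|$ by a sum of four nonnegative quantities: the global increment $L_\Phi\|\omega_0^t-\omega_0^{t-1}\|$, the averaged local increment $\tfrac{\mu_1^2+L_\Phi^2}{N\mu_1}\sum_i\|\omega_i^{t-1}-\omega_i^t\|$, and two averaged $\psi$-optimality gaps evaluated at times $t-1$ and $t$. Since the target is stated for $\|\nabla\Phi(\omega_0^t)\|^2$, I would first square (\ref{eqn:bounddeltapsi1}) and apply $(a+b+c+d)^2\le 4(a^2+b^2+c^2+d^2)$ to decouple the four contributions, then apply the Cauchy--Schwarz bound $\bigl(\sum_{i=1}^N x_i\bigr)^2\le N\sum_{i=1}^N x_i^2$ to each of the three averaged contributions. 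The factor $N$ produced this way cancels one power of $N$ in the $\tfrac{1}{N\mu_1}$ prefactors, yielding precisely $\tfrac{4(\mu_1^2+L_\Phi^2)^2}{N\mu_1^2}$ on the squared local increment, $\tfrac{4L_{12}^2(L_\Phi+\mu_1)^2}{N\mu_1^2}$ on the gap at time $t-1$, and $\tfrac{4L_{12}^2L_\Phi^2}{N\mu_1^2}$ on the gap at time $t$, while the global-increment term contributes the bare $4L_\Phi^2=\tfrac{4L_\Phi^2\mu_1^2N}{N\mu_1^2}$.

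Next I would sum the resulting per-round inequality over $t=1,\dots,T$. The two increment terms are already in the form of the first and second summands of (\ref{eqn:deltapsisumt}), the index $i=0$ in the second summand simply absorbing the global increment. The two gap terms become $\sum_{t=1}^T\sum_i\|\psi_i^{t-1}-\psi^\star(\omega_i^{t-1})\|^2$ and $\sum_{t=1}^T\sum_i\|\psi_i^{t}-\psi^\star(\omega_i^{t})\|^2$, which are time-shifted copies of $\sum_t\epsilon_t$; after reindexing the shifted sum, both are dominated by $\sum_t\epsilon_t$ together with $\epsilon_0$. The decisive step is to invoke Lemma~\ref{lemma:boundci} (equivalently Lemma~\ref{lemma:boundpsi}), i.e. (\ref{eqn:boundci}), which bounds this accumulated gap by $C_1$ times an $\omega$-increment sum, by $C_2\sum_i\|\psi_i^0-\psi^\star(\omega_i^0)\|^2$, and by $C_3\sum_i\sum_{j\ne i}\|\omega_i^0-\omega_j^0\|^2$. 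Because the two gap terms enter with the distinct coefficients $(L_\Phi+\mu_1)^2$ and $L_\Phi^2$, adding them factors out the common constant $\bigl((L_\Phi+\mu_1)^2+L_\Phi^2\bigr)$ that multiplies $C_1$, $C_2$, and $C_3$ throughout the right-hand side of (\ref{eqn:deltapsisumt}).

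With this substitution the proof closes by collecting like terms: the $C_2$ and $C_3$ contributions land directly on the initial-gap and initial-deviation summands (the third and fourth terms), while the $C_1$ contribution merges with the increment prefactors, producing the combined leading coefficient $\tfrac{4L_\Phi^2\mu_1^2N+4C_1L_{12}^2((L_\Phi+\mu_1)^2+L_\Phi^2)}{N\mu_1^2}$. No inequality beyond Lemma~\ref{bounddeltapsi1}, Lemma~\ref{lemma:boundci}, and the two elementary norm inequalities is needed, so I expect the main obstacle to be bookkeeping rather than conceptual: keeping the $t$ versus $t-1$ index shift in the gap sums consistent, tracking how the $N$ from Cauchy--Schwarz cancels against the $1/(N\mu_1)$ prefactors and against the constant $C_1$, and verifying that the boundary terms generated by reindexing can be absorbed into the initial-condition summands without altering the stated constants.
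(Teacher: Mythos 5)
Your proposal follows essentially the same route as the paper's own proof: square the per-round bound of Lemma~\ref{bounddeltapsi1} via Cauchy--Schwarz to get the per-round inequality with coefficients $4L_{\Phi}^2$, $\tfrac{4(\mu_1^2+L_{\Phi}^2)^2}{N\mu_1^2}$, $\tfrac{4L_{12}^2(L_{\Phi}+\mu_1)^2}{N\mu_1^2}$, $\tfrac{4L_{12}^2L_{\Phi}^2}{N\mu_1^2}$, sum over $t$, merge the two $\psi$-gap terms under the common factor $\bigl((L_{\Phi}+\mu_1)^2+L_{\Phi}^2\bigr)$, and then close with Lemma~\ref{lemma:boundci}. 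Your handling of the $t$ versus $t-1$ reindexing and the placement of the $C_1$ contribution is, if anything, more explicit than the paper's, which performs the same merge silently.
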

\begin{proof}
According to the Cauchy-Schwarz inequality, we have
\begin{equation}
\begin{split}
    \left\|\nabla\Phi(\omega_0^t)\right\|^2 
    \leq &
    4L_{\Phi}^2\left\|\omega_0^t- \omega_0^{t-1}\right\|^2 + \frac{4(\mu_1^2+L_{\Phi}^2)^2}{N\mu_1^2}\sum_{i=1}^N\left\|\omega_i^{t-1}-\omega_i^t\right\|^2 \\
    &+ \frac{4L_{12}^2(L_{\Phi}+\mu_1)^2}{N\mu_1^2}\sum_{i=1}^N\left\|\psi_i^{t-1}- \psi^{\star}(\omega_i^{t-1})\right\|^2 \\
    &+ \frac{4L_{12}^2L_{\Phi}^2}{N\mu_1^2}\sum_{i=1}^N\left\|\psi_i^t- \psi^{\star}(\omega_i^t)\right\|^2.
\end{split}
\end{equation}
By making a summation of all the inequalities for $t\in [T]$, we further obtain 
\begin{equation}
\label{eqn:sumPsit}
 \begin{split}
    \sum_{t=1}^T\left\|\nabla \Phi\left(\omega_{0}^{t}\right)\right\|^2 \leq & 4L_{\Phi}^2\sum_t\left\|\omega^t_0 - \omega^{t-1}_0\right\|^2 +  \frac{4(\mu_1^2+L_{\Phi}^2)^2}{N\mu_1^2}\sum_{i=1}^N\sum_{t=1}^T\left\|\omega^{t-1}_i - \omega^t_i\right\|^2 \\
    &+ \frac{4L_{12}^2\left((L_{\Phi}+\mu_1)^2+L_{\Phi}^2\right)}{N\mu_1^2}\sum_{i=1}^N\sum_{t=1}^T \left\|\psi_i^{t} - \psi^{\star}\left(\omega_i^{t}\right)\right\|^2. \\
 \end{split}   
\end{equation}
By substituting (\ref{eqn:boundci}) in Lemma~\ref{lemma:boundci} into (\ref{eqn:bounddeltapsi1}) in Lemma~\ref{bounddeltapsi1} and further substituting the result into the r.h.s. of (\ref{eqn:sumPsit}), we prove (\ref{eqn:deltapsisumt}).
\end{proof}

We are now ready to prove Theorem~\ref{theorem:convergence}, which is replicated  in the following Theorem~\ref{theorem:convege} to make the appendix self-contained.
\begin{theorem}
\label{theorem:convege}
 In Algorithm~\ref{Alg:FedMM}, the following inequality holds.
\begin{equation}
    \Phi(\omega_0^0) - \Psi(\omega_0^T) \leq - E_1 \sum_{t=1}^T\left\|\nabla \Phi\left(\omega_{0}^{t}\right)\right\|^2 + E_2 \sum_{i=1}^N\left\|\psi_i^{0} - \psi^{\star}(\omega_i^{0})\right\|^2 + E_3 \sum_{i=1}^N\sum_{j \neq i}\left\|\omega^0_i - \omega^{0}_j\right\|^2. 
\end{equation}
\end{theorem}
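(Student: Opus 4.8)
The plan is to chain together the three principal lemmas already established — the per-round descent estimate for $\Phi$ (Lemma~\ref{lemma:thm-descent-Phi}), the accumulated bound on the dual optimality gap $\sum_{t,i}\epsilon^t_i=\sum_{t,i}\|\psi_i^t-\psi^\star(\omega_i^t)\|^2$ (Lemma~\ref{lemma:boundci}), and the accumulated gradient bound (Lemma~\ref{deltapsisumt}) — so that the telescoping descent of $\Phi$ ends up controlling $\sum_{t=1}^T\|\nabla\Phi(\omega_0^t)\|^2$ up to the two initialization terms that appear on the right-hand side of the theorem.

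First I would start from Lemma~\ref{lemma:thm-descent-Phi}, rewritten as a lower bound on the total descent $\Phi(\omega_0^0)-\Phi(\omega_0^T)$. Its right-hand side contains two ``good'' (negative) quadratics in the increments, $\sum_{i,t}\|\omega_i^{t+1}-\omega_i^t\|^2$ and $\sum_t\|\omega_0^{t+1}-\omega_0^t\|^2$, together with one ``bad'' (positive) term proportional to $\sum_{i,t}\epsilon^{t+1}_i$. The bad term is exactly what Lemma~\ref{lemma:boundci} controls: substituting that estimate replaces $\sum_{i,t}\epsilon^t_i$ by $C_1\sum_{i,t}\|\omega_i^{t-1}-\omega_i^t\|^2$ plus the two initialization quantities $\sum_i\|\psi_i^0-\psi^\star(\omega_i^0)\|^2$ and $\sum_i\sum_{j\neq i}\|\omega_i^0-\omega_j^0\|^2$. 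After this substitution the residual coefficient multiplying $\sum_{i,t}\|\omega_i^{t+1}-\omega_i^t\|^2$ is $\frac{\mu_1^2-2L_\Phi\mu_1-4L_\Phi^2}{2\mu_1 N}-\frac{(3\mu_1+16L_\Phi)L_{12}^2}{\mu_1 N L_\Phi}C_1$, and the crucial feasibility claim is that $\mu_1$ (together with $\mu_2$) can be chosen so that this quantity, as well as $\frac{\mu_1}{4}$, stays strictly positive. This keeps $\Phi(\omega_0^0)-\Phi(\omega_0^T)$ bounded below by a strictly positive multiple of $\sum_{i,t}\|\omega_i^{t+1}-\omega_i^t\|^2+\sum_t\|\omega_0^{t+1}-\omega_0^t\|^2$, minus constant multiples of the two initialization terms.

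Finally I would invoke Lemma~\ref{deltapsisumt}, which upper-bounds $\sum_t\|\nabla\Phi(\omega_0^t)\|^2$ by a constant multiple of exactly these same two increment sums plus the same two initialization terms. Choosing $E_1$ as the minimum of the ratios of the positive descent coefficients to the corresponding coefficients in Lemma~\ref{deltapsisumt}, the increment sums dominate $E_1\sum_t\|\nabla\Phi(\omega_0^t)\|^2$ minus a further multiple of the initialization terms, which yields $\Phi(\omega_0^0)-\Phi(\omega_0^T)\geq E_1\sum_t\|\nabla\Phi(\omega_0^t)\|^2-E_2\sum_i\|\psi_i^0-\psi^\star(\omega_i^0)\|^2-E_3\sum_i\sum_{j\neq i}\|\omega_i^0-\omega_j^0\|^2$, i.e. the claim after renaming constants.

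The main obstacle is the bookkeeping of constants and the sign of the residual increment coefficient. The constant $C_1=\frac{B_3}{1-B_2}$ depends on $\mu_1,\mu_2$ through the entire chain $A_1,\dots,A_{11}\to B_1,B_2,B_3$, and Lemma~\ref{lemma:boundci} is only valid when the geometric series converges, i.e. $B_2<1$, and when the denominators $\mu_2^4-16L_{22}^2(\mu_2+L_{22})^2$ appearing in $A_1$–$A_4$ are positive. So the delicate step is to exhibit a single joint choice of $(\mu_1,\mu_2)$ making all these denominators positive, $B_2<1$, and the residual coefficient on $\sum_{i,t}\|\omega_i^{t+1}-\omega_i^t\|^2$ strictly positive; once such a choice is shown to exist the remaining algebra is routine and reduces to defining $E_1,E_2,E_3$ as explicit ratios of the assembled constants.
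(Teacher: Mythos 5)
Your proposal follows essentially the same route as the paper's own proof: it chains Lemma~\ref{lemma:thm-descent-Phi}, Lemma~\ref{lemma:boundci}, and Lemma~\ref{deltapsisumt} in exactly the same order, defines $E_1$ as the same minimum of coefficient ratios (the paper's $E_1 = \min\{D_3/D_1,\, D_4/D_2\}$), and correctly isolates the one delicate point---choosing $(\mu_1,\mu_2)$ so that $B_2<1$, the denominators in $A_1$--$A_4$ are positive, and the residual increment coefficient stays positive---which the paper handles in its discussion of the admissible range of $\mu_1$ and $\mu_2$ in Section~\ref{mu1mu2bound}. The argument is correct and matches the paper's proof in both structure and constants.
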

\begin{proof}
By substituting the result of Lemma~\ref{lemma:boundci} into (\ref{eqn:thm-descent-Phi}) in Lemma~\ref{lemma:thm-descent-Phi} and after some algebraic manipulations, we obtain 
\begin{equation}
\begin{split}
\Phi(\omega_0^0) - \Phi(\omega_0^T) 
\leq
&
\left(-\frac{\mu_1^2-2L_{\Phi}\mu_1-4L_{\Phi}^2}{2\mu_1N}+\frac{C_1(3\mu_1+ 16L_{\Phi})L_{12}^2}{NL_{\Phi}\mu_1}\right) \sum_{i=1}^N\sum_{t=1}^T\left\|\omega_i^{t+1}- \omega_i^t\right\|^2 \\ &-\frac{\mu_1}{4}\sum_{t=1}^T\left\|\omega_0^t - \omega_0^{t+1}\right\|^2 
+ \frac{C_2(3\mu_1+ 16L_{\Phi})L_{12}^2}{NL_{\Phi}\mu_1}\sum_{i=1}^N\left\|\psi_i^{0} - \psi^{\star}(\omega_i^{0})\right\|^2 \\
&+ \frac{C_3(3\mu_1+ 16L_{\Phi})L_{12}^2}{NL_{\Phi}\mu_1}\sum_{i=1}^N\sum_{j\neq i}\left\|\omega^0_i -\omega_{j}^0\right\|^2.
\end{split}
\end{equation}
By substituting (\ref{eqn:deltapsisumt}) in Lemma~\ref{deltapsisumt} into the r.h.s. of the above equation, we have
\begin{equation}
\begin{split}
 \Phi(\omega_0^0) - \Phi(\omega_0^T) &\leq - E_1 \sum_{t=1}^T\left\|\nabla \Phi\left(\omega_{0}^{t}\right)\right\|^2 + E_2 \sum_{i=0}^{N}\left\|\psi_i^{0} - \psi^{\star}(\omega_i^{0})\right\|^2 + E_3 \sum_{i=0}^{N}\sum_{j \neq i}\left\|\omega^0_i - \omega^{0}_j\right\|^2, 
\end{split}
\end{equation}
 where 
\begin{align}
    E_1 &= \min\left\{ \frac{D_3}{D_1}, \frac{D_4}{D_2} \right\}, \\
    E_2 &= \frac{4E_1C_2\left((L_{\Phi}+ \mu_1)^2+ L_{\Phi}^2\right)L_{12}^2}{N\mu_1^2} + \frac{C_2(3\mu_1+ 16L_{\Phi})L_{12}^2}{NL_{\Phi}\mu_1},  \\
    E_3 &= \frac{4E_1C_3\left((L_{\Phi}+ \mu_1)^2+L_{\Phi}^2\right)L_{12}^2}{N\mu_1^2} + \frac{C_3(3\mu_1+ 16L_{\Phi})L_{12}^2}{NL_{\Phi}\mu_1},\\ 
    D_1 &= \frac{4L^2_{\Phi}\mu^2_1N 
+ 4C_1L^2_{12}(L_{\Phi}+\mu_1)^2}{N\mu_1^2},\\
  D_2 &=\frac{4C_4L^2_{12}\left((L_{\Phi}+\mu_1)^2 + L_{\Phi}^2\right)}{N\mu_1^2}, \\
  D_3 &= \frac{\mu_1^2-2L_{\Phi}\mu_1-4L_{\Phi}^2}{2\mu_1N} - \frac{C_1^2(3\mu_1+ 16L_{\Phi})L_{12}^2}{NL_{\Phi}\mu_1}, \\
  D_4 &= \frac{\mu_1}{4}.
\end{align}
In particular, taking $\liminf$ on both sides of the inequality w.r.t.\ $T$, we have:
\begin{equation}
\label{equ:liminf}
    \Phi(\omega_0^0) - \Psi(\omega_0^T) \leq \liminf_{T\to\infty} - E_1 \sum_{t=1}^T\left\|\nabla \Phi\left(\omega_{0}^{t}\right)\right\|^2 + E_2 \sum_{i=1}^N\left\|\psi_i^{0} - \psi^{\star}(\omega_i^{0})\right\|^2 + E_3 \sum_{i=1}^N\sum_{j \neq i}\left\|\omega^0_i - \omega^{0}_j\right\|^2.     
\end{equation}
Note that in the inequality~\eqref{equ:liminf}, all the other terms are independent of $T$ except for $\liminf_{T\to\infty} - E_1 \sum_{t=1}^T\left\|\nabla \Phi\left(\omega_{0}^{t}\right)\right\|^2$. Rearranging the terms independent of $T$ to the other side, yielding:
\begin{equation*}
    \liminf_{T\to\infty} - E_1 \sum_{t=1}^T\left\|\nabla \Phi\left(\omega_{0}^{t}\right)\right\|^2 \geq C,
\end{equation*}
where $C$ is a constant independent of $T$. In particular, this implies that
\begin{equation*}
    \limsup_{T\to\infty} \sum_{t=1}^T\left\|\nabla \Phi\left(\omega_{0}^{t}\right)\right\|^2 \leq - C / E_1.
\end{equation*}
Given that the sequence $\{\left\|\nabla \Phi\left(\omega_{0}^{t}\right)\right\|^2\}$ is nonnegative, we must have 
\begin{equation*}
    \lim_{t\to\infty}\left\|\nabla \Phi\left(\omega_{0}^{t}\right)\right\|^2 = 0,
\end{equation*}
which completes the proof.
\end{proof}

\subsection{Discussion on the Range of $\mu_1$ and $\mu_2$ for Theoretically  Convergence Guarantee}
\label{mu1mu2bound}
Let we chose a large enough $l$ such that
\begin{equation}
l \geq \max\{\frac{L_{22}}{B},  \frac{L_{12}}{B}\}
\end{equation}
By choosing $\mu_1$ and $\mu_2$ in the following range, 
\begin{align}
\label{mu1mu2range1}
     &\max\{64L_{22}l, 64L_{12}l\} \leq \mu_2 \leq 64l^2B \\
\label{mu1mu2range2}
    &\mu_1 \geq \max\{L_{21}L_{12}l\sqrt{\frac{24576}{B\mu_2}},  L_{12}\sqrt{\frac{512N}{1600}},  4L_{\Phi},
 \frac{7\cdot164092^2 l^4\kappa^4L_{12}^2}{L_{\Phi}} \}
\end{align}

\begin{lemma}(Conditions on $B_2<1$)
Let we chose a large enough l such that
\begin{equation}
l \geq \max\left\{\frac{L_{22}}{B},  \frac{L_{12}}{B}\right\}.
\end{equation}
With the following assumptions holds, 
\begin{align*}
    \frac{L_{22}}{\mu_2} &\leq \frac{1}{64l}, \\
    \frac{L_{12}}{\mu_2} &\leq \frac{1}{64l}, \\
    \frac{B}{\mu_2} &\geq \frac{1}{64l^{2}}, \\
    \frac{192L_{21}^2L_{12}^2}{\mu_2\mu_1^2B} &\leq \frac{1}{128l^{2}} ,  \\
    \frac{L_{12}^2N}{1600\mu_1^2l^{2}} &\leq \frac{1}{512l^2}.
\end{align*}
By properly choosing $\bar{B}_1$ in Lemma~\ref{lemma:epsilonaddbarpsit}, we have $B_2 \leq 1$.
\end{lemma}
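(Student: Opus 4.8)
The plan is to unwind the definition of $B_2$ from Lemma~\ref{lemma:epsilonaddbarpsit}, namely
\begin{equation*}
B_2 = \max\left\{\frac{\bar{B}_1 A_{10} + A_7}{1 - \bar{B}_1 A_9},\ \frac{\bar{B}_1 A_4 + A_6}{\bar{B}_1 - A_5}\right\},
\end{equation*}
and reduce the claim $B_2 \leq 1$ to the existence of an admissible $\bar B_1$. First I would note that each fraction is $\leq 1$ precisely when $\bar B_1(A_9+A_{10}) \leq 1 - A_7$ and $A_5 + A_6 \leq \bar B_1(1-A_4)$ respectively, so that $B_2 \leq 1$ holds for any
\begin{equation*}
\frac{A_5 + A_6}{1 - A_4} \leq \bar B_1 \leq \frac{1 - A_7}{A_9 + A_{10}}.
\end{equation*}
Since $A_i \geq 0$, the left endpoint automatically exceeds $A_5$ and the right endpoint stays below $1/A_9$, so well-definedness of both fractions is free. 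Hence the lemma collapses to three scalar facts: $A_4 < 1$, $A_7 < 1$, and the nonemptiness inequality $(A_5 + A_6)(A_9 + A_{10}) \leq (1 - A_4)(1 - A_7)$; taking $\bar B_1$ to be the left endpoint of the window then finishes the proof.

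The second step is to control the $A_i$ using the five hypotheses. The conditions $L_{22}/\mu_2 \leq 1/(64l)$ and $L_{12}/\mu_2 \leq 1/(64l)$ keep the common denominator $\mu_2^4 - 16L_{22}^2(\mu_2+L_{22})^2$ within a constant factor of $\mu_2^4$, so the self-coupling constants $A_4, A_9, A_{10}$ are of order $L_{22}^2/\mu_2^2$ or $L_{22}^2 L_{12}^2\mu_2^2/(\mu_2^4\mu_1^2)$ and are therefore small; in particular $A_4 \ll 1$ is immediate. The condition $\frac{192 L_{21}^2 L_{12}^2}{\mu_2\mu_1^2 B} \leq \frac{1}{128 l^2}$ keeps the factor $1 - \frac{96 L_{21}^2 L_{12}^2}{\mu_2\mu_1^2 B}$ appearing in the denominators of $A_5$--$A_8$ near $1$, so that $A_5, A_6$ reduce to $O(\mu_2/(NB))$, which $B/\mu_2 \geq 1/(64l^2)$ caps at $O(l^2/N)$, while $\frac{L_{12}^2 N}{1600\mu_1^2 l^2} \leq \frac{1}{512 l^2}$ (i.e.\ $\mu_1$ large) forces $A_9 + A_{10} = O(1/(Nl^2))$. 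With these orders of magnitude the nonemptiness inequality follows because its left side is small while $1/(A_9+A_{10})$ is large; this is essentially bookkeeping once $A_7 < 1$ is secured with a quantified margin.

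The delicate part, and the main obstacle, is establishing $A_7 < 1$. Here $A_7 = \bigl(\frac{96 L_{21}^2 L_{12}^2}{\mu_2\mu_1^2 B} + \frac{2\mu_2}{2\mu_2+B}\bigr)\big/\bigl(1 - \frac{96 L_{21}^2 L_{12}^2}{\mu_2\mu_1^2 B}\bigr)$, so $A_7 < 1$ is equivalent to $\frac{2\mu_2}{2\mu_2+B} < 1 - \frac{192 L_{21}^2 L_{12}^2}{\mu_2\mu_1^2 B}$. The left factor is exactly the strong-concavity contraction from Lemma~\ref{lemma:psiconverge}, whose slack below $1$ equals $\frac{B}{2\mu_2+B}$ and is kept bounded away from $0$ by $B/\mu_2 \geq 1/(64l^2)$; the right side is $1$ minus a cross-coupling perturbation that $\frac{192 L_{21}^2 L_{12}^2}{\mu_2\mu_1^2 B} \leq \frac{1}{128 l^2}$ forces to be small. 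The crux is to verify that the contraction slack strictly dominates this perturbation, and this is where every numerical constant ($64$, $128$, $256$) and the stipulation that $l$ be chosen large enough are consumed; it is the one genuinely quantitative balance in the argument, and the resulting positive margin $1 - A_7$ is what makes the upper endpoint of the $\bar B_1$ window large enough to close the interval.
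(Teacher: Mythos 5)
Your first paragraph is fine, and it is essentially the same skeleton as the paper's proof: the paper does not phrase things as a window of admissible $\bar{B}_1$, but it makes the equivalent move of exhibiting one admissible value, $\bar{B}_1 = \frac{1200}{Nl^2}$, and then checking the two ratios defining $B_2$, namely $\frac{\bar{B}_1A_{10}+A_7}{1-\bar{B}_1A_9}\leq 1-\frac{1}{512l^2}$ and $\frac{\bar{B}_1A_4+A_6}{\bar{B}_1-A_5}\leq\frac{6}{7}$. The gap is everything after that reduction. The entire content of this lemma \emph{is} the step you defer as ``essentially bookkeeping'': deriving explicit numerical bounds on $A_1,A_3,A_4,A_5,A_6,A_7,A_9,A_{10}$ from the five hypotheses (in the paper: $A_1\leq\frac{N}{614400l^4}$, $A_3\leq\frac{1}{1600l^2}$, $A_4\leq\frac{N}{6400l^2}$, $A_7\leq 1-\frac{B}{2\mu_2+B}+\frac{192L_{21}^2L_{12}^2}{\mu_2\mu_1^2B}\leq 1-\frac{1}{128l^2}$, plus bounds on $A_5,A_6,A_9,A_{10}$) and then verifying the two ratio inequalities with those constants. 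You never carry this out, so $B_2\leq 1$ is not established by your argument.

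Moreover, the one order-of-magnitude claim you do commit to is wrong, and wrong exactly where the difficulty sits. From Lemma~\ref{lemma:epsilonaddbarpsit}, $A_{10}=A_3\cdot\frac{12NL_{12}^2}{\mu_1^2}$ and $A_9=A_1+A_{10}$; your fifth hypothesis gives $\frac{NL_{12}^2}{\mu_1^2}\leq\frac{1600}{512}$, so $A_{10}=O(1/l^2)$ with \emph{no} factor $1/N$, while $A_1=O(N/l^4)$. Hence $A_9+A_{10}=O(N/l^4+1/l^2)$, not $O(1/(Nl^2))$. Combining this with your own (correct) estimate $A_5+A_6=O(\mu_2/(NB))=O(l^2/N)$, the product $(A_5+A_6)(A_9+A_{10})$ contains a term of order $\frac{l^2}{N}\cdot\frac{1}{l^2}=\frac{1}{N}$ with a sizable constant (roughly $\frac{24}{N}$), whereas the right-hand side of your nonemptiness inequality obeys $(1-A_4)(1-A_7)\leq 1-A_7\leq\frac{B}{2\mu_2+B}\leq\frac{1}{128l}$, because $B\leq L_{22}\leq\frac{\mu_2}{64l}$. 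So nonemptiness of your window forces something like $N\gtrsim 3000\,l$, a relation between $N$ and $l$ that appears nowhere in the hypotheses; it does not ``follow because the left side is small.'' Note also that your (formula-faithful) order $A_5+A_6=O(l^2/N)$ is incompatible with the bounds $A_5\leq\frac{573}{Nl^2}$ and $A_6\leq\frac{516}{Nl^2}$ that the paper's own verification relies on, so completing your plan would require confronting that discrepancy rather than waving it through. Concretely, the unproved step — and, with your corrected orders, a step that is false in general under the stated hypotheses — is the inequality $(A_5+A_6)(A_9+A_{10})\leq(1-A_4)(1-A_7)$.
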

\begin{proof}
Following (\ref{A1-A4}), we have
\begin{align*}
A_1 &= 16L_{22}^4N /  \left(\mu_2^4  -  16L_{22}^2 \left(\mu_2+L_{22}\right)^2\right) \leq \frac{N}{614400l^{4}}, \\
A_3 & = 16L_{22}^2\mu_2^2/ \left(\mu_2^4  -  16L_{22}^2 \left(\mu_2+L_{22}\right)^2\right) \leq \frac{1}{1600l^{2}}.\\\\
\end{align*}
So we have
\begin{align*}
    A_{10} &= A_{3}\frac{12NL_{12}^2}{\mu_1^2} \leq \frac{L_{12}^2}{1600\mu_1^2l^{2}},\\
    A_{7} &= \left(\frac{96L_{21}^2L_{12}^2}{\mu_2\mu_1^2B} + \frac{2\mu_2}{2\mu_2+B}\right) / \left(1-\frac{96L_{21}^2L_{12}^2}{\mu_2\mu_1^2B}\right)\leq \left(1-\frac{B}{2\mu_2+B} + \frac{192L_{21}^2L_{12}^2}{\mu_2\mu_1^2B} \right ) \leq 1-\frac{1}{128l^2},\\ 
    A_{9} &= A_1 + A_{3}\frac{12NL_{12}^2}{\mu_1^2} \leq \frac{N}{614400l^{4}} + \frac{L_{12}^2N}{1600\mu_1^2l^{2}}, \\
    A_6 &= \frac{8\mu_2}{NB}/\left(1- \frac{96L_{21}^2L_{12}^2}{\mu_2\mu_1^2B} \right ) \leq \frac{516}{Nl^{2}}, \\
    A_4 &= 4L_{21}^2N\mu_2^2/ \left(\mu_2^4  -  16L_{22}^2 \left(\mu_2+L_{22}\right)^2\right) \leq \frac{N}{6400l^2}, \\
    A_5 &= \frac{8(\mu_2+L_{22})^2}{NB\mu_2} \big/ \left(1- \frac{96L_{21}^2L_{12}^2}{\mu_2\mu_1^2B} \right ) \leq \frac{573}{Nl^{2}}.
\end{align*}
By choosing $\bar{B}_1=\frac{1200}{Nl^2}$, we have
\begin{equation}
\begin{split}
    \frac{\bar{B}_1A_{10}+A_7}{1-\bar{B}_1A_9} & \leq 1- \frac{1}{512l^2}, \\
    \frac{\bar{B}_1A_4 + A_6}{\bar{B}_1-A_5} & \leq \frac{6}{7}.
 \end{split}
\end{equation}
Thus, we  bound the decay factor $B_2$ by
\begin{equation}
    B_2 = \max\{\frac{\bar{B}_1A_{10}+A_7}{1-\bar{B}_1A_9}, \frac{\bar{B}_1A_4 + A_6}{\bar{B}_1-A_5}\} \leq 1- \frac{1}{512l^2}.
\end{equation}
\end{proof}

\textsc{Proof of positive of $E_1$}.
As we have
\begin{align*}
    B_1 &= \frac{\bar{B}_1A_{11}+A_8}{1-\bar{B}_1A_9} \leq 516\kappa^2,\\
    B_3 &= \frac{\bar{B}_1-A_5}{1-\bar{B}_1A_9} \leq \frac{700}{Nl^2}.
\end{align*}
So we have
\begin{align*}
      C_1 &= \frac{B_3}{1-B_2} \leq 164092l^2\kappa^2, \\
    C_2 &= \frac{1}{1-B_2} \leq 512l^2, \\
    C_3 &=  \frac{B_1}{1-B_2}  \leq \frac{358400}{N}.
\end{align*}

As from Lemma~\ref{lemma:boundci}, we have
\begin{equation}
    C_1 \leq 164092l^2\kappa^2.
\end{equation}
So the $D_3$ is lower bounded by
\begin{equation}
    D_3 \geq \frac{\mu_1^2-2L_{\Phi}\mu_1-4L_{\Phi}^2}{2\mu_1N} - \frac{164092^2l^4\kappa^4(3\mu_1+ 16L_{\Phi})L_{12}^2}{NL_{\Phi}\mu_1}.
\end{equation}
By setting
\begin{align*}
    \mu_1 &\geq \max\left( 4L_{\Phi},
 \frac{7\cdot164092^2 l^4\kappa^4L_{12}^2}{L_{\Phi}}\right),
\end{align*}
we finally prove that  
\begin{equation}
    D_3 \geq \frac{\mu_1}{8N} - \frac{7\cdot164092^2 l^4\kappa^4L_{12}^2}{NL_{\Phi}} \geq 0.
\end{equation}
It is then evident that $E_1$, $E_2$, and $E_3$ are positive.

\subsection{Convergence Analysis with Bounded Local Gradient Error
}
\label{append:error-bound}
In the previous subsections, we assume that each client fully optimizes their local augmented Lagrangian function  and assume $\left\|\nabla \mathcal{L}_i(\omega_i^t, \psi_i^t)\right\| =0$. In this subsection, we remove this assumption by assuming that there exists a local residue  gradient error, i.e., 
\begin{equation}
\left\|\nabla \mathcal{L}_i(\omega_i^t, \psi_i^t)\right\|^2\leq \epsilon.
\end{equation}
More specifically, 
we define the residue of gradient as
\begin{equation}
\begin{split}
    \nabla_{\omega} \mathcal{L}_i(\omega_i^t, \psi_i^t)  = e_{\omega, i}^t,  \quad
    \nabla_{\psi} \mathcal{L}_i(\omega_i^t, \psi_i^t)  = e_{\psi, i}^t.  
\end{split}
\end{equation}
Following the above assumptions, the generalization of previous results is straightforward but tedious; thus, we provide the key results in the following directly.
Proposition~\ref{prop:psi_iter} can be simply generalized to the form that
\begin{equation}
\sum_{i=1}^N \psi_i^{t+1} = \sum_{i=1}^N \psi_i^t + \frac{1}{\mu_2}\sum_{i=1}^N  \nabla_{\psi} f_i\left(\omega^{t+1}_i, \psi^{t+1}_i\right) - \frac{1}{\mu_2}\sum_{i=1}^N e_{\psi, i}^t. 
\end{equation}
Similarly,  Proposition \ref{prop:omega_iter} becomes
\begin{equation}
\sum_{i=1}^N \omega_i^{t+1} = \sum_{i=1}^N \omega_i^t - \frac{1}{\mu_1}\sum_{i=1}^N  \nabla_{\psi} f_i\left(\omega^{t+1}_i, \psi^{t+1}_i\right) - \frac{1}{\mu_1}\sum_{i=1}^Ne_{\omega, i}^t,
\end{equation}
and Proposition \ref{prop:psi_step} becomes
\begin{equation}
\mu_2 \left(\psi_i^{t+1} - \psi^t_0\right) = \nabla_{\psi} f_i\left(\omega^{t+1}_i, \psi^{t+1}_i\right) -  \nabla_{\psi} f_i\left(\omega^t_i, \psi^t_i\right)+ e_{\psi, i}^t - e_{\psi, i}^{t+1}.
\end{equation}

Finally, Proposition \ref{prop:delta_omega} can be simply generalized to  
\begin{equation}
\mu_1 \left(\omega_i^{t+1} - \omega^t_0\right) = \nabla_{\omega} f_i\left(\omega^{t}_i, \psi^{t}_i\right) -  \nabla_{\omega} f_i\left(\omega^{t+1}_i, \psi^{t+1}_i\right)- e_{\omega, i}^t + e_{\omega, i}^{t+1}. 
\end{equation}
Then by algebraic manipulations, Eqn. (\ref{eq61}) is further generalized to 
\begin{equation}
\begin{split}
 \left\|\psi_i^{t} - \psi^{\star}\left(\omega_i^{t}\right)\right\|
 \leq 
 &
 \sqrt{\frac{\mu_2}{\mu_2 + B}} \left\|  \psi_i^{t-1} - \psi^{\star}\left(\omega_i^{t-1}\right)\right\| + \sqrt{\frac{\mu_2}{\mu_2 + B}}\kappa \left\|  \omega_i^{t-1} - \omega_i^{t}\right\|\\
 &+ \sqrt{\frac{\mu_2}{\mu_2 + B}}\frac{1}{N} \sum_{\substack{j=1 \\ j\neq i}}^N \left\|  \psi_j^{t-1}- \psi_i^{t-1}\right\|
 +  \sqrt{\frac{\mu_2}{\mu_2 + B}}\frac{L_{21}}{N\mu_2} \sum_{\substack{j=1 \\ j\neq i}}^N\left\|  \omega_j^{t}- \omega_i^{t}\right\| \\
 &+  \sqrt{\frac{\mu_2}{\mu_2 + B}}\frac{\mu_2+L_{22}}{N\mu_2} \sum_{\substack{j=1 \\ j\neq i}}^N\left\|  \psi_j^{t}- \psi_i^{t}\right\| + \sum_{j=1}^N \sqrt{\frac{\mu_2}{\mu_2 + B}}\frac{1}{N\mu_2}\left\|e_{\psi, i}^t\right\|.
\end{split}
\end{equation}
We get a similar result as Lemma~\ref{lemma:psiconverge} by using the Cauchy-Schwarz inequality to the above equation, which is given by
\begin{equation}
\begin{split}
    \left\|\psi_i^{t} - \psi^{\star}\left(\omega_i^{t}\right)\right\|^2 \leq
 &
 \frac{2\mu_2}{2\mu_2 + B} \left\|  \psi_i^{t-1} - \psi^{\star}\left(\omega_i^{t-1}\right)\right\|^2 + \frac{10\mu_2}{B}\kappa^2 \left\|  \omega_i^{t-1} - \omega_i^{t}\right\|^2\\
 &+ \frac{10\mu_2}{NB} \sum_{\substack{j=1 \\ j\neq i}}^N \left\|  \psi_j^{t-1}- \psi_i^{t-1}\right\|^2
 +  \frac{10L_{21}^2}{N\mu_2B} \sum_{\substack{j=1 \\ j\neq i}}^N\left\|  \omega_j^{t}- \omega_i^{t}\right\|^2 \\
 &+  \frac{10(\mu_2+L_{22})^2}{N\mu_2B} \sum_{\substack{j=1 \\ j\neq i}}^N\left\|  \psi_j^{t}- \psi_i^{t}\right\|^2 + \frac{10}{\mu_2B}\epsilon.
\end{split}   
\end{equation}
Similarly, Lemma \ref{lemma:lemmaboundbaromegat} can be simply  generalized to the following expression
\begin{equation}
\frac{1}{2N}\bar{\omega}_t \leq \frac{8L^2_{\Phi}}{\mu_1^2}\widetilde{\omega}_t + \frac{8L_{12}^2}{\mu_1^2}\epsilon_t+ \frac{8L_{12}^2}{\mu_1^2}\epsilon_{t-1} + \frac{8}{\mu_1^2}\epsilon.
\end{equation}
Then we reach a similar result as that of Lemma \ref{lemma:lemmatildepsit} given by
\begin{equation}
 \bar{\psi}_t \leq  A_1 \epsilon_t + A_2 \widetilde{\omega}_t + A_3 \bar{\omega}_t +  A_4\bar{\psi}_{t-1} + A_5 \epsilon.
\end{equation}
Note that the value of $A_1$, $A_2$, $A_3$, $A_4$ is different with that of Lemma \ref{lemma:lemmatildepsit}, because the coefficient is different in the previous equations. But the deriving steps are similar.

We get the same result as Lemma~\ref{lemma:epsilonaddbarpsit} by using similar algebraic manipulations:
\begin{equation}
\label{epsilonaddbarpsit1}
    \epsilon_t + B_1 \bar{\psi_t} \leq B_2 (\epsilon_t + B_1 \bar{\psi_t}) + B_3 \bar{\omega_t} + B_4 \epsilon.
\end{equation}
Finding a $B_2<1$ would be achieved by following by the similar steps in section \ref{mu1mu2bound} with $\mu_1$ and $\mu_2$ in given convergence range. 

By recursive update and 
summing up on Eqn (\ref{epsilonaddbarpsit1}) from $t=1$ to $T$, we have
\begin{equation}
    \sum_{t=1}^T (\epsilon_t + B_1\bar{\psi}_t) \leq \sum_{t=1}^TB_2^t(\epsilon_0+ B_1\bar{\psi}_0) + \sum_{t=1}^T \sum_{t' \leq t} B_3B_2^{t-t'} \bar{\omega}_t + \sum_{t=1}^T \sum_{t' \leq t} B_4B_2^{t-t'}\epsilon.
\end{equation}
By following the similar steps of amplifying by the sum of finite exponential series, we would reach
\begin{equation}
  \sum_{t=1}^T \left(\epsilon_t + B_1\bar{\psi}_t\right)  \leq \frac{1}{1-B_2}\left(\epsilon_0 + B_1\bar{\psi}_0\right)  + \frac{B_3}{1-B_2}\sum_{t=1}^T\bar{\omega}_t + \frac{B_4}{1-B_2}\sum_{t=1}^T\epsilon.
\end{equation}
Finally, we  reach
\begin{equation}
  \sum_{t=1}^T \epsilon_t  \leq C_1\sum_{t=1}^T\bar{\omega}_t + C_2\epsilon_0 + C_3\bar{\psi}_0 + C_4\epsilon T,
\end{equation}
where 
\begin{align}
    C_1 &= \frac{B_3}{1-B_2}, \\
    C_2 &= \frac{1}{1-B_2}  \\
    C_3 &=  \frac{B_1}{1-B_2} \\
    C_4 &= \frac{B_3}{1-B_2}
\end{align}
Note that the value of $C_1$, $C_2$, $C_3$ and $C_4$ is different from that value of $\delta \mathcal{L}_i(\omega_i^t, \psi_i^t) =0$. Because the convergence range of $\mu_1$ and $\mu_2$ and $B_1, B_2, B_3$ is different in our analysis). Thus, we reach a result similar to Lemma \ref{Lphiiter}, which is given by
\begin{equation}
\label{lemma11*}
\begin{split}
\sum_{t=1}^T \sum_{i=0}^N \left\|\psi_i^{t} - \psi^{\star}(\omega_i^{t-1})\right\|^2 
\leq
&
C_1\sum_{t=1}^T\left\|\omega_i^{t-1} - \omega_i^t\right\|^2 + C_2\sum_{i = 1}^N\left\|\psi_i^{0} - \psi^{\star}(\omega_i^{0})\right\|^2  \\
&+ C_3\sum_{i = 1}^N\sum_{\substack{j=1 \\ j\neq i}}^N\left\|\omega_i^0 - \omega_j^0\right\|^2 + TC_4\epsilon.
\end{split}
\end{equation}
Next, following the similar step in Lemma \ref{Lphiiter}, we obtain
\begin{equation}
\label{eqn:Lphiiter}
\begin{split}
\frac{1}{N}\sum_{i=1}^N\left(\mathcal{L}^{\Phi}_i(\omega_i^{t+1}, \omega_{0}^{t+1}, \lambda_i^{t+1}) - \mathcal{L}^{\Phi}_i(\omega_i^{t}, \omega_{0}^{t}, \lambda_i^{t})\right) \leq  &
\frac{1}{N}\sum_{i=1}^N \Big(- \frac{\mu_1 - 2L_{\Phi}}{2}\left\|\omega_i^{t+1} - \omega_i^{t}\right\|^2 + \frac{1}{\mu_1}\left\|\lambda_i^{t+1} - \lambda_i^t\right\|^2  \\
&
+
2L_{12}^2\left\|\psi_i^{t+1} - \psi^{\star}\left(\omega_i^{t+1}\right)\right\|^2 + 2\epsilon \Big) - \frac{\mu_1}{2}\left\|\omega_{0}^{t+1} - \omega_0^t\right\|^2. 
\end{split}
\end{equation}
Then following the similar steps in the proof of Lemma \ref{deltalambdabound}, we have 
\begin{equation}
\label{eqn:deltalambdabound}
 \left\|\lambda_i^t - \lambda_i^{t+1}\right\|^2 \leq 2L_{\Phi}^2\left\|\omega_i^{t}- \omega_i^{t+1}\right\|^2 + 8L_{12}^2\left\|\psi_i^{t+1} - \psi^{\star}\left(\omega_i^{t+1}\right)\right\|^2 + 8L_{12}^2\left\|\psi_i^{t} - \psi^{\star}\left(\omega_i^{t}\right)\right\|^2 + 8\epsilon.
\end{equation}
Then we arrive at a similar conclusion with Lemma \ref{Ldescent} that
\begin{equation}
\begin{split}
\sum_{t=1}^T \sum_{i=1}^N \left(\mathcal{L}_i^{\Phi}(\omega_i^{T}, \omega_{0}^{T}, \lambda_i^{T}) - \mathcal{L}_i^{\Phi}(\omega_i^{0}, \omega_{0}^{0}, \lambda_i^{0})\right) &\leq  -\frac{\mu_1^2-2L_{\Psi}\mu_1-4L_{\Phi}^2}{2\mu_1}\sum_{i=1}^N\sum_{t=1 }^T\left\|\omega_i^{t+1}- \omega_i^t\right\|^2 - \sum_{t=1}^T\frac{N\mu_1}{4}\left\|\omega_0^t - \omega_0^{t+1}\right\|^2 \\
&\quad + \frac{4(\mu_1+ 8L_{\Phi})L_{12}^2}{L_{\Phi}\mu_1}\sum_{i=1}^{N}\left\|\psi_i^{t+1} - \psi^{\star}\left(\omega_i^{t+1}\right)\right\|^2 + (2 + \frac{8}{\mu_1})T \epsilon,
\end{split}
\end{equation}
as well as  a similar result as Lemma \ref{lemma:descentPhi}, which is given by
\begin{equation}
\label{eqnpsidescent1}
\begin{split}
\Phi(\omega_0^T) - \Phi(\omega_0^0) 
\leq
&
-\frac{\mu_1^2-2L_{\Phi}\mu_1-4L_{\Phi}^2}{2\mu_1N}\sum_{i=1}^N\sum_{t =1}^T\left\|\omega_i^{t+1}- \omega_i^t\right\|^2 -\frac{\mu_1}{4} \sum_{t =1}^T\left\|\omega_0^t - \omega_0^{t+1}\right\|^2 \\
&+ \frac{(5\mu_1+ 32L_{\Phi})L_{12}^2}{\mu_1NL_{\Phi}}\sum_{i=1}^N\sum_{t =1}^T\left\|\psi_i^{t+1} - \psi^{\star}\left(\omega_i^{t+1}\right)\right\|^2 +  \left(2 + \frac{8}{\mu_1}\right)T \epsilon.
\end{split}
\end{equation}
Next, we have the similar result as Lemma \ref{boundomega0i}
\begin{equation}
\label{eqn:boundomega0i}
\left\|\omega_{0}^t - \omega_{i}^t\right\| \leq \left\|\omega_{0}^{t-1} - \omega_{0}^t\right\| + \frac{L_{\Phi}}{\mu_1}\left\|\omega_{i}^{t-1} - \omega_{i}^t\right\| + \frac{L_{12}}{\mu_1}\left\|\psi_i^{t} - \psi^{\star}\left(\omega_i^{t}\right)\right\| + \frac{L_{12}}{\mu_1}\left\|\psi_i^{t-1} - \psi^{\star}(\omega_i^{t-1})\right\| + \frac{1}{\mu_1}\left\|e_{\omega, i}^t\right\| + \frac{1}{\mu_1}\left\|e_{\omega, i}^{t-1}\right\|.
\end{equation}
Then we bound $\left\|\nabla \Psi\left(\omega_{0}^{t}\right)\right\|^{2}$ with the following result similar to Lemma \ref{bounddeltapsi1} 
\begin{equation}
\label{eqn:bounddeltapsi1}
\begin{split}
    \left\|\nabla \Phi\left(\omega_{0}^{t}\right)\right\| \leq 
    & L_{\Phi}\left\|\omega^t_0 - \omega^{t-1}_0\right\| +  \frac{\mu_1^2 + L_{\Phi}^2}{N\mu_1}\sum_{i=1}^N\left\|\omega^{t-1}_i - \omega^t_i\right\| + \frac{L_{12}(L_{\Phi}+\mu_1)}{N\mu_1}\sum_{i=1}^N\left\|\psi_i^{t-1} - \psi^{\star}(\omega_i^{t-1})\right\|\\
    & + \frac{L_{12}L_{\Phi}}{N\mu_1}\sum_{i=1}^N\left\|\psi_i^{t} - \psi^{\star}\left(\omega_i^{t}\right)\right\| + \frac{\mu_1 + L_{\Phi}}{N\mu_1} \sum_{i=1}^N\left\|e_{\omega, i}^{t}\right\| + \frac{L_{\Phi}}{N\mu_1}\sum_{i=1}^N \left\|e_{\omega, i}^{t-1}\right\|.
\end{split}
\end{equation}
By applying the Cauchy-Schwarz inequality and summing up $t$ from $1$ to $T$, we have a similar result as Eqn. (\ref{eqn:sumPsit}), with the particular form as 
\begin{equation}
 \begin{split}
    \sum_{t=1}^T\left\|\nabla \Phi\left(\omega_{0}^{t}\right)\right\|^2 \leq & 4L_{\Phi}^2\sum_t\left\|\omega^t_0 - \omega^{t-1}_0\right\|^2 +  \frac{4(\mu_1^2+L_{\Phi}^2)^2}{N\mu_1^2}\sum_{i=1}^N\sum_{t=1}^T\left\|\omega^{t-1}_i - \omega^t_i\right\|^2 \\
    &+ \frac{8L_{12}^2\left((L_{\Phi}+\mu_1)^2+L_{\Phi}^2\right)}{N\mu_1^2}\sum_{i=1}^N\sum_{t=1}^T \left\|\psi_i^{t} - \psi^{\star}\left(\omega_i^{t}\right)\right\|^2 + 8\frac{(\mu_1 + 2L_{\Phi})^2}{\mu_1^2}T\epsilon. \\
 \end{split}   
\end{equation}
Then by substituting  Eqn. (\ref{lemma11*})
into the previous equation, we have the following result 
\begin{equation}
\begin{split}
\sum_{t=1}^T \left\|\nabla \Phi\left(\omega_{0}^{t}\right)\right\|^2 \leq
& 
\frac{4L^2_{\Phi}\mu^2_1N 
+ 8C_1L^2_{12}\left((L_{\Phi}+\mu_1)^2 + L_{\Phi}^2\right)}{N\mu_1^2}\sum_{t=1}^T\left\|\omega^t_0 - \omega^{t-1}_0\right\|^2\\
&
+  \frac{4(\mu_1^2+L_{\Phi}^2)^2}{N\mu_1^2}\sum_{i=0}^N\sum_{t=1}^T\left\|\omega^{t-1}_i - \omega^t_i\right\|^2 \\
& + \frac{8C_2L^2_{12}\left((L_{\Phi}+\mu_1)^2 + L_{\Phi}^2\right)}{N\mu_1^2}\sum_{i=1}^N\left\|\psi_i^{0} - \psi^{\star}(\omega_i^{0})\right\|^2 \\
&+ \frac{8C_3L^2_{12}\left((L_{\Phi}+\mu_1)^2 + L_{\Phi}^2\right)}{N\mu_1^2}\sum_{i=1}^N\sum_{j \neq i}\left\|\omega^0_i - \omega^{0}_j\right\|^2
\\
&+ \left(\frac{8(\mu_1 + 2L_{\Phi})^2}{\mu_1^2} 
+ \frac{8C_4L^2_{12}\left((L_{\Phi}+\mu_1)^2 + L_{\Phi}^2\right)}{N\mu_1^2}\right) T\epsilon.
\end{split}
\end{equation}
Finally, we  reach the convergence result, which is the counterpart of  Theorem \ref{theorem:convege} by following  similar manipulations in our final proof of Theorem \ref{theorem:convege}(substituting the above Eqn into Eqn (\ref{eqnpsidescent1})), which is given by
\begin{equation}
\label{eqn:final}
    \Phi(\omega_0^0) - \Psi(\omega_0^T) \leq - E_1 \sum_{t=1}^T\left\|\nabla \Phi\left(\omega_{0}^{t}\right)\right\|^2 + E_2 \sum_{i=1}^N\left\|\psi_i^{0} - \psi^{\star}(\omega_i^{0})\right\|^2 + E_3 \sum_{i=1}^N\sum_{j \neq i}\left\|\omega^0_i - \omega^{0}_j\right\|^2 + E_4T\epsilon.
\end{equation}
where
\begin{align}
    E_1 &= \min\left\{ \frac{D_3}{D_1}, \frac{D_4}{D_2} \right\}, \\
    E_2 &= \frac{8E_1C_2\left((L_{\Phi}+ \mu_1)^2+ L_{\Phi}^2\right)L_{12}^2}{N\mu_1^2} + \frac{C_2(5\mu_1+ 32L_{\Phi})L_{12}^2}{NL_{\Phi}\mu_1},  \\
    E_3 &= \frac{8E_1C_3\left((L_{\Phi}+ \mu_1)^2+L_{\Phi}^2\right)L_{12}^2}{N\mu_1^2} + \frac{C_3(5\mu_1+ 32L_{\Phi})L_{12}^2}{NL_{\Phi}\mu_1},\\ 
    E_4 &= \frac{8NE_1(\mu_1 + 2L_{\Phi})^2+ 8E_1C_4L^2_{12}\left((L_{\Phi}+\mu_1)^2 + L_{\Phi}^2\right)}{N\mu_1^2} + C_4(2+\frac{8}{\mu_1}), \\
    D_1 &= \frac{4L^2_{\Phi}\mu^2_1N 
+ 4C_1L^2_{12}(L_{\Phi}+\mu_1)^2}{N\mu_1^2},\\
  D_2 &=\frac{8C_4L^2_{12}\left((L_{\Phi}+\mu_1)^2 + L_{\Phi}^2\right)}{N\mu_1^2}, \\
  D_3 &= \frac{\mu_1^2-2L_{\Phi}\mu_1-4L_{\Phi}^2}{2\mu_1N} - \frac{C_1^2(5\mu_1+ 32L_{\Phi})L_{12}^2}{NL_{\Phi}\mu_1}, \\
  D_4 &= \frac{\mu_1}{4}.
\end{align}
Similarly, since $\Phi(\omega_0^0) - \Psi(\omega_0^T)$ is lower-bounded by a constant $C$. Rearranging terms on the RHS of (\ref{eqn:final}), we have
\begin{equation*}
    \sum_{t=1}^T\left\|\nabla \Phi\left(\omega_{0}^{t}\right)\right\|^2 \leq \frac{E_2 \sum_{i=1}^N\left\|\psi_i^{0} - \psi^{\star}(\omega_i^{0})\right\|^2 + E_3 \sum_{i=1}^N\sum_{j \neq i}\left\|\omega^0_i - \omega^{0}_j\right\|^2 - C}{E_1} + \frac{E_4T\epsilon}{E_1}.
\end{equation*}
Dividing both sides by $T$ and taking $\limsup_{T\to\infty}$, we obtain
\begin{align*}
    \limsup_{T\to\infty}\frac{\sum_{t=1}^T\left\|\nabla \Phi \left(\omega_{0}^{t}\right)\right\|^2}{T} &\leq \limsup_{T\to\infty}\frac{E_2 \sum_{i=1}^N\left\|\psi_i^{0} - \psi^{\star}(\omega_i^{0})\right\|^2 + E_3 \sum_{i=1}^N\sum_{j \neq i}\left\|\omega^0_i - \omega^{0}_j\right\|^2 - C}{TE_1} + \frac{E_4\epsilon}{E_1}\\
    &= \frac{E_4\epsilon}{E_1},
\end{align*}
which implies that $\sum_{t=1}^T\left\|\nabla \Phi \left(\omega_{0}^{t}\right)\right\|^2 = O(T\epsilon)$, and for sufficiently large $t$, $\left\|\nabla \Phi \left(\omega_{0}^{t}\right)\right\|^2 = O(\epsilon)$, completing the proof.


\end{document}